\renewcommand{\arraystretch}{1.3}
\newcommand{\mset}[1]{\left\{\kern-.5em\left\{ #1 \right\}\kern-.5em\right\}}
\newcommand{\mmset}[1]{\{\kern-.4em\{ #1 \}\kern-.4em\}}
\newcommand{\norm}[1]{\left\Vert#1\right\Vert}
\newcommand{\abs}[1]{\left\vert#1\right\vert}
\newcommand{\set}[1]{\left\{#1\right\}}
\newcommand{\parr}[1]{\left (#1\right )}
\newcommand{\brac}[1]{\left [#1\right ]}
\newcommand{\Real}{\mathbb R}
\newcommand{\eps}{\varepsilon}
\newcommand{\too}{\rightarrow}
\newcommand{\one}{\mathbf{1}}
\newcommand{\eg}{{e.g.}\xspace}
\newcommand{\ie}{{i.e.}\xspace}
 \newtheorem{theorem}{Theorem}
 \newtheorem{lemma}{Lemma}
 \newtheorem{proposition}{Proposition}
\def\eqref#1{equation~\ref{#1}}
\def\1{\bm{1}}
\def\eps{{\epsilon}}
\def\va{{\bm{a}}}
\def\vb{{\bm{b}}}
\def\vc{{\bm{c}}}
\def\vf{{\bm{f}}}
\def\vn{{\bm{n}}}
\def\vv{{\bm{v}}}
\def\vx{{\bm{x}}}
\def\vy{{\bm{y}}}
\def\vz{{\bm{z}}}
\def\vec1{{\bm{1}}}
\DeclareMathAlphabet{\mathsfit}{\encodingdefault}{\sfdefault}{m}{sl}
\SetMathAlphabet{\mathsfit}{bold}{\encodingdefault}{\sfdefault}{bx}{n}
\def\gL{{\mathcal{L}}}
\def\gM{{\mathcal{M}}}
\def\gS{{\mathcal{S}}}
\def\gT{{\mathcal{T}}}
\newcommand{\E}{\mathbb{E}}
\title{Volume Rendering of Neural Implicit Surfaces}
\author{%
  Lior Yariv$^1$ \And 
  Jiatao Gu$^2$ \And
  Yoni Kasten$^1$ \And
  Yaron Lipman$^{1,2}$ \AND
  \vspace{-25pt} \\
  $^1$Weizmann Institute of Science \qquad
  $^2$Facebook AI Research
}
\begin{document}

\maketitle
%%%%%%%%% ABSTRACT
\begin{abstract}
Neural volume rendering became increasingly popular recently due to its success in synthesizing novel views of a scene from a sparse set of input images. 
So far, the geometry learned by neural volume rendering techniques was modeled using a generic density function. Furthermore, the geometry itself was extracted using an arbitrary level set of the density function leading to a noisy, often low fidelity reconstruction.
The goal of this paper is to improve geometry representation and reconstruction in neural volume rendering. We achieve that by modeling the volume density as a function of the geometry. This is in contrast to previous work modeling the geometry as a function of the volume density. 
In more detail, we define the volume density function as Laplace's cumulative distribution function (CDF) applied to a signed distance function (SDF) representation. This simple density representation has three benefits: (i) it provides a useful inductive bias to the geometry learned in the neural volume rendering process; (ii) it facilitates a bound on the opacity approximation error, leading to an accurate sampling of the viewing ray. Accurate sampling is important to provide a precise coupling of geometry and radiance; and (iii) it allows efficient unsupervised disentanglement of shape and appearance in volume rendering. 
%
%We use this bound to  of the learned volume opacity, a central component of the volume rendering pipeline and key to provide a precise coupling of geometry and radiance; and (iii) it allows efficient unsupervised disentanglement of shape and appearance in volume rendering. 
%
Applying this new density representation to challenging scene multiview datasets produced high quality geometry reconstructions, outperforming relevant baselines. Furthermore, switching shape and appearance between scenes is possible due to the disentanglement of the two. 
\end{abstract}

%%%%%%%%% BODY TEXT
\section{Introduction}

% motivation / background
Volume rendering~\citep{max1995optical} is a set of techniques that renders volume \emph{density} in \emph{radiance fields} by the so called volume rendering integral. 
It has recently been shown that representing both the density and radiance fields as neural networks can lead to excellent prediction of novel views by learning only from a sparse set of input images. This neural volume rendering approach, presented in \cite{mildenhall2020nerf} and developed by its follow-ups~\citep{nerv2021,boss2020nerd} approximates the integral as alpha-composition in a differentiable way, allowing to learn simultaneously both from input images. Although this coupling indeed leads to good generalization of novel viewing directions, the density part is not as successful in faithfully predicting the scene's actual geometry, often producing noisy, low fidelity geometry approximation. 
%More precisely, exporting the scene's geometry as some choice of a level set of the learned density function leads to a noisy, low fidelity approximation of the true scene's surfaces. 

% goal + approach
% The goal of this work is 
We propose VolSDF to devise a different model for the density in neural volume rendering, leading to better approximation of the scene's geometry while maintaining the quality of view synthesis.
The key idea is to represent the density as a function of the signed distance to the scene's surface, see Figure \ref{fig:teas}. 
Such density function enjoys several benefits. 
First, it guarantees the existence of a well-defined surface that generates the density. This provides a useful inductive bias for disentangling density and radiance fields, which in turn provides a more accurate geometry approximation. 
Second, we show this density formulation allows bounding the approximation error of the opacity along rays. This bound is used to sample the viewing ray so to provide a faithful coupling of density and radiance field in the volume rendering integral. 
E.g., without such a bound the computed radiance along a ray (pixel color) can potentially miss or extend surface parts leading to incorrect radiance approximation. 
%and consequently compromise the density approximation. 

% related works
A closely related line of research, often referred to as neural implicit surfaces~\citep{niemeyer2019differentiable,yariv2020multiview,Kellnhofer:2021:nlr}, have been focusing on representing the scene's geometry implicitly using a neural network, making the surface rendering process differentiable. 
The main drawback of these methods is their requirement of masks that separate objects from the background. Also, learning to render surfaces directly tends to grow extraneous parts due to optimization problems, which are avoided by volume rendering. In a sense, our work combines the best of both worlds: \textit{volume rendering with neural implicit  surfaces}.

% modeling  
% -[Geiger] replace the probability (local transparency) with MLP while NERF take the density to be an MLP. very similar to nerf. two differences: smoothing, addition of normal.  
% -does not model volumetric density of object, only transparency. 
% -Add smoothing terms - we do not. less details?
% -The zero level set is not clear. 
% -Approximation of integral has no guarantees. 

% experiments
We demonstrate the efficacy of VolSDF by reconstructing surfaces from the DTU~\citep{jensen2014large} and Blended-MVS~\citep{yao2020blendedmvs} datasets. VolSDF produces more accurate surface reconstructions compared to NeRF~\citep{mildenhall2020nerf} and NeRF++~\citep{kaizhang2020}, and comparable reconstruction compared to IDR~\citep{yariv2020multiview}, while avoiding the use of object masks. Furthermore, we show disentanglement results with our method, \ie, switching the density and radiance fields of different scenes, which is shown to fail in NeRF-based models.\vspace{-3pt}

\begin{figure}\hspace{-12pt}
\centering
    \includegraphics[width=\textwidth]{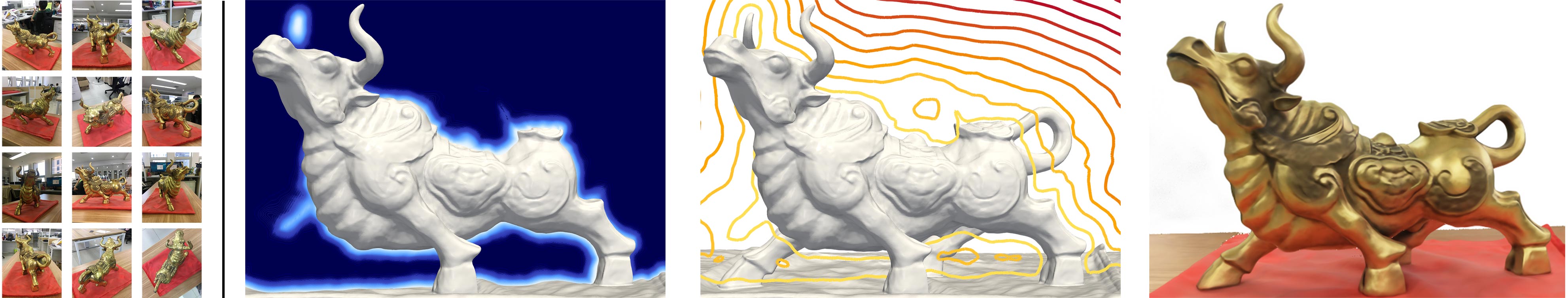}
    \caption{VolSDF: given a set of input images (left) we learn a volumetric density (center-left, sliced) defined by a signed distance function (center-right, sliced) to produce a neural rendering (right). This definition of density facilitates high quality geometry reconstruction (gray surfaces, middle).} \label{fig:teas}
    \vspace{-15pt}
\end{figure}

%------------------------------------------------------------------------
\section{Related work}\vspace{-2pt}
\textbf{Neural Scene Representation \& Rendering}
Implicit functions are traditionally adopted in modeling 3D scenes~\citep{10.1145/2508363.2508374,izadi2011kinectfusion,dai2017bundlefusion}. 
Recent studies have been focusing on model implicit functions with multi-layer perceptron (MLP) due to its expressive representation power and low memory foot-print, including scene (geometry \& appearance) representation~\citep{genova2019learning,michalkiewicz2019implicit,mescheder2019occupancy,niemeyer2019occupancy,oechsle2019texture,peng2020convolutional,xu2019disn,park2019deepsdf,takikawa2021neural} and free-view rendering~\citep{sitzmann2019scene,liu2019dist,saito2019pifu,oechsle2020learning,lombardi2019neural,mildenhall2020nerf,liu2020neural,kaizhang2020,nerv2021,boss2020nerd}. % boss2020nerd,
In particular, NeRF~\citep{mildenhall2020nerf} has opened up a line of research (see \cite{dellaert2020neural} for an overview) combining neural implicit functions together with volume rendering to achieve photo-realistic rendering results. However, it is non-trivial to find a proper threshold to extract surfaces from the predicted density, and the recovered geometry is far from satisfactory. Furthermore, sampling of points along a ray for rendering a pixel is done using an opacity function that is approximated from another network without any guarantee for correct approximation.

\textbf{Multi-view 3D Reconstruction}
Image-based 3D surface reconstruction (multi-view stereo) has been a longstanding problem in the past decades. Classical multi-view stereo approaches are generally either depth-based~\citep{Barnes:2009:PAR,schoenberger2016mvs,galliani2015massively,5226635} or voxel-based~\citep{de1999poxels,937544,seitz1999photorealistic}. For instance, in COLMAP~\citep{schoenberger2016mvs} (a typical depth-based method) image features are extracted and matched across different views to estimate depth. Then the predicted depth maps are fused to obtain dense point clouds. To obtain the surface, an additional meshing step e.g. Poisson surface reconstruction~\citep{SGP:SGP06:061-070} is applied.   
However, these methods with complex pipelines may accumulate errors at each stage and usually result in incomplete 3D models, especially for non-Lambertian surfaces as they can not handle view dependent colors. On the contrary, although it produces complete models by directly modeling objects in a volume, voxel-based approaches are limited to low resolution due to high memory consumption.  
Recently, neural-based approaches such as DVR~\citep{niemeyer2019differentiable}, IDR~\citep{yariv2020multiview}, NLR~\citep{Kellnhofer:2021:nlr} have also been proposed to reconstruct scene geometry from multi-view images. However, these methods require accurate object masks and appropriate weight initialization due to the difficulty of propagating gradients. % In this work, our goal is to take inspiration from volume rendering to ease the optimization problem of surface rendering, and improve both the reconstruction and rendering performance. %\vspace{-5pt}

Independently from and concurrently with our
work here, \cite{oechsle2021unisurf} also use implicit surface representation incorporated into volume rendering. In particular, they replace the local transparency function with an occupancy network~\citep{mescheder2019occupancy}. This allows adding surface smoothing term to the loss, improving the quality of the resulting surfaces. Differently from their approach, we use signed distance representation, regularized with an Eikonal loss~\citep{yariv2020multiview,gropp2020implicit} without any explicit smoothing term. Furthermore, we show that the choice of using signed distance allows bounding the opacity approximation error, facilitating the approximation of the volume rendering integral for the suggested family of densities.

%------------------------------------------------------------------------
\section{Method}\vspace{-5pt}
\label{s:method}

In this section we introduce a novel parameterization for volume density, defined as transformed signed distance function. Then we show how this definition facilitates the volume rendering process. In particular, we derive a bound of the error in the opacity approximation and consequently devise a sampling procedure for approximating the volume rendering integral.

\subsection{Density as transformed SDF} 
Let the set $\Omega\subset\Real^3$ represent the space occupied by some object in $\Real^3$, and $\gM=\partial \Omega$ its boundary surface. We denote by $\one_\Omega$ the $\Omega$ indicator function, and by $d_\Omega$ the Signed Distance Function (SDF) to its boundary $\gM$,
\begin{equation}
\one_\Omega(\vx) = \begin{cases} 1 & \text{if } \vx\in\Omega \\ 0 & \text{if }\vx\notin \Omega \end{cases}, \quad \text{and }\  d_\Omega(\vx) = (-1)^{\one_\Omega(\vx)}\min_{\vy\in\gM}\norm{\vx-\vy},
\end{equation}
where $\norm{\cdot}$ is the standard Euclidean 2-norm.
In neural volume rendering the volume \emph{density} $\sigma:\Real^3\too\Real_+$ is a scalar volumetric function, where $\sigma(\vx)$ is the rate that light is occluded at point $\vx$; $\sigma$ is called density since it is proportional to the particle count per unit volume at $\vx$~\citep{max1995optical}. In previous neural volumetric rendering approaches~\citep{mildenhall2020nerf,liu2020neural,kaizhang2020}, the density function, $\sigma$, was modeled with a general-purpose Multi-Layer Perceptron (MLP). In this work we suggest to model the density using a certain transformation of a learnable Signed Distance Function (SDF) $d_\Omega$, % (in fact, an approximation of it using an MLP), 
namely
\begin{equation}\label{e:density}
\sigma(\vx) = \alpha \Psi_\beta \parr{-d_\Omega(\vx)},
\end{equation}
where $\alpha, \beta>0$ are learnable parameters, and $\Psi_\beta$ is the Cumulative Distribution Function (CDF) of the Laplace distribution with zero mean and $\beta$ scale (\ie, mean absolute deviation, which is intuitively  the $L_1$ version of the standard deviation),
\begin{equation}\label{e:laplace}
\Psi_\beta(s) = \begin{cases} \frac{1}{2} \exp\parr{\frac{s}{\beta}} & \text{if } s\leq 0 \\
1-\frac{1}{2}\exp\parr{-\frac{s}{\beta}} & \text{if } s>0
\end{cases}
\end{equation}
Figure \ref{fig:teas} (center left and right) depicts an example of such a density and SDF. 
As can be readily checked from this definition, as $\beta$ approach zero, the density $\sigma$ converges to a scaled indicator function of $\Omega$, that is $\sigma \too \alpha\one_\Omega$ for all points $\vx\in\Omega\setminus\gM$.

Intuitively, the density $\sigma$ models a homogeneous object with a constant density $\alpha$ that smoothly decreases near the object's boundary, where the smoothing amount is controlled by $\beta$.
The benefit in defining the density as in \eqref{e:density} is two-fold: First, it provides a useful inductive bias for the surface geometry $\gM$, and provides a principled way to reconstruct the surface, \ie, as the zero level-set of $d_\Omega$. This is in contrast to previous work where the reconstruction was chosen as an arbitrary level set of the learned density. Second, the particular form of the density as defined in \eqref{e:density} facilitates a bound on the error of the \emph{opacity} (or, equivalently the \emph{transparency}) of the rendered volume, a crucial component in the volumetric rendering pipeline. In contrast, such a bound will be hard to devise for a generic MLP densities.

\subsection{Volume rendering of $\sigma$} \label{ss:volume_rendering_of_sigma}
%In this section we describe how to volume render the density $\sigma$ in \eqref{e:density}. 
In this section we review the volume rendering integral and the numerical integration commonly used to approximate it, requiring a set $\gS$ of sample points per ray. In the following section (Section \ref{ss:bound}), we explore the properties of the density $\sigma$ and derive a bound on the opacity approximation error along viewing rays. Finally, in Section \ref{s:sampling_algorith} we derive an algorithm for producing a sample $\gS$ to be used in the volume rendering numerical integration. 

%\paragraph{Volume rendering preliminaries.} 
In volume rendering we consider a ray $\vx$ emanating from a camera position $\vc\in\Real^3$ in direction $\vv\in\Real^3$, $\norm{\vv}=1$, defined by $\vx(t)=\vc+t\vv$, $t\geq 0$.
%Note that $\vx=\vx_{\vc,\vv}$ but we drop the subscript for notational brevity. 
In essence, volume rendering is all about approximating the integrated (\ie, summed) light radiance along this ray reaching the camera. There are two important quantities that participate in this computation: the volume's \emph{opacity} $O$, or equivalently, its \emph{transperancy} $T$, and the \emph{radiance field} $L$. 

The \emph{transparency} function of the volume along a ray $\vx$, denoted $T$, indicates, for each $t\geq 0$, the probability a light particle succeeds traversing the segment $[\vc,\vx(t)]$ without bouncing off,  
\begin{equation}\label{e:T}
T(t) = \exp\parr{ - \int_0^t \sigma(\vx(s)) ds },
\end{equation}

\begin{wrapfigure}[23]{r}{0.4\textwidth}
   \centering
   \vspace{-0pt}
    \begin{tabular}{@{\hskip0.0pt}c@{\hskip0.0pt}}
    \includegraphics[width=0.4\textwidth]{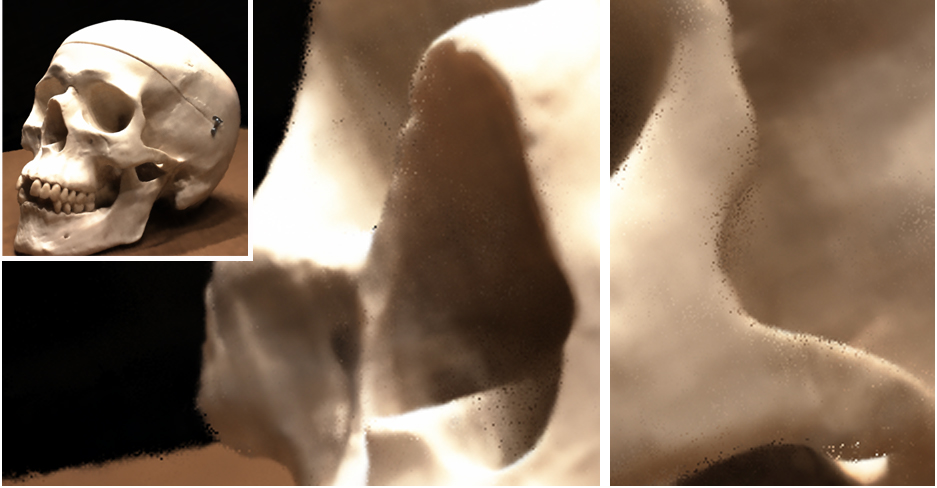} \\ 
    NeRF \\
     \includegraphics[width=0.4\textwidth]{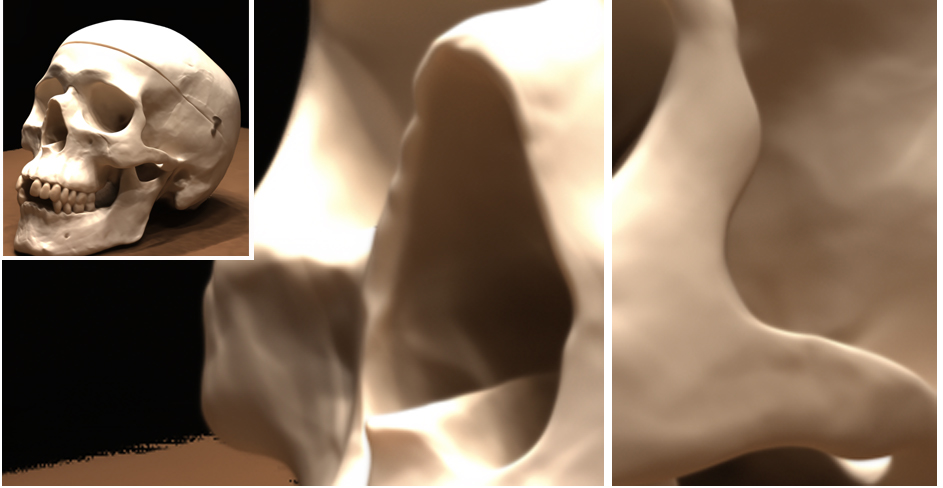} \\
    \textbf{VolSDF}
    \end{tabular}
    \vspace{-0pt}
    \caption{Qualitative comparison to NeRF. VolSDF shows less artifacts.\vspace{-0pt}} \label{fig:rendering_compare}
   \end{wrapfigure}
and the \emph{opacity} $O$ is the complement probability, 
\begin{equation}\label{e:O}
O(t)=1-T(t).
\end{equation}
Note that $O$ is a monotonic increasing function where $O(0)=0$, and assuming that every ray is eventually occluded $O(\infty)=1$. In that sense we can think of $O$ as a CDF, and 
\begin{equation}\label{e:tau}
\tau(t)=\frac{dO}{dt}(t) = \sigma(\vx(t))  T(t)
\end{equation}
is its Probability Density Function (PDF). The volume rendering equation is the expected light along the ray, 
\begin{equation}\label{e:vol}
%I(\vc,\vv) = \E_{t\sim \tau} L(\vx(t),\vv) = \int_0^\infty \tau(t) L(\vx(t), \vv) dt
I(\vc,\vv) = \int_0^\infty  L(\vx(t),\vn(t),\vv) \tau(t) dt ,
\end{equation}
where $L(\vx,\vn,\vv)$ is the radiance field, namely the amount of light emanating from point $\vx$ in direction $\vv$; in our formulation 
we also allow $L$ to depend on the level-set's normal, \ie, $\vn(t)=\nabla_\vx d_\Omega(\vx(t))$. Adding this dependency is motivated by the fact that BRDFs of common materials are often encoded with respect to the surface normal, facilitating disentanglement as done in surface rendering \citep{yariv2020multiview}. We will get back to disentanglement in the experiments section. 
The integral in \eqref{e:vol} is approximated using a numerical quadrature, namely the rectangle rule, at some discrete samples $\gS=\set{s_i}_{i=1}^m$, $0=s_1<s_2<\ldots<s_m=M$, where $M$ is some large constant: 
\begin{equation}\label{e:numerical_int}
  I(\vc,\vv) \approx \hat{I}_\gS(\vc,\vv) = \sum_{i=1}^{m-1} \hat{\tau}_i L_i,  
\end{equation}
where we use the subscript $\gS$ in $\hat{I}_\gS$ to highlight the dependence of the approximation on the sample set $\gS$, $\hat{\tau}_i\approx \tau(s_i)\Delta s$ is the approximated PDF multiplied by the interval length, and $L_i=L(\vx(s_i),\vn(s_i),\vv)$ is the sampled radiance field. We provide full derivation and detail of $\hat{\tau}_i$ in the supplementary.

\textbf{Sampling.} Since the PDF $\tau$ is typically extremely concentrated near the object's boundary (see \eg, Figure \ref{fig:algo_conv}, right) the choice of the sample points $\gS$ has a crucial effect on the approximation quality of \eqref{e:numerical_int}. 
One solution is to use an adaptive sample, \eg, $\gS$ computed with the inverse CDF, \ie, $O^{-1}$. However, $O$ depends on the density model $\sigma$ and is not given explicitly. In \cite{mildenhall2020nerf} a second, coarse network was trained specifically for the approximation of the opacity $O$, and was used for inverse sampling. However, the second network's density does not necessarily faithfully represents the first network's density, for which we wish to compute the volume integral. Furthermore, as we show later, one level of sampling could be insufficient to produce an accurate sample $\gS$. 
Using a naive or crude approximation of $O$ would lead to a sub-optimal sample set $\gS$ that misses, or over extends non-negligible $\tau$ values. Consequently, incorrect radiance approximations can occur (\ie, pixel color), potentially harming the learned density-radiance field decomposition. Our solution works with a single density $\sigma$, and the sampling $\gS$ is computed by a sampling algorithm based on an error bound for the opacity approximation. Figure \ref{fig:rendering_compare} compares the NeRF and VolSDF renderings for the same scene. Note the salt and pepper artifacts in the NeRF rendering caused by the random samples; using fixed (uniformly spaced) sampling in NeRF leads to a different type of artifacts shown in the supplementary.

\subsection{Bound on the opacity approximation error}\label{ss:bound}

In this section we develop a bound on the opacity approximation error using the rectangle rule. 
For a set of samples $\gT=\set{t_i}_{i=1}^n$, $0=t_1<t_2<\cdots<t_n=M$, we let $\delta_i=t_{i+1}-t_{i}$, and $\sigma_i=\sigma(\vx(t_i))$. Given some $t\in (0,M]$, assume $t\in [t_k,t_{k+1}]$, and apply the rectangle rule (\ie, left Riemann sum) to get the approximation:
\begin{equation}\label{e:rectangle_rule}
\int_0^t \sigma(\vx(s))ds = \widehat{R}(t) +E(t), \quad \text{where }\ \widehat{R}(t) = \sum_{i=1}^{k-1}\delta_i \sigma_i + (t-t_{k})\sigma_k
\end{equation}
is the rectangle rule approximation, and $E(t)$ denotes the error in this approximation. The corresponding approximation of the opacity function (\eqref{e:O}) is
\begin{equation}\label{e:O_rect}
\widehat{O}(t) = 1- \exp\parr{-\widehat{R}(t)}.
\end{equation}
Our goal in this section is to derive a uniform bound over $[0,M]$ to the approximation $\widehat{O}\approx O$. The key is the following bound on the derivative\footnote{As $d_\Omega$ is not differentiable everywhere the bound is on the \emph{Lipschitz constant} of $\sigma$, see supplementary.} of the density $\sigma$ inside an interval along the ray $\vx(t)$:
% the interval $[\vx(t_i),\vx(t_{i+1})]$. We define the open balls $B_i=\set{\vx \ \vert \ \norm{\vx-\vx(t_i)}<|d_i|}$, where $d_i=d_\Omega(\vx(t_i))$. 
%
%
\begin{theorem}\label{thm:density_der}
The derivative of the density $\sigma$ within a segment $[t_i,t_{i+1}]$ satisfies
\begin{equation}\label{e:dsigma_bound}
   \abs{\frac{d}{ds}\sigma(\vx(s))} \leq \frac{\alpha}{2\beta}\exp\parr{-\frac{d^\star_i}{\beta}},\  \text{where } d_i^\star = \min_{\substack{s\in [t_i,t_{i+1}]\\ \vy \notin B_i\cup B_{i+1}}} \norm{\vx(s)-\vy}
    ,\end{equation} 
    and $B_i=\set{\vx \ \vert \ \norm{\vx-\vx(t_i)}<|d_i|}$, $d_i=d_\Omega(\vx(t_i))$. 
\end{theorem}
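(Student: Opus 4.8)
The plan is to bound the derivative $\frac{d}{ds}\sigma(\vx(s))$ by the chain rule, exploiting the explicit form of $\sigma$ as a Laplace CDF composed with $-d_\Omega$, and then replace the signed-distance value $d_\Omega(\vx(s))$ appearing in the resulting expression by the geometric quantity $d_i^\star$. First I would differentiate $\eqref{e:density}$: writing $\sigma(\vx(s)) = \alpha\Psi_\beta(-d_\Omega(\vx(s)))$, the chain rule gives
\begin{equation}
\frac{d}{ds}\sigma(\vx(s)) = -\alpha\,\Psi_\beta'(-d_\Omega(\vx(s)))\,\frac{d}{ds}d_\Omega(\vx(s)).
\end{equation}
From the closed form $\eqref{e:laplace}$ one computes $\Psi_\beta'(s) = \frac{1}{2\beta}\exp(-|s|/\beta)$ for all $s$, so $|\Psi_\beta'(-d_\Omega(\vx(s)))| = \frac{1}{2\beta}\exp(-|d_\Omega(\vx(s))|/\beta)$. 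For the second factor, the SDF is $1$-Lipschitz (it is a distance function up to sign), and along the unit-speed ray $s\mapsto\vx(s)$ we have $|\frac{d}{ds}d_\Omega(\vx(s))| \le 1$ wherever the derivative exists; at non-differentiable points this is the statement about the Lipschitz constant flagged in the footnote, and I would invoke the supplementary treatment there. Combining, $|\frac{d}{ds}\sigma(\vx(s))| \le \frac{\alpha}{2\beta}\exp(-|d_\Omega(\vx(s))|/\beta)$.

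The remaining — and main — step is purely geometric: to show that for every $s\in[t_i,t_{i+1}]$ we have $|d_\Omega(\vx(s))| \ge d_i^\star$, so that the bound $\exp(-|d_\Omega(\vx(s))|/\beta) \le \exp(-d_i^\star/\beta)$ holds uniformly on the segment and the theorem follows. The key observation is that the balls $B_i$ and $B_{i+1}$ contain no point of the surface $\gM$: since $d_i = d_\Omega(\vx(t_i))$ is the signed distance, the open ball of radius $|d_i|$ about $\vx(t_i)$ is surface-free, which is exactly $B_i\cap\gM=\emptyset$ (and likewise for $B_{i+1}$). Hence for any $s$ the nearest surface point $\vy^*\in\gM$ realizing $|d_\Omega(\vx(s))| = \norm{\vx(s)-\vy^*}$ lies outside $B_i\cup B_{i+1}$, so $\norm{\vx(s)-\vy^*}$ is one of the distances over which the infimum defining $d_i^\star$ is taken; therefore $|d_\Omega(\vx(s))| = \norm{\vx(s)-\vy^*} \ge d_i^\star$. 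Taking the minimum over $s\in[t_i,t_{i+1}]$ and plugging into the chain-rule estimate gives $\eqref{e:dsigma_bound}$.

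I expect the main obstacle to be the handling of differentiability: $d_\Omega$ fails to be differentiable on the medial axis (and on $\gM$ itself), so the chain rule must be justified in a Lipschitz sense — one should argue that $s\mapsto\sigma(\vx(s))$ is locally Lipschitz with the stated constant, e.g. by estimating $|\sigma(\vx(s_1))-\sigma(\vx(s_2))|$ directly using $|\Psi_\beta(a)-\Psi_\beta(b)|\le \frac{1}{2\beta}\exp(-\min(|a|,|b|)/\beta)\,|a-b|$ together with $|d_\Omega(\vx(s_1))-d_\Omega(\vx(s_2))|\le|s_1-s_2|$, and then noting $\min(|d_\Omega(\vx(s_1))|,|d_\Omega(\vx(s_2))|)\ge d_i^\star$ from the geometric argument above. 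The second potential subtlety is ensuring the set over which $d_i^\star$ is defined is nonempty and that the infimum is meaningful (it is, provided $\gM\not\subset B_i\cup B_{i+1}$, which holds since $\gM$ avoids both balls); this edge case should be noted but causes no real difficulty. Everything else is routine substitution into the closed forms $\eqref{e:density}$ and $\eqref{e:laplace}$.
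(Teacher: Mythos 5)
Your proposal is correct and takes essentially the same route as the paper's proof: the paper likewise states the result as a bound on the Lipschitz constant of $s\mapsto\sigma(\vx(s))$, obtained by multiplying the local bound on $\Psi_\beta'=\Phi_\beta$ (the Laplace PDF) by the $1$-Lipschitz constant of $d_\Omega$ (proved as a separate lemma), and then uses exactly your geometric observation that $\gM\subset (B_i\cup B_{i+1})^c$ forces $|d_\Omega(\vx(s))|\geq d_i^\star$ for all $s\in[t_i,t_{i+1}]$. One small caveat on your suggested Lipschitz patch: the inequality $|\Psi_\beta(a)-\Psi_\beta(b)|\leq\tfrac{1}{2\beta}e^{-\min(|a|,|b|)/\beta}|a-b|$ fails when $a$ and $b$ straddle zero, which is why the paper instead bounds by $\max_{t\in[t_i,t_{i+1}]}\Phi_\beta(-d_\Omega(\vx(t)))=\Phi_\beta\bigl(\min_{t\in[t_i,t_{i+1}]}|d_\Omega(\vx(t))|\bigr)$ taken over the whole segment; your final bound is unaffected since a sign change inside the segment forces $d_i^\star=0$.
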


% \begin{theorem}\label{thm:density_der}
% The derivative of the density $\sigma(\vx(t))$ within a segment $[t_i,t_{i+1}]$ satisfies
% \begin{equation}\label{e:dsigma_bound}
%   \abs{\frac{d}{dt}\sigma(\vx(t))} \leq \frac{\alpha}{2\beta}\exp\parr{-\frac{d^\star_i}{\beta}},\  \text{where } d_i^\star = \mathrm{dist}\parr{[\vx(t_i),\vx(t_{i+1})], (B_i\cup B_{i+1})^c},\end{equation} and $\mathrm{dist}(A,B)=\min_{\va\in A, \vb\in B}\norm{\va-\vb}$,  $A^c=\set{\vx\ \vert \ \vx\notin A}$. 
% \end{theorem}

\begin{wrapfigure}[7]{r}{0.3\textwidth}
\vspace{-33pt}
     \includegraphics[width=0.3\textwidth]{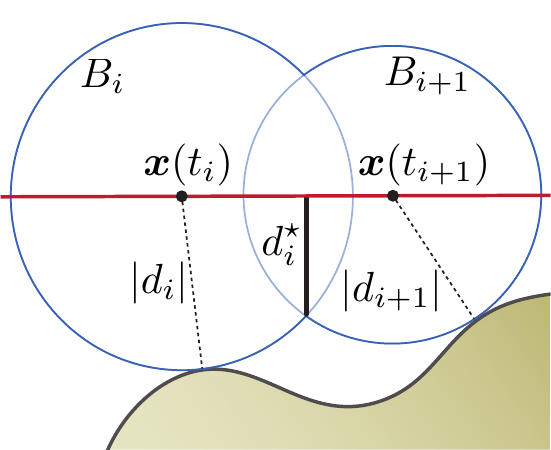}
 \label{fig:d_star}
   \end{wrapfigure}
The proof of this theorem, which is provided in the supplementary, makes a principled use of the signed distance function's unique properties; the explicit formula for $d^*_i$ is a bit cumbersome and therefore is deferred to the supplementary as-well. The inset depicts the boundary of the open balls union $B_i \cup B_{i+1}$, the interval $[\vx(t_i),\vx(t_{i+1})]$ and the bound is defined in terms of the minimal distance between these two sets, \ie, $d^*_i$.

The benefit in Theorem \ref{thm:density_der} is that it allows to bound the density's derivative in each interval $[t_i,t_{i-1}]$ based only on the unsigned distance at the interval's end points, $|d_i|, |d_{i+1}|$, and the density parameters $\alpha,\beta$. This bound can be used to derive an error bound for the rectangle rule's approximation of the opacity,
\begin{equation}\label{e:E_bound}
\abs{E(t)}\leq \widehat{E}(t) = \frac{\alpha}{4\beta}\parr{\sum_{i=1}^{k-1} \delta_i^2 e^{-\frac{d^\star_i}{\beta}} + (t-t_k)^2 e^{-\frac{d^\star_k}{\beta}}}.
\end{equation}
Details are in the supplementary. Equation \ref{e:E_bound} leads to the following opacity error bound, also proved in the supplementary:
\begin{theorem}\label{thm:bound}
For $t\in[0,M]$, the error of the approximated opacity $\hat{O}$ can be bounded as follows:
\begin{equation}\label{e:bound_O}
\abs{O(t)-\widehat{O}(t)} \leq \exp\parr{-\widehat{R}(t)}\parr{\exp\parr{\widehat{E}(t)}-1} 
\end{equation}
\end{theorem}
Finally, we can bound the opacity error for $t\in[t_k,t_{k+1}]$ by noting that $\widehat{E}(t)$, and consequently also $\exp(\widehat{E}(t))$ are monotonically increasing in $t$, while $\exp(-\widehat{R}(t))$ is monotonically decreasing in $t$, and therefore
\begin{equation}\label{e:bound_per_interval}
\max_{t\in[t_k,t_{k+1}]}\abs{O(t)-\widehat{O}(t)} \leq \exp\parr{-\widehat{R}(t_k)} \parr{\exp(\widehat{E}(t_{k+1}))-1}.
\end{equation}
Taking the maximum over all intervals furnishes a bound $B_{\gT,\beta}$ as a function of $\gT$ and $\beta$,
\begin{equation}\label{e:bound}
\max_{t\in[0,M]}\abs{O(t)-\widehat{O}(t)} \leq B_{\gT,\beta} = \max_{k\in [n-1]} \set{\exp\parr{-\widehat{R}(t_k)} \parr{\exp(\widehat{E}(t_{k+1}))-1}},
\end{equation}
where by convention $\widehat{R}(t_0)=0$, and $[\ell]=\set{1,2,\ldots,\ell}$. See Figure \ref{fig:algo_conv}, where this bound is visualized in faint-red. 

To conclude this section we derive two useful properties, proved in the supplementary. The first, is that sufficiently dense sampling is guaranteed to reduce the error bound $B_{\gT,\eps}$:
\begin{lemma}\label{lem:dense}
Fix $\beta>0$. For any $\eps>0$ a sufficient dense sampling $\gT$ will provide $B_{\gT,\beta}<\eps$.
\end{lemma}
Second, with a fixed number of samples we can set $\beta$ such that the error bound is below $\eps$: 
\begin{lemma}\label{lem:beta_plus}
Fix $n>0$. For any $\eps>0$ a sufficiently large $\beta$ that satisfies
\begin{equation}
        \label{e:beta_0}
  \beta \geq  \frac{\alpha M^2}{4(n-1)\log(1+\eps)} 
    \end{equation}   
    will provide $B_{\gT,\beta}\leq \eps$.
\end{lemma}

\begin{figure}[t]
\centering

\begin{tabular}{@{\hskip0.0pt}l@{\hskip10.0pt}c@{\hskip5.0pt}c@{\hskip5.0pt}c@{\hskip0.0pt}}
\includegraphics[width=0.4\textwidth]{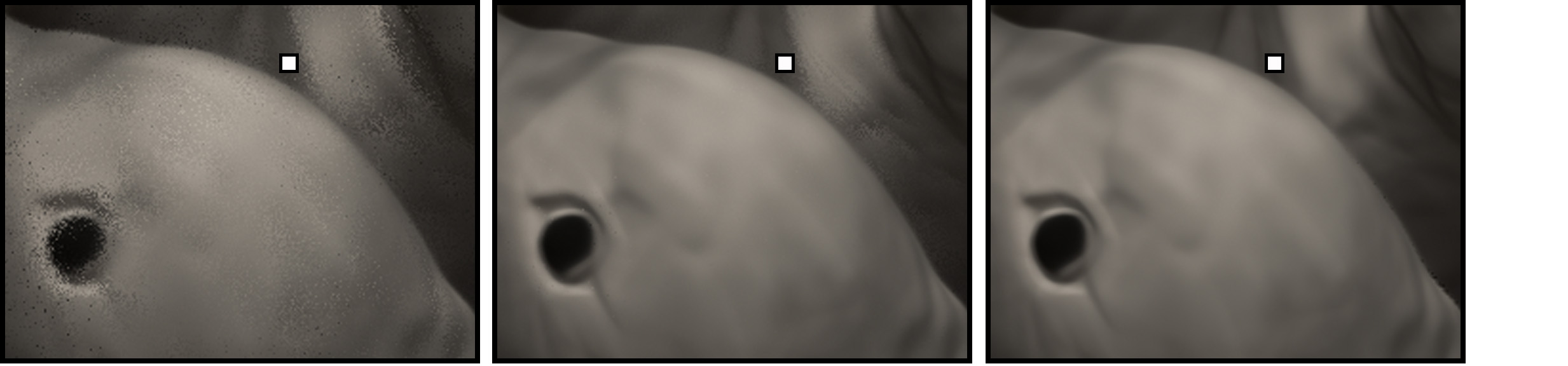}
& {\rotatebox[origin=l]{90}{\scriptsize{Opacity}}}
\includegraphics[width=0.17\textwidth]{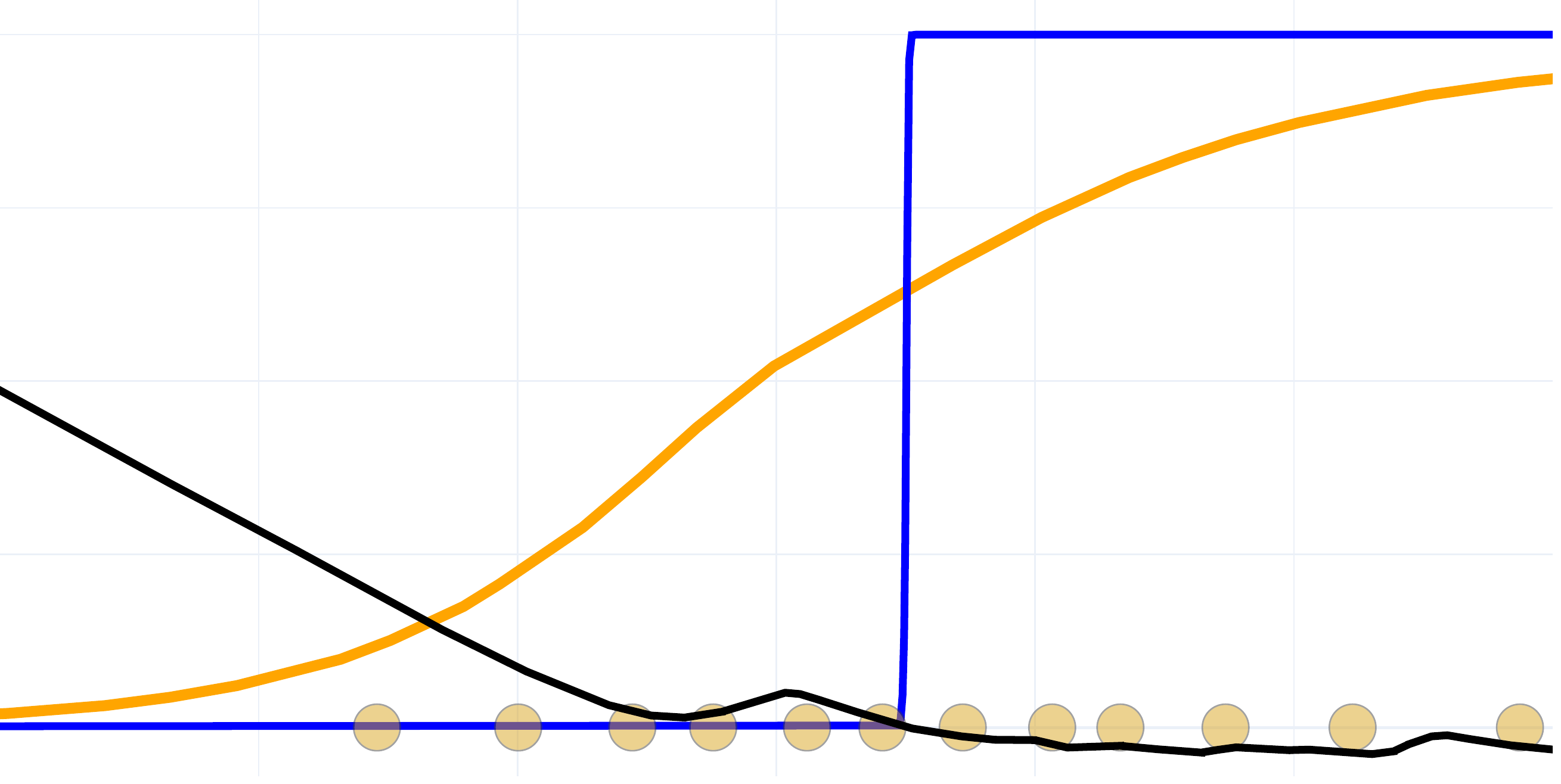} 
&
\includegraphics[width=0.17\textwidth]{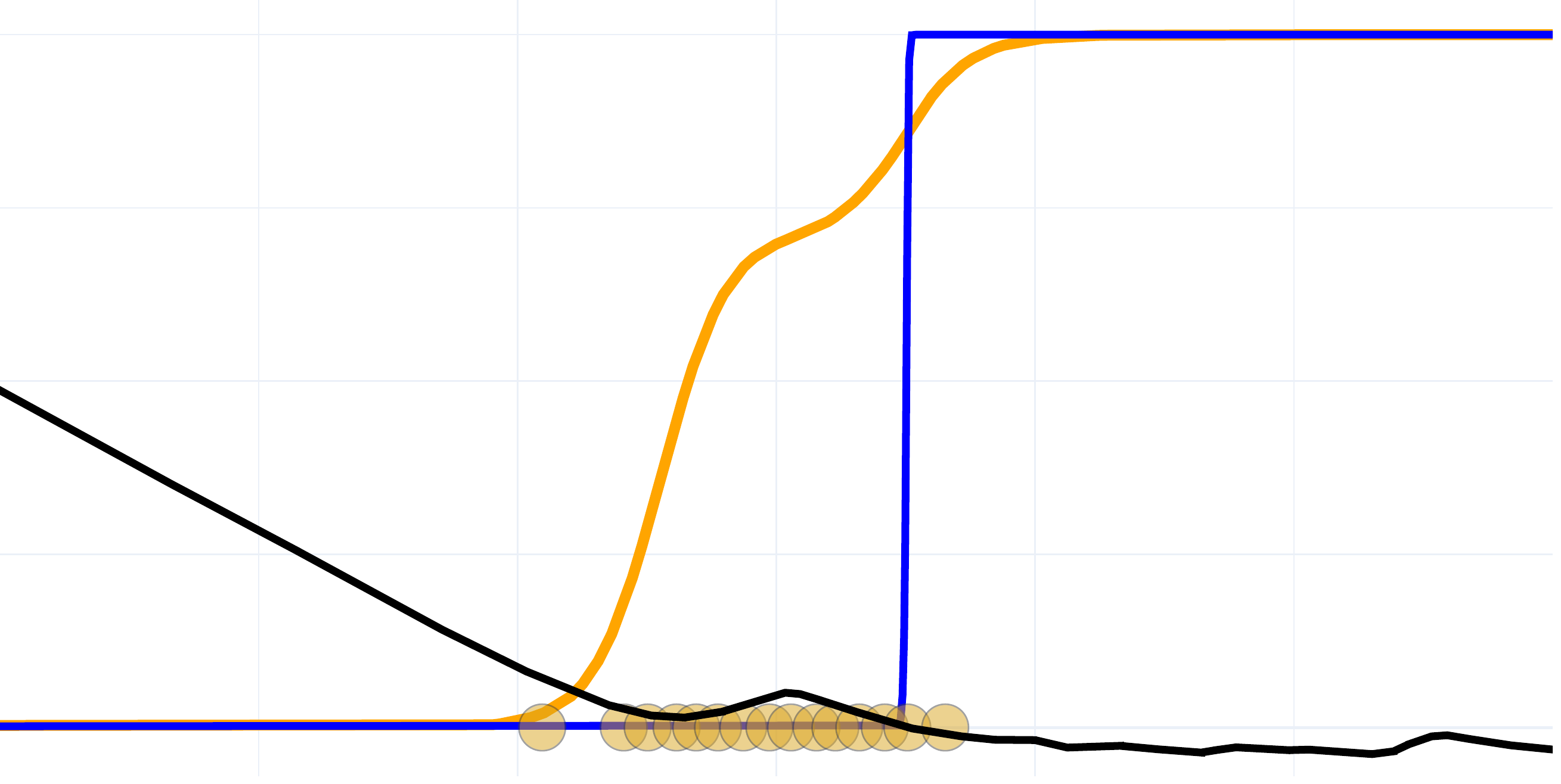} 
&
 \includegraphics[width=0.17\textwidth]{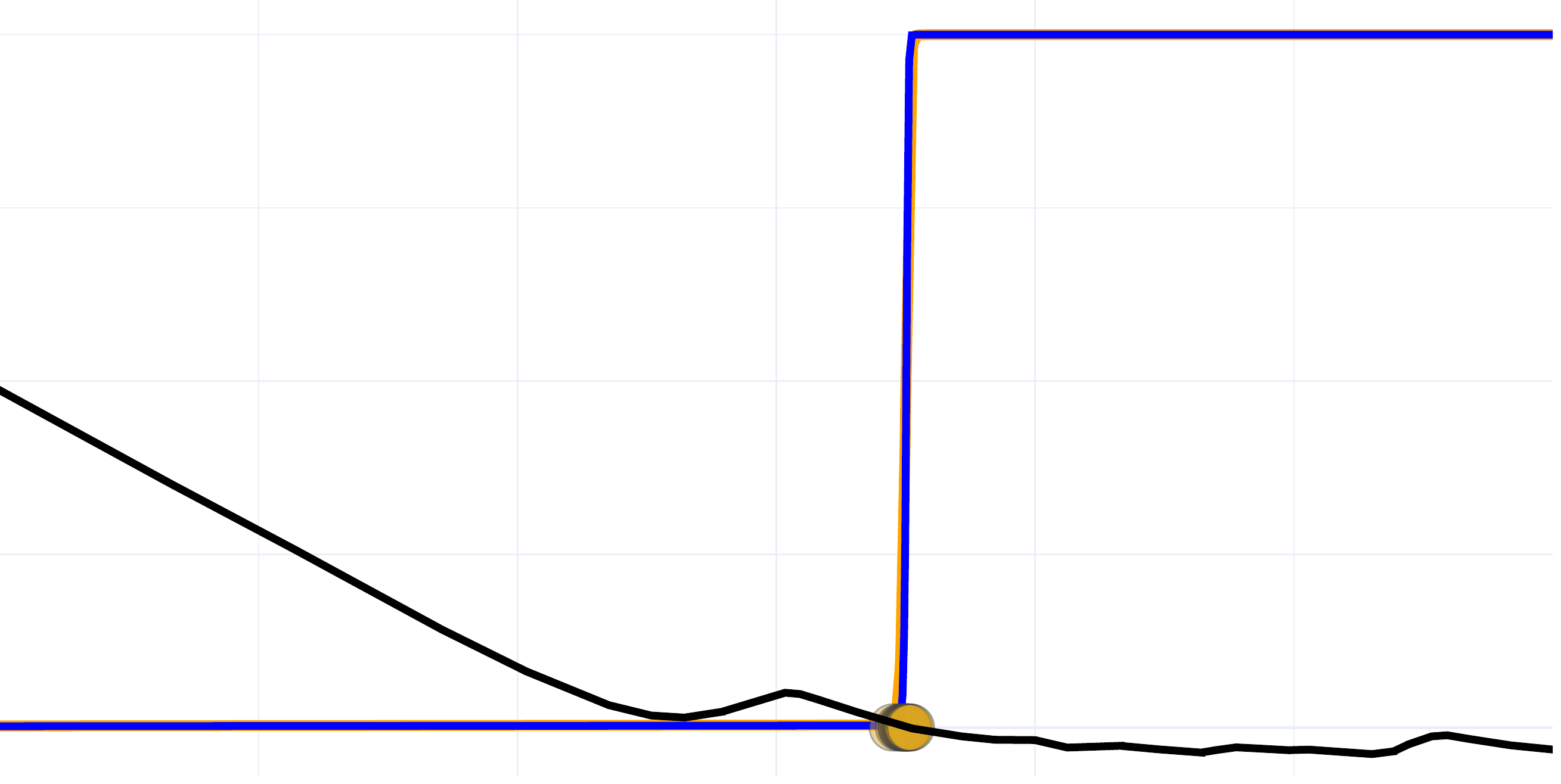} 
\\ % [3ex]
\includegraphics[width=0.4\textwidth]{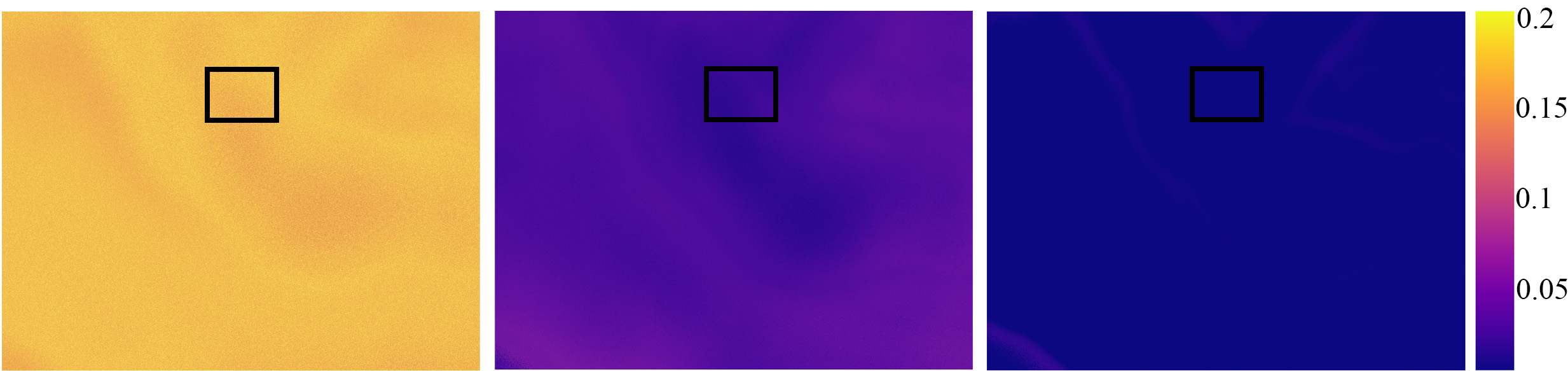}
& {\rotatebox[origin=l]{90}{\scriptsize{Error}}}
\includegraphics[width=0.17\textwidth]{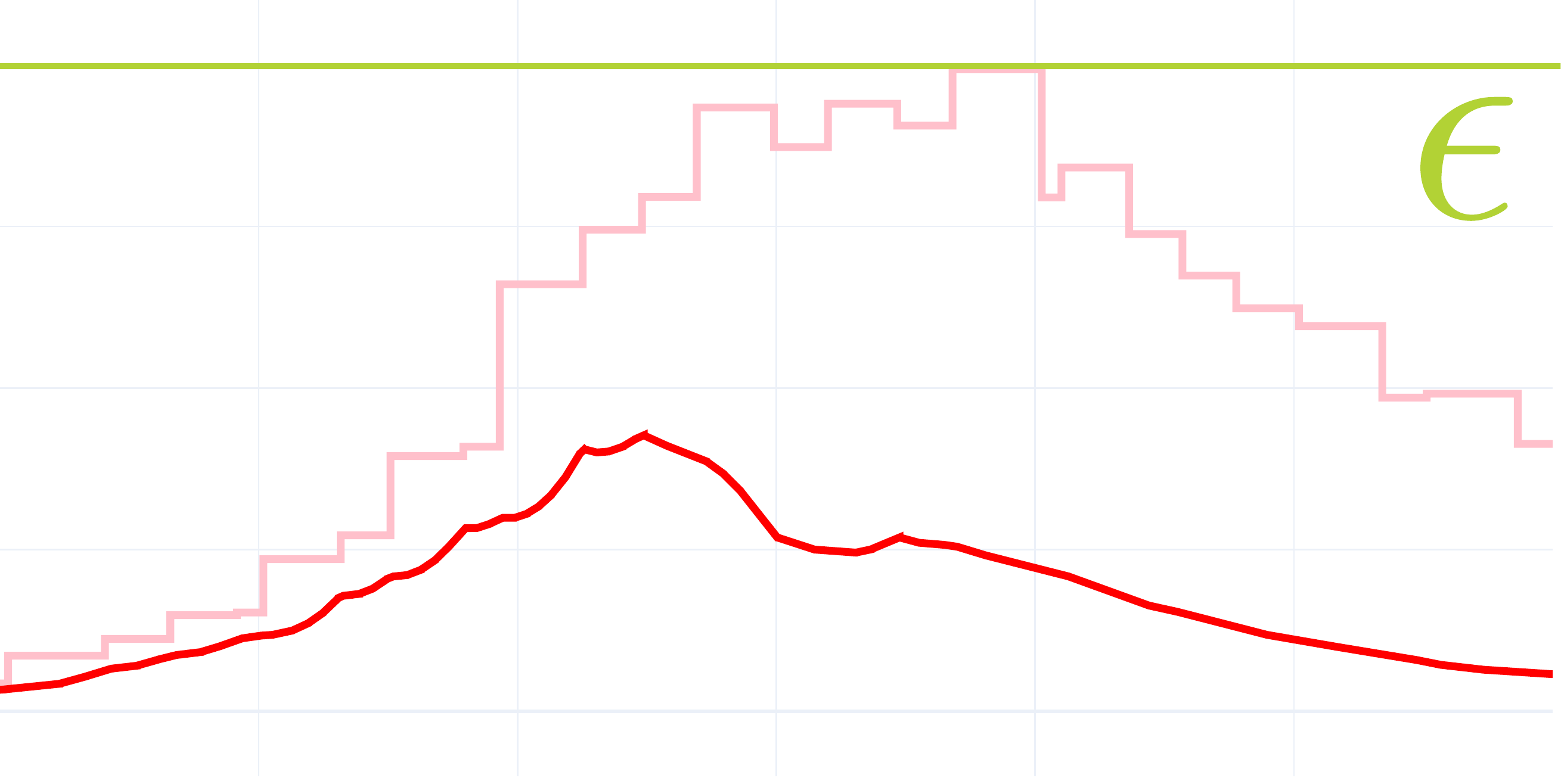}   
&      
\includegraphics[width=0.17\textwidth]{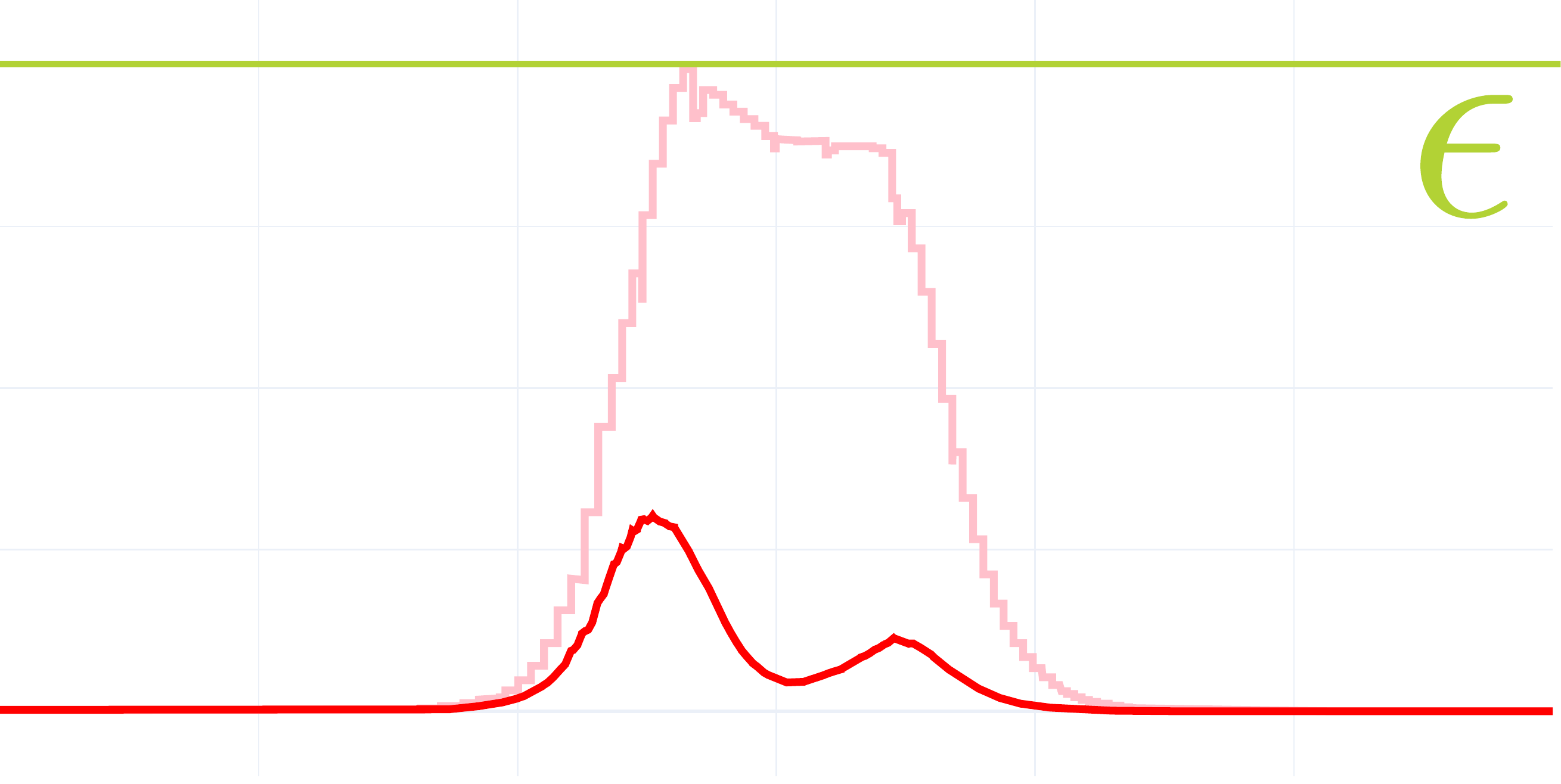} 
&
\includegraphics[width=0.17\textwidth]{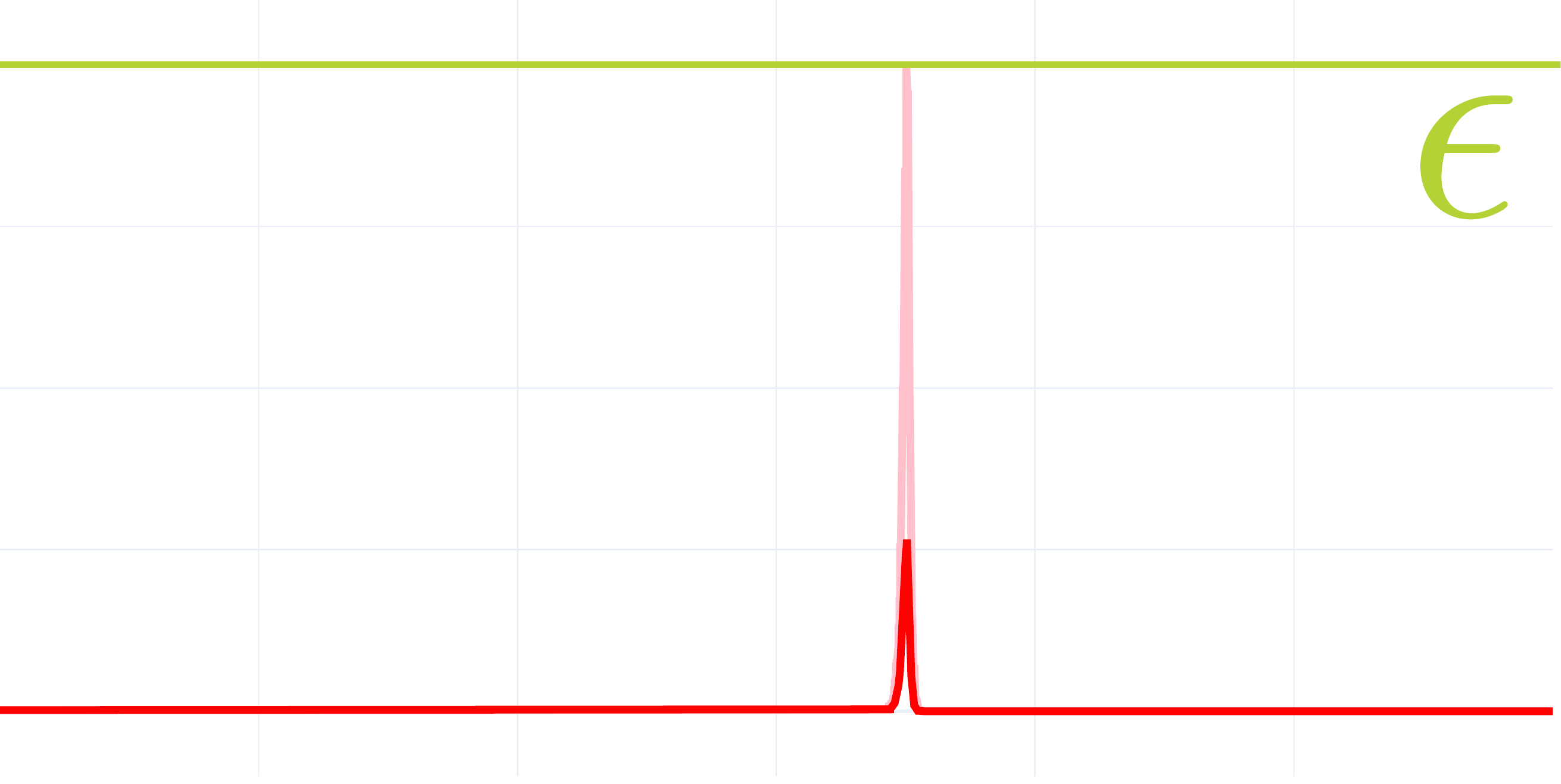} 
\\ %[3ex]
   \ \  iteration 1 \ \ \   iteration 2 \ \ \  iteration 5 &       iteration 1  & iteration 2 & iteration 5
    \end{tabular}

    \caption{Qualitative evaluation of Algorithm~\ref{alg:meta} after 1, 2 and 5 iterations. Left-bottom: per-pixel $\beta_+$ heatmap; Left-top: rendering of areas marked with black squares. Right-top: for a single ray indicated by white pixel we show the approximated (orange), true opacity (blue), the SDF (black), and $\widehat{O}^{-1}$ sample example (yellow dots). Right-bottom: for the same ray we now show the true opacity error (red), and error bound (faint red). After 5 iterations most of the rays converged, as can be inspected by the blue colors in the heatmap, providing a guaranteed $\eps$ approximation to the opacity, resulting in a crisp and more accurate rendering (center-left, top). \vspace{-10pt} } \label{fig:algo_conv}
\end{figure}
\subsection{Sampling algorithm}\label{s:sampling_algorith}
In this section we develop an algorithm for computing the sampling $\gS$ to be used in \eqref{e:numerical_int}. This is done by first utilizing the bound in \eqref{e:bound} to find samples $\gT$ so that $\widehat{O}$ (via \eqref{e:O_rect}) provides an $\epsilon$ approximation to the true opacity $O$, where $\epsilon$ is a hyper-parameter, that is $B_{\gT,\beta}<\eps$. Second, we perform inverse CDF sampling with $\hat{O}$, as described in Section \ref{ss:volume_rendering_of_sigma}.

Note that from Lemma \ref{lem:dense} it follows that we can simply choose large enough $n$ to ensure $B_{\gT,\beta}<\eps$. However, this would lead to prohibitively large number of samples. Instead, we suggest a simple algorithm to reduce the number of required samples in practice and allows working with a limited budget of sample points. 
In a nutshell, we start with a uniform sampling $\gT=\gT_0$, and use Lemma \ref{lem:beta_plus} to initially set a  $\beta_+>\beta$ that satisfies $B_{\gT,\beta_+}\leq\eps$. Then, we repeatedly upsample $\gT$ to reduce $\beta_+$ while maintaining $B_{\gT,\beta_+}\leq\eps$. Even though this simple strategy is not guaranteed to converge, we find that $\beta_+$ usually converges to $\beta$ (typically $85\%$, see also Figure \ref{fig:algo_conv}), and even in cases it does not, the algorithm provides $\beta_+$ for which the opacity approximation still maintains an $\eps$ error. The algorithm is presented below (Algorithm \ref{alg:meta}).% We next describe the algorithm steps in more details.
%
%We note that to allow fixed budget of sample points we will sometimes increase $\beta$ to maintain the $\eps$ approximation. We use the following algorithm:
%

%Let us elaborate the different stages of Algorithm \ref{alg:meta}.
We initialize $\gT$ (Line 1 in Algorithm \ref{alg:meta}) with uniform sampling $\gT_0=\set{t_i}_{i=1}^n$, where $t_k = (k-1)\frac{M}{n-1}$, $k\in [n]$ (we use $n=128$ in our implementation). %is a parameter; in our implementation we used $n=128$. 
Given this sampling we next pick $\beta_+>\beta$ according to Lemma \ref{lem:beta_plus} so that the error bound satisfies the required $\eps$ bound (Line 2 in Algorithm \ref{alg:meta}). %This is done by setting
%\begin{equation}\label{e:beta_0}
%  \beta_+ = \frac{\alpha L^2}{4(n-1)\log(1+\eps)}.  
%\end{equation}
%This equation is derived from the bound in \eqref{e:bound}, more details can be found in the supplementary \yl{dont forget XXX}. 

\begin{wrapfigure}[20]{r}{0.44\textwidth}
\vspace{-15pt}
    \begin{minipage}{0.41\textwidth}
\hspace{10pt}
      \begin{algorithm}[H]\label{alg:meta}
 \caption{Sampling algorithm.}
\setstretch{1.3}
\SetAlgoLined
\KwIn{error threshold $\eps>0$; $\beta$}
%\KwOut{$\gS$}
Initialize $\gT=\gT_0$\\
Initialize $\beta_+$ such that $B_{\gT,\beta_+}\leq\eps$\\
\While{$B_{\gT,\beta} > \eps$ and not \text{max\_iter} }{
% $\widehat{O}(t) \leftarrow \textsc{ApproximateCDF}(\gS, \beta^*)$ ; \\
upsample $\gT$ \\
\If{$B_{\gT,\beta_+}<\eps$ }
{
Find $\beta_\star\in (\beta,\beta_+)$ so that $B_{\gT,\beta_\star}=\eps$\\
Update $\beta_+\leftarrow \beta_\star$ \\}
}
Estimate $\widehat{O}$ using $\gT$ and $\beta_+$\\
$\gS \leftarrow $ get fresh $m$ samples using $\hat{O}^{-1}$ \\ 
\Return $\gS$
\end{algorithm}
    \end{minipage}
  \end{wrapfigure}
  
In order to reduce $\beta_+$ while keep $B_{\gT,\beta_+}\leq\eps$, $n$ samples are added to $\gT$ (Line 4 in Algorithm \ref{alg:meta}), where the number of points sampled from each interval is proportional to its current error bound, \eqref{e:bound_per_interval}. 

Assuming $\gT$ was sufficiently upsampled and satisfy $B_{\gT,\beta_+}< \eps$, we decrease $\beta_+$ towards $\beta$.
Since the algorithm did not stop we have that $B_{\gT,\beta}> \eps$. Therefore the Mean Value Theorem implies the existence of $\beta_\star\in (\beta,\beta_+)$ such that $B_{\gT,\beta_\star}= \eps$. We use the bisection method (with maximum of 10 iterations) to efficiently search for $\beta_\star$ and update $\beta_+$ accordingly  (Lines 6 and 7 in Algorithm \ref{alg:meta}).
The algorithm runs iteratively until $B_{\gT,\beta} \leq \eps$ or a maximal number of $5$ iterations is reached. Either way, we use the final $\gT$ and $\beta_+$ (guaranteed to provide $B_{\gT,\beta_+}\leq \eps$) to estimate the current opacity $\widehat{O}$, Line 10 in Algorithm \ref{alg:meta}). Finally we return a fresh set of $m=64$ samples $\hat{O}$ using inverse transform sampling (Line 11 in Algorithm \ref{alg:meta}).
Figure \ref{fig:algo_conv} shows qualitative illustration of Algorithm \ref{alg:meta}, for $\beta=0.001$ and $\eps=0.1$ (typical values).

\subsection{Training}\label{s:training}%\vspace{-5pt}
Our system consists of two Multi-Layer Perceptrons (MLP): (i) $\vf_\varphi$ approximating the SDF of the learned geometry, as well as global geometry feature $\vz$ of dimension $256$, \ie, $\vf_\varphi(\vx) = (d(\vx),\vz(\vx))\in\Real^{1+256}$, where $\varphi$ denotes its learnable parameters; (ii) $L_\psi(\vx,\vn,\vv,\vz)\in \Real^3$ representing the scene's radiance field with learnable parameters $\psi$. In addition we have two scalar learnable parameters $\alpha,\beta\in \Real$. In fact, in our implementation we make the choice $\alpha=\beta^{-1}$. We denote by $\theta \in \Real^p$ the collection of all learnable parameters of the model, $\theta=(\varphi,\psi,\beta)$. 
To facilitate the learning of high frequency details of the geometry and radiance field, we exploit positional encoding \citep{mildenhall2020nerf} for the position $\vx$ and view direction $\vv$ in the geometry and radiance field. The influence of different positional encoding choices are presented in the supplementary.

Our data consists of a collection of images with camera parameters. From this data we extract pixel level data: for each pixel $p$ we have a triplet $(I_p,\vc_p,\vv_p)$, where $I_p\in\Real^3$ is its intensity (RGB color), $\vc_p\in\Real^3$ is its camera location, and $\vv_p\in\Real^3$ is the viewing direction (camera to pixel). 
Our training loss consists of two terms:
\begin{equation}\label{e:loss}
\gL(\theta) = \gL_{\text{RGB}}(\theta) + \lambda \gL_{\text{SDF}}(\varphi),\quad \text{where }\vspace{-5pt}
\end{equation}
\begin{equation}\label{e:losses}
    \gL_{\text{RGB}}(\theta)=\E_{p} \norm{I_p - \hat{I}_{\gS}(\vc_p,\vv_p)}_1,\ \ \ \text{and } \ \gL_{\text{SDF}}(\varphi) = \E_\vz \parr{\norm{\nabla d(\vz)}-1}^2,
\end{equation}
where $\gL_{\text{RGB}}$ is the color loss; $\norm{\cdot}_1$ denotes the $1$-norm, $\gS$ is computed with Algorithm \ref{alg:meta}, and $\hat{I}_\gS$ is the numerical approximation to the volume rendering integral in \eqref{e:numerical_int}; here we also incorporate the global feature in the radiance field, \ie, $L_i = L_\psi(\vx(s_i), \vn(s_i), \vv_p, \vz(\vx(s_i)))$. $\gL_{\text{SDF}}$ is the Eikonal loss encouraging $d$ to approximate a signed distance function \citep{gropp2020implicit}; the samples $\vz$ are taken to combine a single random uniform space point and a single point from $\gS$ for each pixel $p$. We train with batches of size 1024 pixels $p$. $\lambda$ is a hyper-parameter set to $0.1$ throughout the the experiments. Further implementation details are provided in the supplementary. \vspace{-5pt}

%-------------------------------------------------------------------------

\begin{figure}[t]\hspace{-12pt}
    \includegraphics[width=\textwidth]{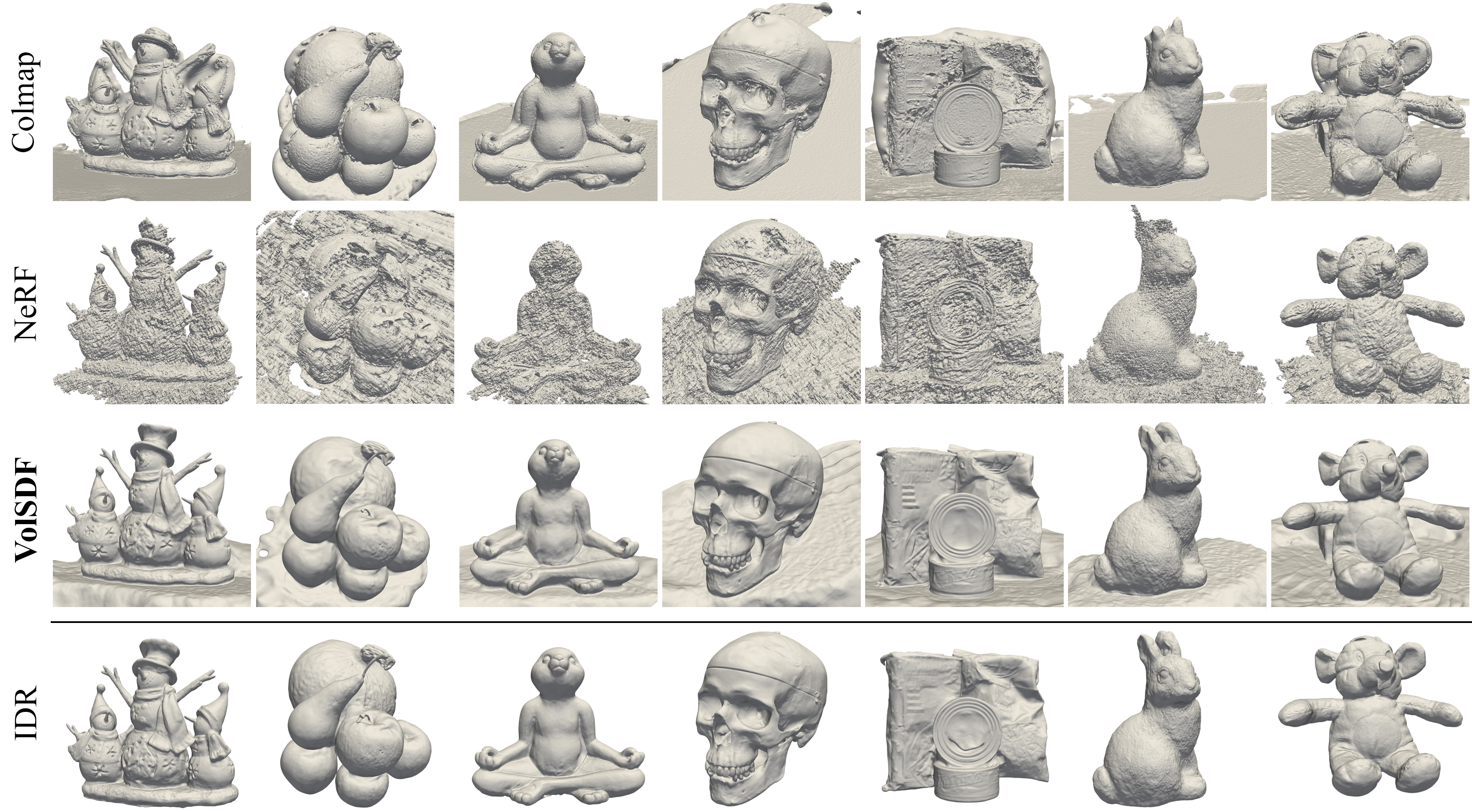}
    \caption{Qualitative results for reconstructed geometries of objects from the DTU dataset.} \label{fig:dtu_res}
\end{figure}
\begin{table}[h]
    \hspace{-8pt}
    \centering
    \small
    %\begin{tabular}
       \begin{adjustbox}{width=\linewidth,center}
        % \aboverulesep=0ex
        % \belowrulesep=0ex
        \centering
        \renewcommand{\arraystretch}{1.5}
        \begin{tabular}[t]{@{\hskip0.0pt}c@{\hskip5.0pt}l@{\hskip6.0pt}c@{\hskip5.0pt}c@{\hskip5.0pt}c@{\hskip5.0pt}c@{\hskip5.0pt}c@{\hskip5.0pt}c@{\hskip5.0pt}c@{\hskip5.0pt}c@{\hskip5.0pt}c@{\hskip5.0pt}c@{\hskip5.0pt}c@{\hskip5.0pt}c@{\hskip5.0pt}c@{\hskip5.0pt}c@{\hskip5.0pt}c@{\hskip5.0pt}c@{\hskip0.0pt}}
           % \hline
            \toprule
            & Scan & 24 & 37 & 40 & 55 & 63 & 65 & 69 & 83 & 97 & 105 & 106 & 110 & 114 & 118 & 122 & \textbf{Mean}\\
        %  \midrule 
         \toprule
        %  \hline
         %\parbox[t]{3mm}{
         \multirow{5}{*}{\rotatebox[origin=c]{90}{Chamfer Distance}}%} 
         & \textbf{IDR} & $1.63$ & $1.87$ & $0.63$ & $0.48$ & $1.04$ & $0.79$ & $0.77$ & $1.33$ & $1.16$ & $0.76$ & $0.67$ & $0.90$ & $0.42$ & $0.51$ & $0.53$ & $0.90$ \\ 
         %\cline{2-18}
         & \textbf{colmap$_{7}$} & $0.45$ & $0.91$ & $0.37$ & $0.37$ & $0.90$ & $1.00$ & $0.54$ & $1.22$ & $1.08$ & $0.64$ & $0.48$ & $0.59$ & $0.32$ & $0.45$ & $0.43$ & $0.65$ \\
         \cline{2-18}
         & \textbf{colmap$_{0}$} & $\mathbf{0.81}$ & $2.05$ & $\mathbf{0.73}$ & $1.22$ & $1.79$ & $1.58$ & $1.02$ & $3.05$ & $1.40$ & $2.05$ & $1.00$ & $1.32$ & $0.49$ & $0.78$ & $1.17$ & $1.36$ \\
         & \textbf{NeRF} & $1.92$ & $1.73$ & $1.92$ & $0.80$ & $3.41$ & $1.39$ & $1.51$ & $5.44$ & $2.04$ & $1.10$ & $1.01$ & $2.88$ & $0.91$ & $1.00$ & $0.79$ & $1.89$ \\
         & \textbf{VolSDF} & $1.14$ & $\mathbf{1.26}$ & $0.81$ & $\mathbf{0.49}$ & $\mathbf{1.25}$ & $\mathbf{0.70}$ & $\mathbf{0.72}$ & $\mathbf{1.29}$ & $\mathbf{1.18}$ & $\mathbf{0.70}$ & $\mathbf{0.66}$ & $\mathbf{1.08}$ & $\mathbf{0.42}$ & $\mathbf{0.61}$ & $\mathbf{0.55}$ & $\mathbf{0.86}$  \\
         \midrule
         \parbox[t]{3mm}{\multirow{2}{*}{\rotatebox[origin=c]{90}{PSNR}}} 
         & \textbf{NeRF} & $26.24$ & $25.74$ & $26.79$ & $27.57$ & $31.96$ & $31.50$ & $29.58$ & $32.78$ & $28.35$ & $32.08$ & $33.49$ & $31.54$ & $31.0$ & $35.59$ & $35.51$ & $30.65$ \\
         & \textbf{VolSDF} & $26.28$ & $25.61$ & $26.55$ & $26.76$ & $31.57$ & $31.5$ & $29.38$ & $33.23$ & $28.03$ & $32.13$ & $33.16$ & $31.49$ & $30.33$ & $34.9$ & $34.75$ & $30.38$ \\
         
        \bottomrule
              \end{tabular}
        \end{adjustbox}
        % \vspace{3pt}
      
    %\end{tabular}
    \caption{
     Quantitative results for the DTU dataset. \vspace{-17pt}}
    \label{tab:multiview_reconstruction}
\end{table}

\section{Experiments}\vspace{-5pt}
We evaluate our method on the challenging task of multiview 3D surface reconstruction. We use two datasets: DTU \citep{jensen2014large} and BlendedMVS \citep{yao2020blendedmvs}, both containing real objects with different materials that are captured from multiple views. In Section \ref{s:recon} we show qualitative and quantitative 3D surface reconstruction results of VolSDF, comparing favorably to relevant baselines. In Section \ref{ss:dis} we demonstrate that, in contrast to NeRF \citep{mildenhall2020nerf}, our model is able to successfully disentangle the  geometry and appearance of the captured objects.

\subsection{Multi-view 3D reconstruction}\label{s:recon}
\textbf{DTU}
The DTU \citep{jensen2014large} dataset contains multi-view image ($49$ or $64$) of different objects with fixed camera and lighting parameters. We evaluate our method on the $15$ scans that were selected by \cite{yariv2020multiview}. We compare our surface accuracy using the Chamfer $l_1$ loss (measured in mm) to COLMAP$_0$ (which is watertight reconstruction; COLMAP$_7$ is not watertight and provided only for reference) \citep{schoenberger2016mvs}, NeRF~\citep{mildenhall2020nerf} and IDR~\citep{yariv2020multiview}, where for fair comparison with IDR we only evaluate the reconstruction inside the visual hull of the objects (defined by the segmentation masks of \cite{yariv2020multiview}). 
We further evaluate the PSNR of our rendering compared to \cite{mildenhall2020nerf}. 
Quantitative results are presented in Table \ref{tab:multiview_reconstruction}. It can be observed that our method is on par with IDR (that uses object masks for all images) and outperforms NeRF and COLMAP in terms of reconstruction accuracy. Our rendering quality is comparable to NeRF's.

\begin{figure}\hspace{-12pt}
    \includegraphics[width=\textwidth]{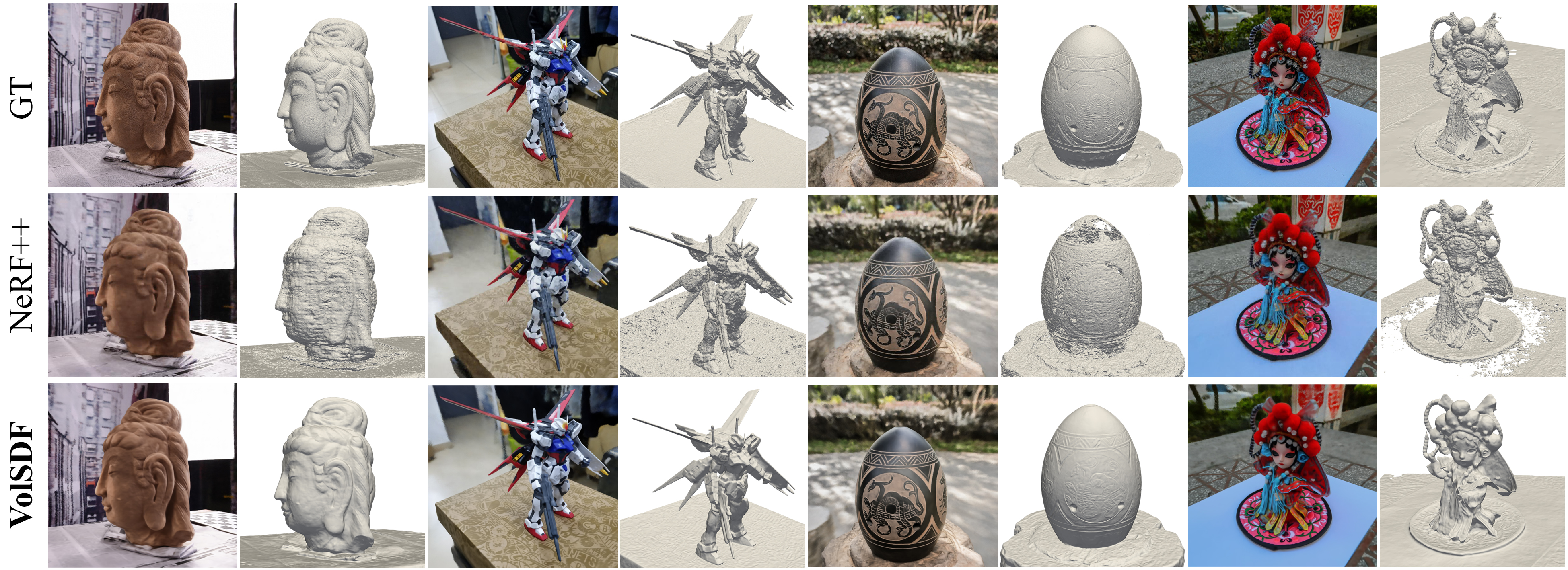}
    \caption{Qualitative results sampled from the BlendedMVS dataset. % From left to right: Head, Transformers, Egg and Dancer.
    For each scan we present a visualization of a rendered image and the reconstructed 3D geometry. \vspace{-10pt}} \label{fig:blended_res}
\end{figure}

\begin{table}[t]
    \hspace{-8pt}
    \setlength\tabcolsep{6pt} % default value: 6pt

    %\begin{tabular}{c}
        \begin{adjustbox}{width=\textwidth,center}
        \aboverulesep=0ex
        \belowrulesep=0ex
        \renewcommand{\arraystretch}{1.5}
        \begin{tabular}[t]{@{\hskip0.0pt}llcccccccccc@{\hskip0.0pt}}
            \toprule
            & Scene & Doll & Egg & Head & Angel & Bull & Robot & Dog & Bread & Camera & \textbf{Mean}\\
            %            1 & 7 & 8 & 12 & 13 & 15 & 16 & 17 & 19 & \textbf{Mean}\\
        %  \midrule 
         \toprule
        %  \hline
        
        %\parbox[t]{3mm}{{\rotatebox[origin=c]{90}{Chamfer $l_1$}}}  &
        \multirow{1}{*}{Chamfer $l_1$} & %\parbox[c]{2.0cm}{\textbf{Our\\Improvement\\(percentage)}}
        Our Improvement ($\%$)
        & $54.0$ & $91.2$ & $24.3$ & $75.1$ & $60.7$ & $27.2$ & $47.7$ & $34.6$ & $51.8$ & $51.8$ \\ 
         \cline{2-12}
        
         \midrule
         %\parbox[t]{3mm}{\multirow{2}{*}{\rotatebox[origin=c]{90}{PSNR}}} 
         \multirow{2}{*}{PSNR}
         & \textbf{NeRF++} & $26.95$ & $27.34$ & $27.23$ & $30.06$ & $26.65$ & $26.73$ & $27.90$ & $31.68$ & $23.44$ & $27.55$  \\
         & \textbf{VolSDF} & $25.49$ & $27.18$ & $26.36$ & $29.79$ & $26.01$ & $26.03$ & $28.65$ & $31.24$ & $22.97$ & $27.08$  \\
         
        \bottomrule
              \end{tabular}
        \end{adjustbox}
        \vspace{3pt}
      
   % \end{tabular}
    \caption{
     Quantitative results for the BlendedMVS dataset. %Our method is compared to \cite{kaizhang2020} in terms of rendering (PSNR) and geometry (Chamfer $l_1$).  Since BlendedMVS do not supply metric units we show our Chamfer $l_1$ improvement (in percentages) over NeRF++.  
     \vspace{-17pt}}
    \label{tab:blended}
\end{table}

\textbf{BlendedMVS} The BlendedMVS dataset \citep{yao2020blendedmvs} contains a large collection of $113$ scenes captured from multiple views. It supplies high quality ground truth 3D models for evaluation, various camera configurations, and a variety of indoor/outdoor real environments. We selected $9$ different scenes and used our method to reconstruct the surface of each object. In contrast to the DTU dataset, BlendedMVS scenes have complex backgrounds. Therefore we use NeRF++ \cite{kaizhang2020} as a baseline for this dataset. In Table~\ref{tab:blended} we present our results compared to NeRF++. Qualitative comparisons are presented in Fig.~\ref{fig:blended_res}; since the units are unknown in this case we present relative improvement of 
\begin{wrapfigure}[9]{r}{0.31\textwidth}
   \centering
    \vspace{-3pt}
    \includegraphics[width=0.3\textwidth]{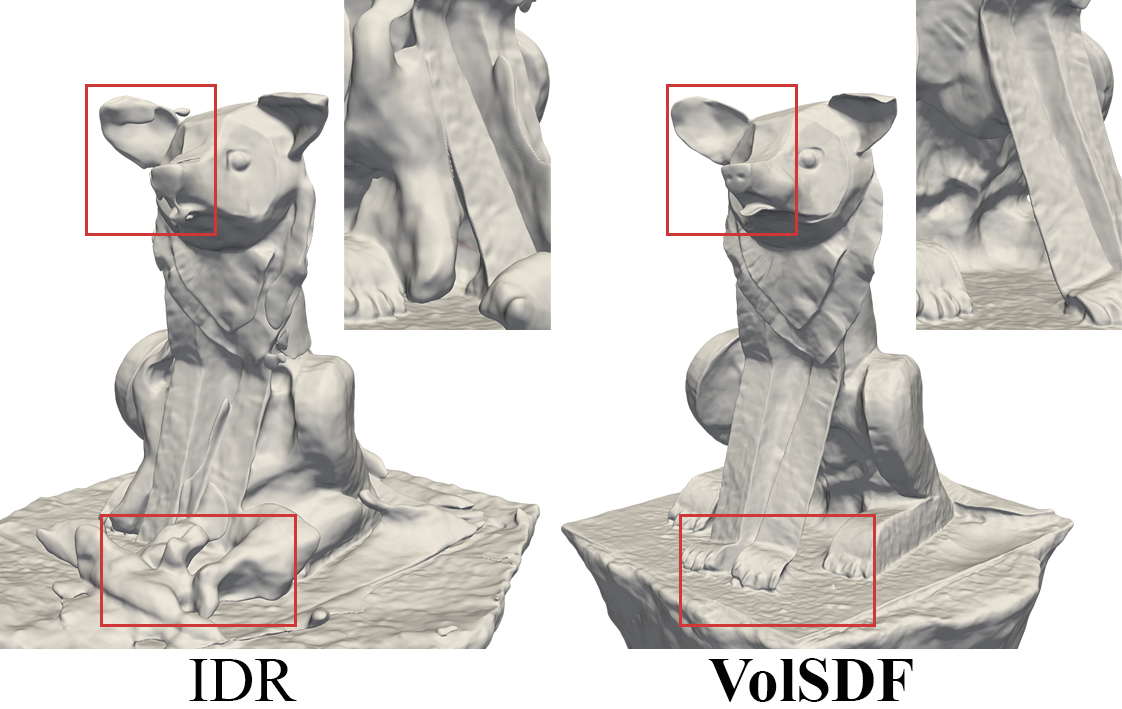}
 \caption{IDR extraneous parts.}
\label{fig:ba_gt}
\end{wrapfigure}
Chamfer distance (in $\%$) compared to NeRF. Also in this case, we improve NeRF reconstructions considerably, while being on-par in terms of the rendering quality (PSNR).

\textbf{Comparison to \cite{yariv2020multiview}}
 IDR \cite{yariv2020multiview} is the state of the art 3D surface reconstruction method using implicit representation. However, it suffers from two drawbacks: first, it requires object masks for training, which is a strong supervision signal. 
 Second, since it sets the pixel color based only on the single point of intersection of the corresponding viewing ray, it is more pruned to local minima that sometimes appear in the form of extraneous surface parts. Figure \ref{fig:ba_gt}  compares the same scene trained with IDR with the addition of ground truth masks, and VolSDF trained without masks. Note that IDR introduces some extraneous surface parts (\eg, in marked red), while VolSDF provides a more faithful result in this case.

\subsection{Disentanglement of geometry and appearance}\label{ss:dis}
We have tested the disentanglement of scenes to geometry (density) and appearance (radiance field) by switching the radiance fields of two trained scenes. 
For VolSDF we switched $L_\psi$. For NeRF \cite{mildenhall2020nerf} we note that the radiance field is computed as $L_\psi(\vz,\vv)$, where $L_\psi$ is a fully connected network with one hidden layer (of width 128 and ReLU activation) and $\vz$ is a feature vector. We tested two versions of NeRF disentanglement: First, by switching the original radiance fields $L_\psi$ of trained NeRF networks. Second, by switching the radiance fields of trained NeRF models with an identical radiance field model to ours, namely $L_\psi(\vx,\vn,\vv,\vz)$. 
As shown in Figure \ref{fig:dis} both versions of NeRF fail to produce a correct disentanglement in these scenes, while VolSDF successfully switches the materials of the two objects. We attribute this to the specific inductive bias injected with the use of the density in \eqref{e:density}.

\begin{figure}[h]%\hspace{-12pt}
\centering
    \begin{tabular}{ccc}
    \includegraphics[width=0.30\textwidth]{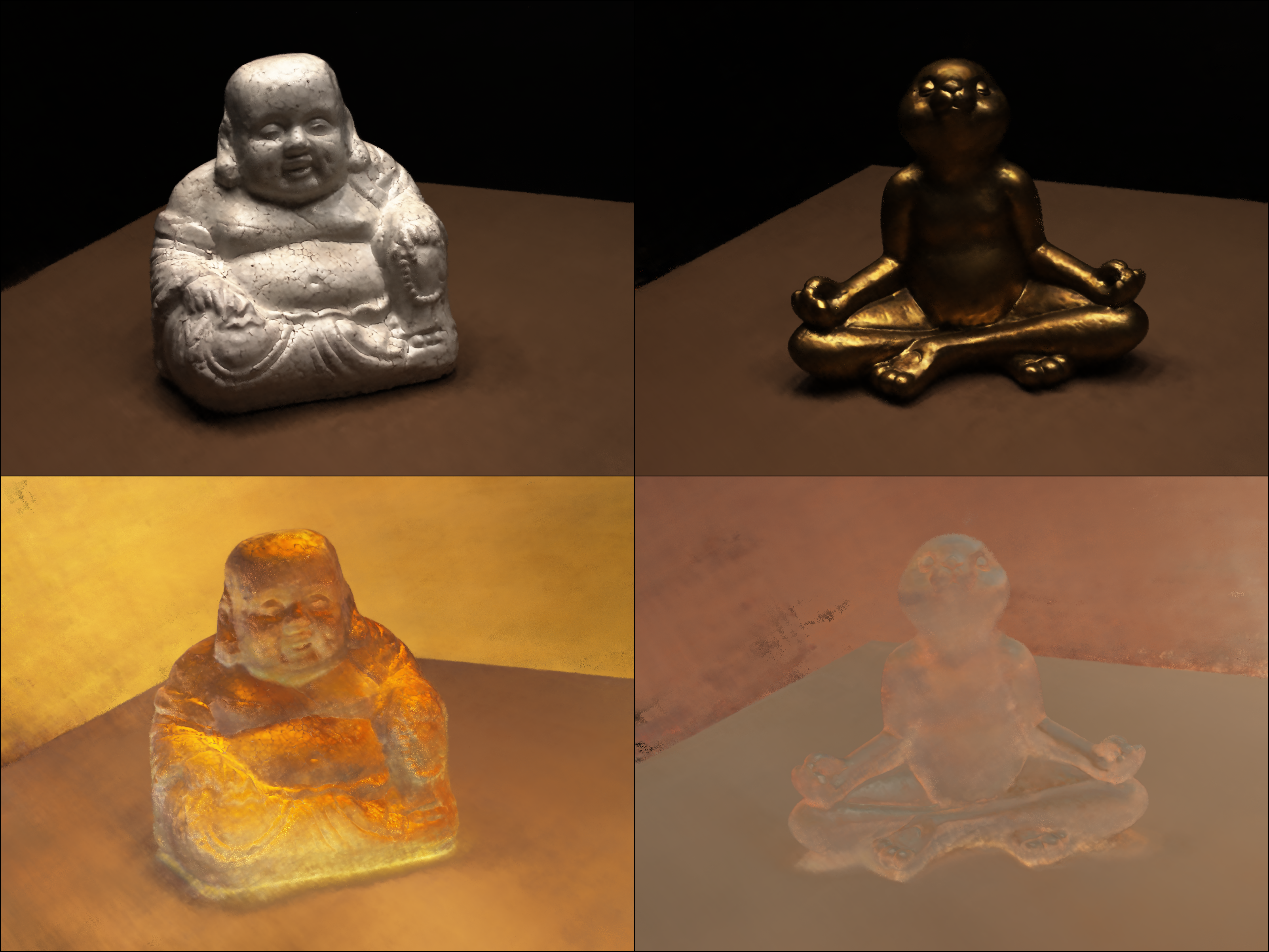} &
    \includegraphics[width=0.30\textwidth]{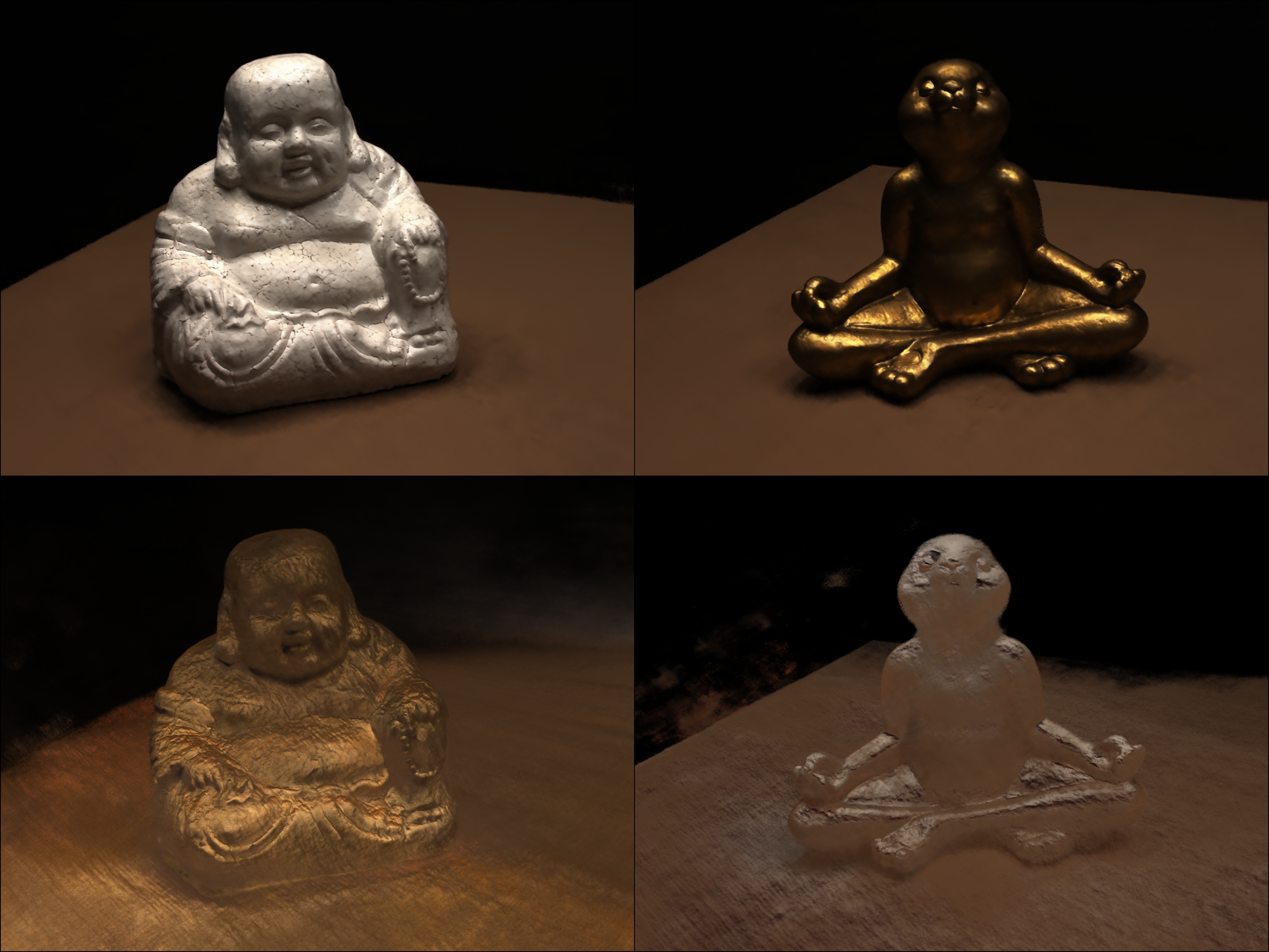} &
     \includegraphics[width=0.30\textwidth]{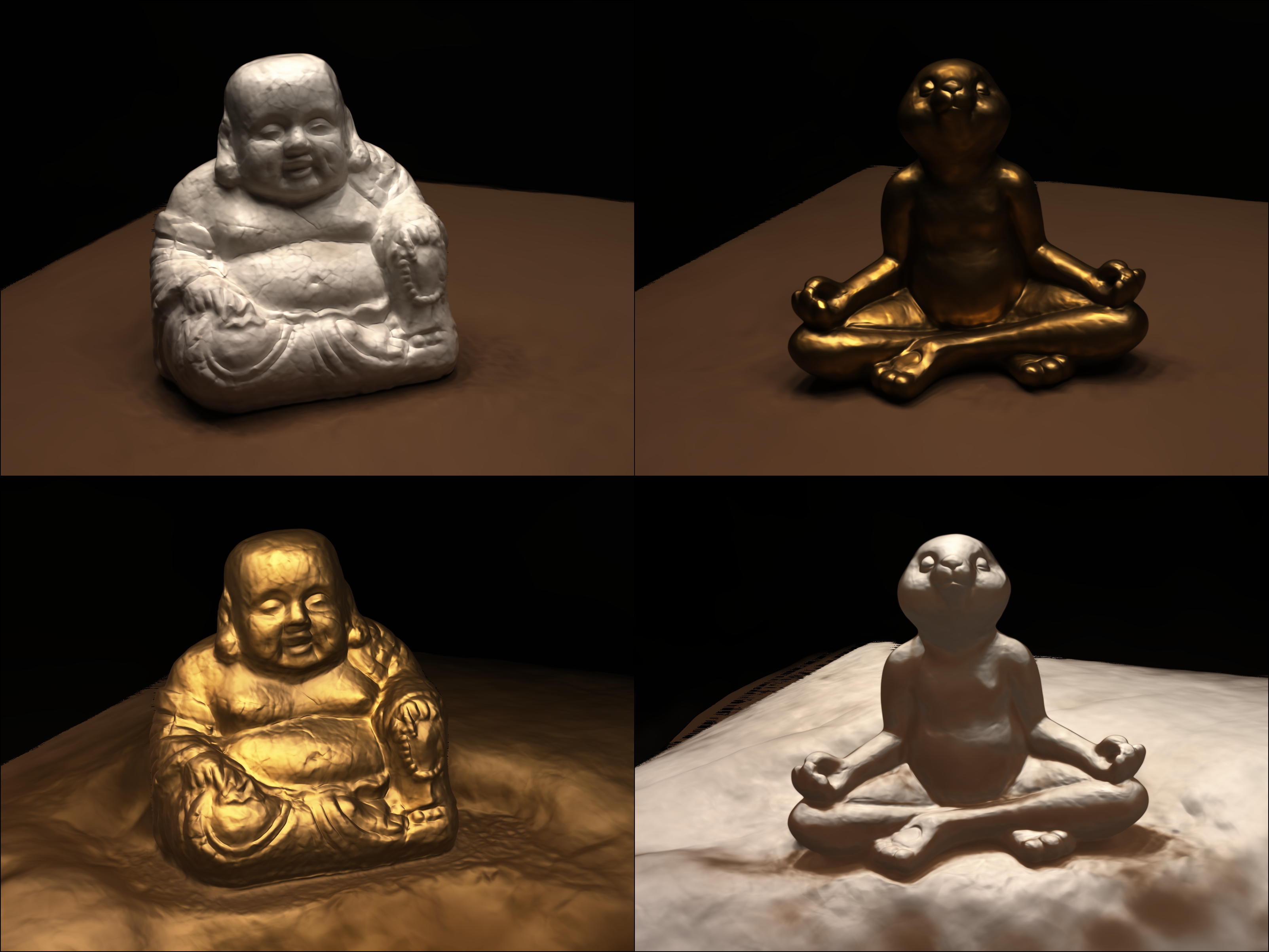} \\
    NeRF & NeRF \textit{with normal} & \textbf{VolSDF}   \vspace{-5pt} \end{tabular}
    \caption{Geometry and radiance disentanglement is physically plausible with VolSDF.\vspace{-10pt}} \label{fig:dis}
\end{figure}

\section{Conclusions}\vspace{-5pt}
We introduce VolSDF, a volume rendering framework for implicit neural surfaces.  We represent the volume density as a transformed version of the signed distance function to the learned surface geometry. This seemingly simple definition provides a useful inductive bias, allowing disentanglement of geometry (\ie, density) and radiance field, and improves the geometry approximation over previous neural volume rendering techniques. Furthermore, it allows to bound the opacity approximation error leading to high fidelity sampling of the volume rendering integral. 

Some limitations of our method present interesting future research opportunities. First, although working well in practice, we do not have a proof of correctness for the sampling algorithm. We believe providing such a proof, or finding a version of this algorithm that has a proof would be a useful contribution. In general, we believe working with bounds in volume rendering could improve learning and disentanglement and push the field forward. Second, representing non-watertight manifolds and/or manifolds with boundaries, such as zero thickness surfaces, is not possible with an SDF. Generalizations such as multiple implicits and unsigned fields could be proven valuable. Third, our current formulation assumes homogeneous density; extending it to more general density models would allow representing a broader class of geometries. Fourth, now that high quality geometries can be learned in an unsupervised manner it will be interesting to learn dynamic geometries and shape spaces directly from collections of images. Lastly, although we don't see immediate negative societal impact of our work, we do note that accurate geometry reconstruction from images can be used for malice purposes.

\section*{Acknowledgments}
LY is supported by the European Research Council (ERC Consolidator Grant, "LiftMatch" 771136), the Israel Science Foundation (Grant No. 1830/17), and Carolito Stiftung (WAIC).
YK is supported by the U.S.- Israel Binational Science Foundation, grant number 2018680, Carolito Stiftung (WAIC), and by the Kahn foundation.

% \newpage
%\input{tables/dtu_optional}

{\small
\bibliographystyle{abbrv} %options:abbrv, apalike

\bibliography{vol_rendering_of_implicits}

\begin{thebibliography}{10}

\bibitem{atzmon2019sal}
M.~Atzmon and Y.~Lipman.
\newblock Sal: Sign agnostic learning of shapes from raw data.
\newblock {\em arXiv preprint arXiv:1911.10414}, 2019.

\bibitem{Barnes:2009:PAR}
C.~Barnes, E.~Shechtman, A.~Finkelstein, and D.~B. Goldman.
\newblock {PatchMatch}: A randomized correspondence algorithm for structural
  image editing.
\newblock {\em ACM Transactions on Graphics (Proc. SIGGRAPH)}, 28(3), Aug.
  2009.

\bibitem{boss2020nerd}
M.~Boss, R.~Braun, V.~Jampani, J.~T. Barron, C.~Liu, and H.~P. Lensch.
\newblock Nerd: Neural reflectance decomposition from image collections, 2020.

\bibitem{937544}
A.~Broadhurst, T.~Drummond, and R.~Cipolla.
\newblock A probabilistic framework for space carving.
\newblock In {\em Proceedings Eighth IEEE International Conference on Computer
  Vision. ICCV 2001}, volume~1, pages 388--393 vol.1, 2001.

\bibitem{dai2017bundlefusion}
A.~Dai, M.~Nie{\ss}ner, M.~Zollh{\"o}fer, S.~Izadi, and C.~Theobalt.
\newblock Bundlefusion: Real-time globally consistent 3d reconstruction using
  on-the-fly surface reintegration.
\newblock {\em ACM Transactions on Graphics (ToG)}, 36(4):1, 2017.

\bibitem{de1999poxels}
J.~S. De~Bonet and P.~Viola.
\newblock Poxels: Probabilistic voxelized volume reconstruction.
\newblock In {\em Proceedings of the IEEE International Conference on Computer
  Vision. ICCV 1999}, 1999.

\bibitem{dellaert2020neural}
F.~Dellaert and L.~Yen-Chen.
\newblock Neural volume rendering: Nerf and beyond.
\newblock {\em arXiv preprint arXiv:2101.05204}, 2020.

\bibitem{5226635}
Y.~Furukawa and J.~Ponce.
\newblock Accurate, dense, and robust multiview stereopsis.
\newblock {\em IEEE Transactions on Pattern Analysis and Machine Intelligence},
  32(8):1362--1376, 2010.

\bibitem{galliani2015massively}
S.~Galliani, K.~Lasinger, and K.~Schindler.
\newblock Massively parallel multiview stereopsis by surface normal diffusion.
\newblock In {\em Proceedings of the IEEE International Conference on Computer
  Vision}, pages 873--881, 2015.

\bibitem{genova2019learning}
K.~Genova, F.~Cole, D.~Vlasic, A.~Sarna, W.~T. Freeman, and T.~Funkhouser.
\newblock Learning shape templates with structured implicit functions.
\newblock In {\em Proceedings of the IEEE/CVF International Conference on
  Computer Vision}, pages 7154--7164, 2019.

\bibitem{gropp2020implicit}
A.~Gropp, L.~Yariv, N.~Haim, M.~Atzmon, and Y.~Lipman.
\newblock Implicit geometric regularization for learning shapes.
\newblock {\em arXiv preprint arXiv:2002.10099}, 2020.

\bibitem{izadi2011kinectfusion}
S.~Izadi, D.~Kim, O.~Hilliges, D.~Molyneaux, R.~Newcombe, P.~Kohli, J.~Shotton,
  S.~Hodges, D.~Freeman, A.~Davison, et~al.
\newblock Kinectfusion: real-time 3d reconstruction and interaction using a
  moving depth camera.
\newblock In {\em Proceedings of the 24th annual ACM symposium on User
  interface software and technology}, pages 559--568, 2011.

\bibitem{jensen2014large}
R.~Jensen, A.~Dahl, G.~Vogiatzis, E.~Tola, and H.~Aan{\ae}s.
\newblock Large scale multi-view stereopsis evaluation.
\newblock In {\em 2014 IEEE Conference on Computer Vision and Pattern
  Recognition}, pages 406--413. IEEE, 2014.

\bibitem{SGP:SGP06:061-070}
M.~Kazhdan, M.~Bolitho, and H.~Hoppe.
\newblock {Poisson Surface Reconstruction}.
\newblock In A.~Sheffer and K.~Polthier, editors, {\em Symposium on Geometry
  Processing}. The Eurographics Association, 2006.

\bibitem{Kellnhofer:2021:nlr}
P.~Kellnhofer, L.~Jebe, A.~Jones, R.~Spicer, K.~Pulli, and G.~Wetzstein.
\newblock Neural lumigraph rendering.
\newblock In {\em CVPR}, 2021.

\bibitem{kingma2014adam}
D.~P. Kingma and J.~Ba.
\newblock Adam: A method for stochastic optimization.
\newblock {\em arXiv preprint arXiv:1412.6980}, 2014.

\bibitem{lipman2021phase}
Y.~Lipman.
\newblock Phase transitions, distance functions, and implicit neural
  representations.
\newblock {\em arXiv preprint arXiv:2106.07689}, 2021.

\bibitem{liu2020neural}
L.~Liu, J.~Gu, K.~Zaw~Lin, T.-S. Chua, and C.~Theobalt.
\newblock Neural sparse voxel fields.
\newblock {\em Advances in Neural Information Processing Systems}, 33, 2020.

\bibitem{liu2019dist}
S.~Liu, Y.~Zhang, S.~Peng, B.~Shi, M.~Pollefeys, and Z.~Cui.
\newblock Dist: Rendering deep implicit signed distance function with
  differentiable sphere tracing.
\newblock {\em arXiv preprint arXiv:1911.13225}, 2019.

\bibitem{lombardi2019neural}
S.~Lombardi, T.~Simon, J.~Saragih, G.~Schwartz, A.~Lehrmann, and Y.~Sheikh.
\newblock Neural volumes: Learning dynamic renderable volumes from images.
\newblock {\em arXiv preprint arXiv:1906.07751}, 2019.

\bibitem{lorensen1987marching}
W.~E. Lorensen and H.~E. Cline.
\newblock Marching cubes: A high resolution 3d surface construction algorithm.
\newblock {\em ACM siggraph computer graphics}, 21(4):163--169, 1987.

\bibitem{max1995optical}
N.~Max.
\newblock Optical models for direct volume rendering.
\newblock {\em IEEE Transactions on Visualization and Computer Graphics},
  1(2):99--108, 1995.

\bibitem{mescheder2019occupancy}
L.~Mescheder, M.~Oechsle, M.~Niemeyer, S.~Nowozin, and A.~Geiger.
\newblock Occupancy networks: Learning 3d reconstruction in function space.
\newblock In {\em Proceedings of the IEEE Conference on Computer Vision and
  Pattern Recognition}, pages 4460--4470, 2019.

\bibitem{michalkiewicz2019implicit}
M.~Michalkiewicz, J.~K. Pontes, D.~Jack, M.~Baktashmotlagh, and A.~Eriksson.
\newblock Implicit surface representations as layers in neural networks.
\newblock In {\em Proceedings of the IEEE/CVF International Conference on
  Computer Vision}, pages 4743--4752, 2019.

\bibitem{mildenhall2020nerf}
B.~Mildenhall, P.~P. Srinivasan, M.~Tancik, J.~T. Barron, R.~Ramamoorthi, and
  R.~Ng.
\newblock Nerf: Representing scenes as neural radiance fields for view
  synthesis.
\newblock In {\em ECCV}, 2020.

\bibitem{niemeyer2019differentiable}
M.~Niemeyer, L.~Mescheder, M.~Oechsle, and A.~Geiger.
\newblock Differentiable volumetric rendering: Learning implicit 3d
  representations without 3d supervision.
\newblock {\em arXiv preprint arXiv:1912.07372}, 2019.

\bibitem{niemeyer2019occupancy}
M.~Niemeyer, L.~Mescheder, M.~Oechsle, and A.~Geiger.
\newblock Occupancy flow: 4d reconstruction by learning particle dynamics.
\newblock In {\em Proceedings of the IEEE/CVF International Conference on
  Computer Vision}, pages 5379--5389, 2019.

\bibitem{10.1145/2508363.2508374}
M.~Nie\ss{}ner, M.~Zollh\"{o}fer, S.~Izadi, and M.~Stamminger.
\newblock Real-time 3d reconstruction at scale using voxel hashing.
\newblock {\em ACM Trans. Graph.}, 32(6), Nov. 2013.

\bibitem{oechsle2019texture}
M.~Oechsle, L.~Mescheder, M.~Niemeyer, T.~Strauss, and A.~Geiger.
\newblock Texture fields: Learning texture representations in function space.
\newblock In {\em Proceedings of the IEEE/CVF International Conference on
  Computer Vision}, pages 4531--4540, 2019.

\bibitem{oechsle2020learning}
M.~Oechsle, M.~Niemeyer, L.~Mescheder, T.~Strauss, and A.~Geiger.
\newblock Learning implicit surface light fields.
\newblock {\em arXiv preprint arXiv:2003.12406}, 2020.

\bibitem{oechsle2021unisurf}
M.~Oechsle, S.~Peng, and A.~Geiger.
\newblock Unisurf: Unifying neural implicit surfaces and radiance fields for
  multi-view reconstruction.
\newblock {\em arXiv preprint arXiv:2104.10078}, 2021.

\bibitem{park2019deepsdf}
J.~J. Park, P.~Florence, J.~Straub, R.~Newcombe, and S.~Lovegrove.
\newblock Deepsdf: Learning continuous signed distance functions for shape
  representation.
\newblock In {\em Proceedings of the IEEE Conference on Computer Vision and
  Pattern Recognition}, pages 165--174, 2019.

\bibitem{paszke2017automatic}
A.~Paszke, S.~Gross, S.~Chintala, G.~Chanan, E.~Yang, Z.~DeVito, Z.~Lin,
  A.~Desmaison, L.~Antiga, and A.~Lerer.
\newblock Automatic differentiation in pytorch.
\newblock 2017.

\bibitem{peng2020convolutional}
S.~Peng, M.~Niemeyer, L.~Mescheder, M.~Pollefeys, and A.~Geiger.
\newblock Convolutional occupancy networks.
\newblock In A.~Vedaldi, H.~Bischof, T.~Brox, and J.-M. Frahm, editors, {\em
  Computer Vision -- ECCV 2020}, pages 523--540, Cham, 2020. Springer
  International Publishing.

\bibitem{saito2019pifu}
S.~Saito, Z.~Huang, R.~Natsume, S.~Morishima, A.~Kanazawa, and H.~Li.
\newblock Pifu: Pixel-aligned implicit function for high-resolution clothed
  human digitization.
\newblock In {\em Proceedings of the IEEE/CVF International Conference on
  Computer Vision}, pages 2304--2314, 2019.

\bibitem{schoenberger2016mvs}
J.~L. Sch\"{o}nberger, E.~Zheng, M.~Pollefeys, and J.-M. Frahm.
\newblock Pixelwise view selection for unstructured multi-view stereo.
\newblock In {\em European Conference on Computer Vision (ECCV)}, 2016.

\bibitem{seitz1999photorealistic}
S.~M. Seitz and C.~R. Dyer.
\newblock Photorealistic scene reconstruction by voxel coloring.
\newblock {\em International Journal of Computer Vision}, 35(2):151--173, 1999.

\bibitem{sitzmann2019scene}
V.~Sitzmann, M.~Zollh{\"o}fer, and G.~Wetzstein.
\newblock Scene representation networks: Continuous 3d-structure-aware neural
  scene representations.
\newblock In {\em Advances in Neural Information Processing Systems}, pages
  1119--1130, 2019.

\bibitem{nerv2021}
P.~P. Srinivasan, B.~Deng, X.~Zhang, M.~Tancik, B.~Mildenhall, and J.~T.
  Barron.
\newblock Nerv: Neural reflectance and visibility fields for relighting and
  view synthesis.
\newblock In {\em CVPR}, 2021.

\bibitem{takikawa2021neural}
T.~Takikawa, J.~Litalien, K.~Yin, K.~Kreis, C.~Loop, D.~Nowrouzezahrai,
  A.~Jacobson, M.~McGuire, and S.~Fidler.
\newblock Neural geometric level of detail: Real-time rendering with implicit
  3d shapes.
\newblock {\em arXiv preprint arXiv:2101.10994}, 2021.

\bibitem{xu2019disn}
Q.~Xu, W.~Wang, D.~Ceylan, R.~Mech, and U.~Neumann.
\newblock Disn: Deep implicit surface network for high-quality single-view 3d
  reconstruction.
\newblock {\em arXiv preprint arXiv:1905.10711}, 2019.

\bibitem{yao2020blendedmvs}
Y.~Yao, Z.~Luo, S.~Li, J.~Zhang, Y.~Ren, L.~Zhou, T.~Fang, and L.~Quan.
\newblock Blendedmvs: A large-scale dataset for generalized multi-view stereo
  networks.
\newblock {\em Computer Vision and Pattern Recognition (CVPR)}, 2020.

\bibitem{yariv2020multiview}
L.~Yariv, Y.~Kasten, D.~Moran, M.~Galun, M.~Atzmon, B.~Ronen, and Y.~Lipman.
\newblock Multiview neural surface reconstruction by disentangling geometry and
  appearance.
\newblock {\em Advances in Neural Information Processing Systems}, 33, 2020.

\bibitem{lin2020nerfpytorch}
L.~Yen-Chen.
\newblock Nerf-pytorch.
\newblock \url{https://github.com/yenchenlin/nerf-pytorch/}, 2020.

\bibitem{kaizhang2020}
K.~Zhang, G.~Riegler, N.~Snavely, and V.~Koltun.
\newblock Nerf++: Analyzing and improving neural radiance fields.
\newblock {\em arXiv:2010.07492}, 2020.

\end{thebibliography}
}

\appendix

\section*{\Large{Supplementary Material}}

\section{Additional results}

\subsection{Sampling ablation study} 
Figure \ref{fig:ablation} depicts an ablation study that we performed for evaluating the sampling algorithm, by replacing it with other sampling strategies. We compared with the following alternatives: Uniform stands for an uniform sampling of $256$ samples along each ray ; 2-networks denotes a hierarchical sampling using coarse and fine networks as suggested in \citep{mildenhall2020nerf}; our sampling algorithm as suggested in section \ref{s:sampling_algorith} where the maximal number of iterations is set to $1$ or $5$ (the choice in the paper). We note that using $1$ iteration resembles one level of sampling as in the hierarchical sampling of \citep{mildenhall2020nerf}.
%\yk{But they use another network, I am not sure I agree with this sentence}. \\ 

As can be seen in Figure \ref{fig:ablation} (see also reported Chamfer and PSNR scores) alternative sampling procedures lead to lower accuracy and some artifacts in the geometry (see \eg, head top, and nose areas) and rendering (see \eg, salt and pepper noise and over-smoothed areas).  

%It can be seen that the other sampling strategies result in over-smoothed surfaces,  occurring mostly if an inaccurate sampling misses the surface \ly{for example in the head or in the nose, also mention the rendering results with speckles artifact}.

\begin{figure}[h!]
    \includegraphics[width=\textwidth]{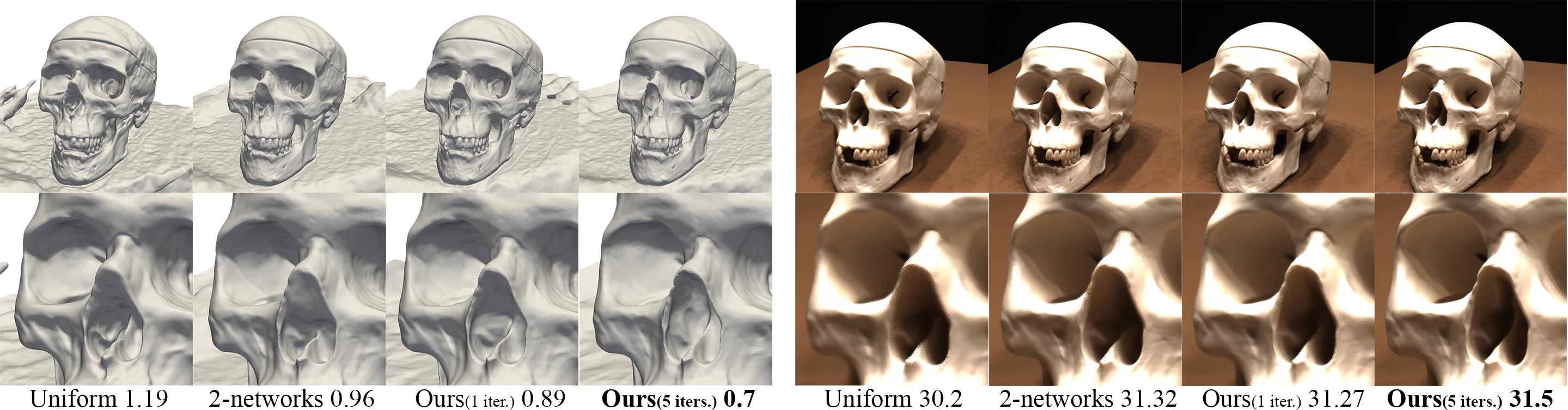}
    \caption{Sampling ablation study: the left side depicts the geometry reconstruction results with the corresponding Chamfer distances, whereas the right side presents rendering results with their corresponding PSNRs.} \label{fig:ablation}
\end{figure}

\subsection{Positional encoding ablation}\label{ss:pe_ablation}
%\ly{add reference in the main paper. Not sure if to call it ablation cause we checked only the PE on the geometry network and not in the rendering network. }
We perform an ablation study on the level of positional encoding used in the geometry network. We note that VolSDF use level 6 positional encoding, while NeRF use level 10. In Figure \ref{fig:pe_ablation} we show the DTU Bunny scene with positional encoding levels 6 and 10 for both NeRF and VolSDF; we report both PSNR for the rendered images and Chamfer distance to the learned surfaces. Note that higher positional encoding improves specular highlights and details of VolSDF but adds some undesired noise to the reconstructed surface.

\begin{figure}[h]
% \centering
    \hspace{-10pt}
    \begin{tabular}{cccccccc}
    \includegraphics[width=0.123\textwidth]{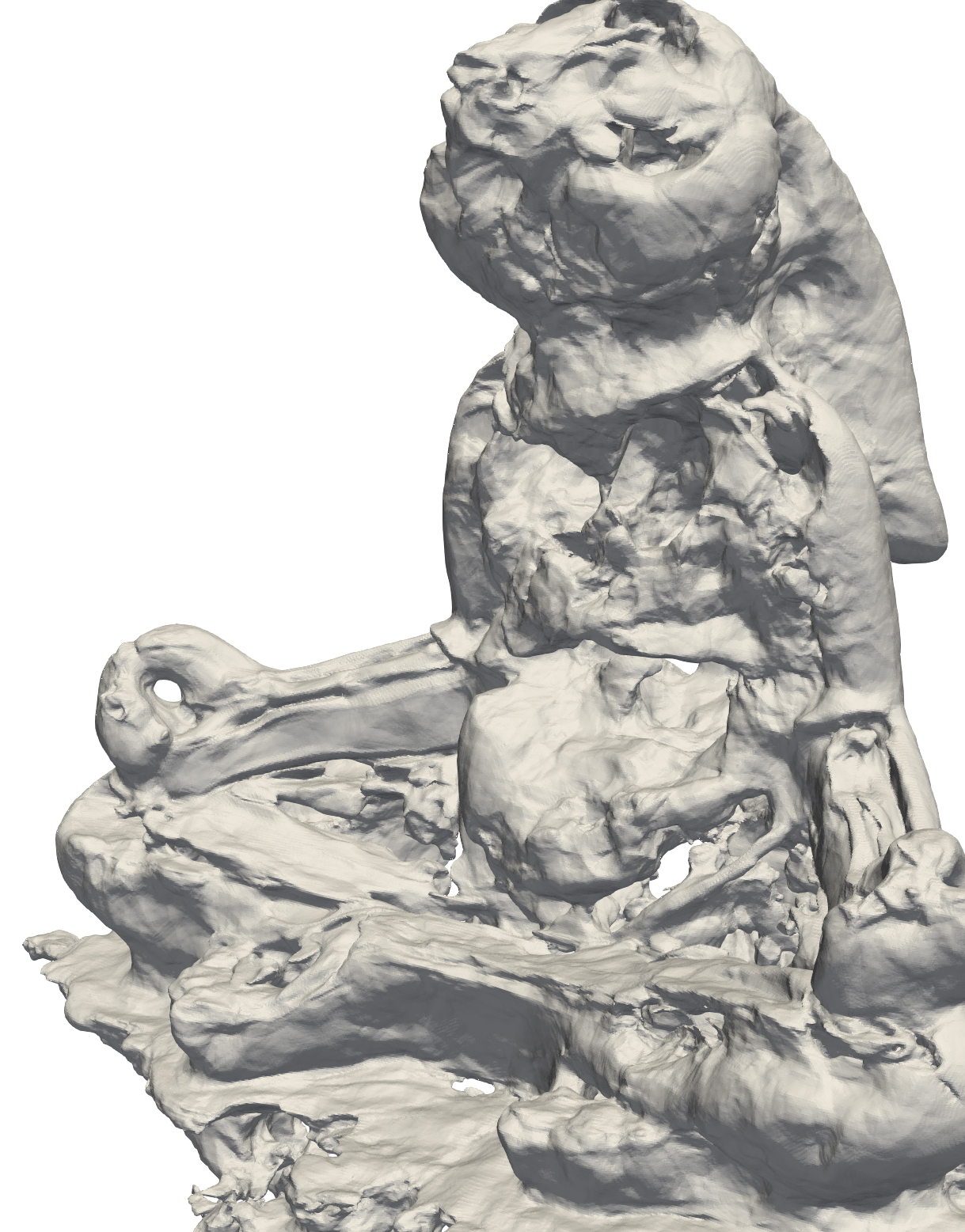} \hspace{-14pt} &
    \includegraphics[width=0.123\textwidth]{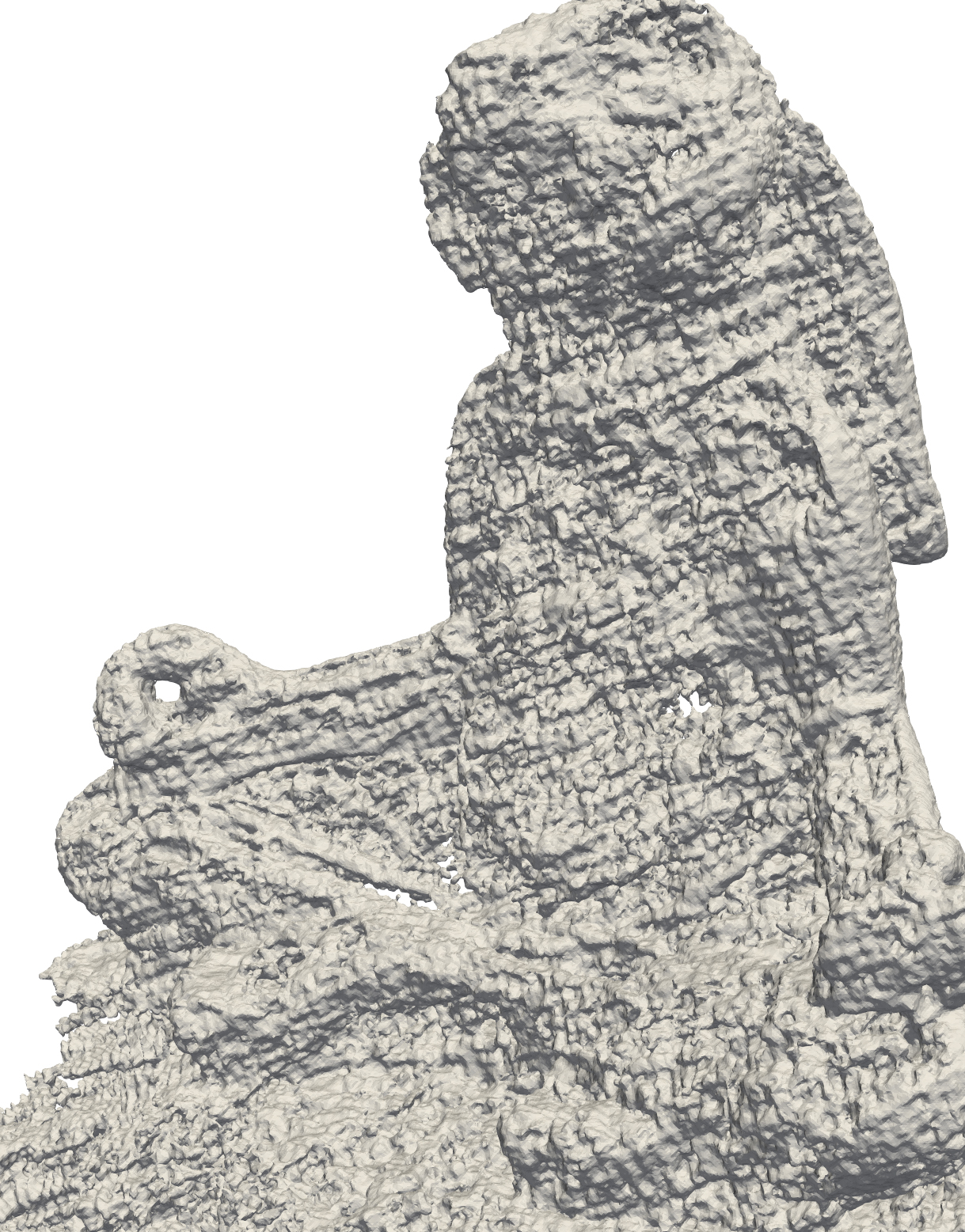} \hspace{-14pt} &
    \includegraphics[width=0.123\textwidth]{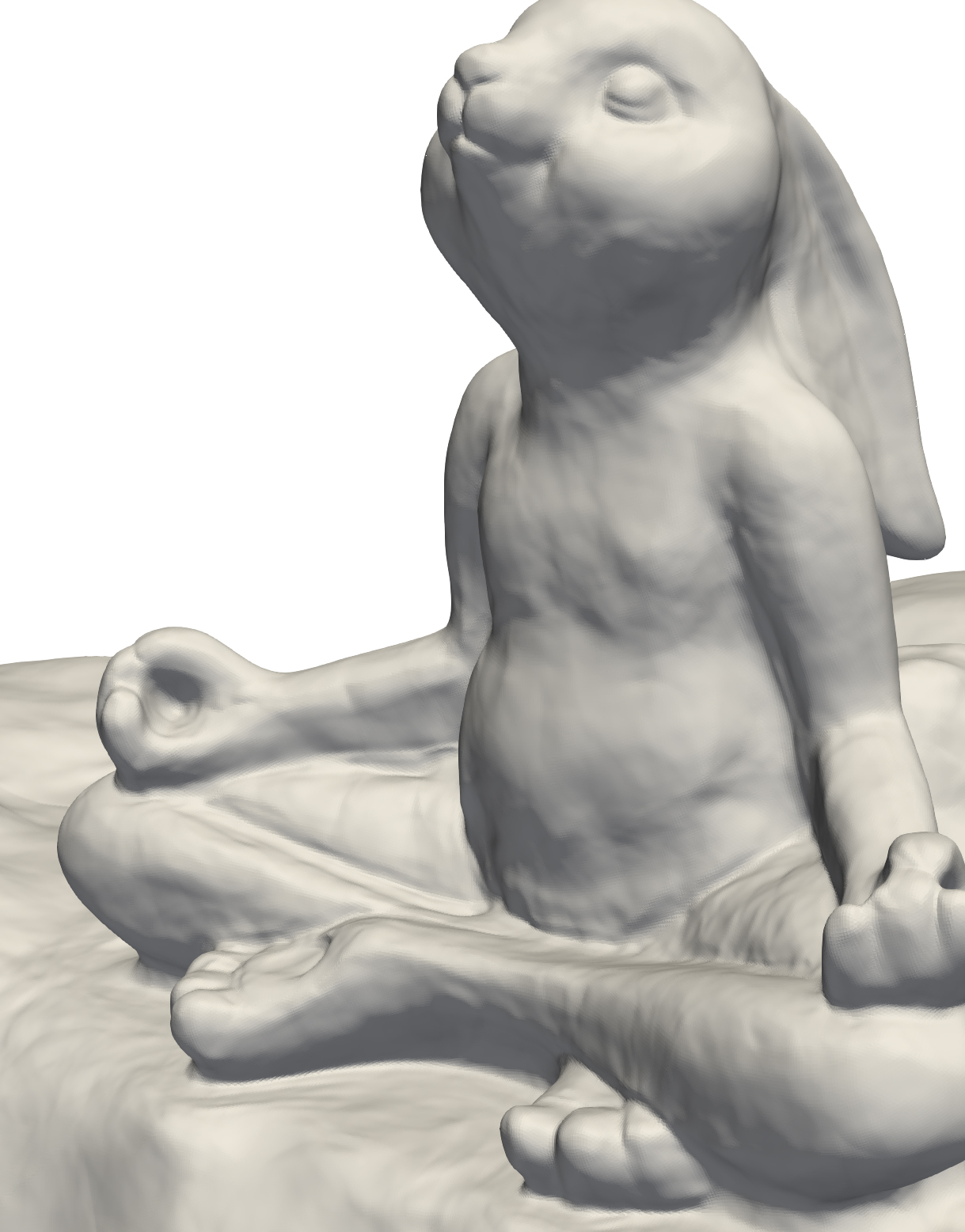} \hspace{-14pt} &
     \includegraphics[width=0.123\textwidth]{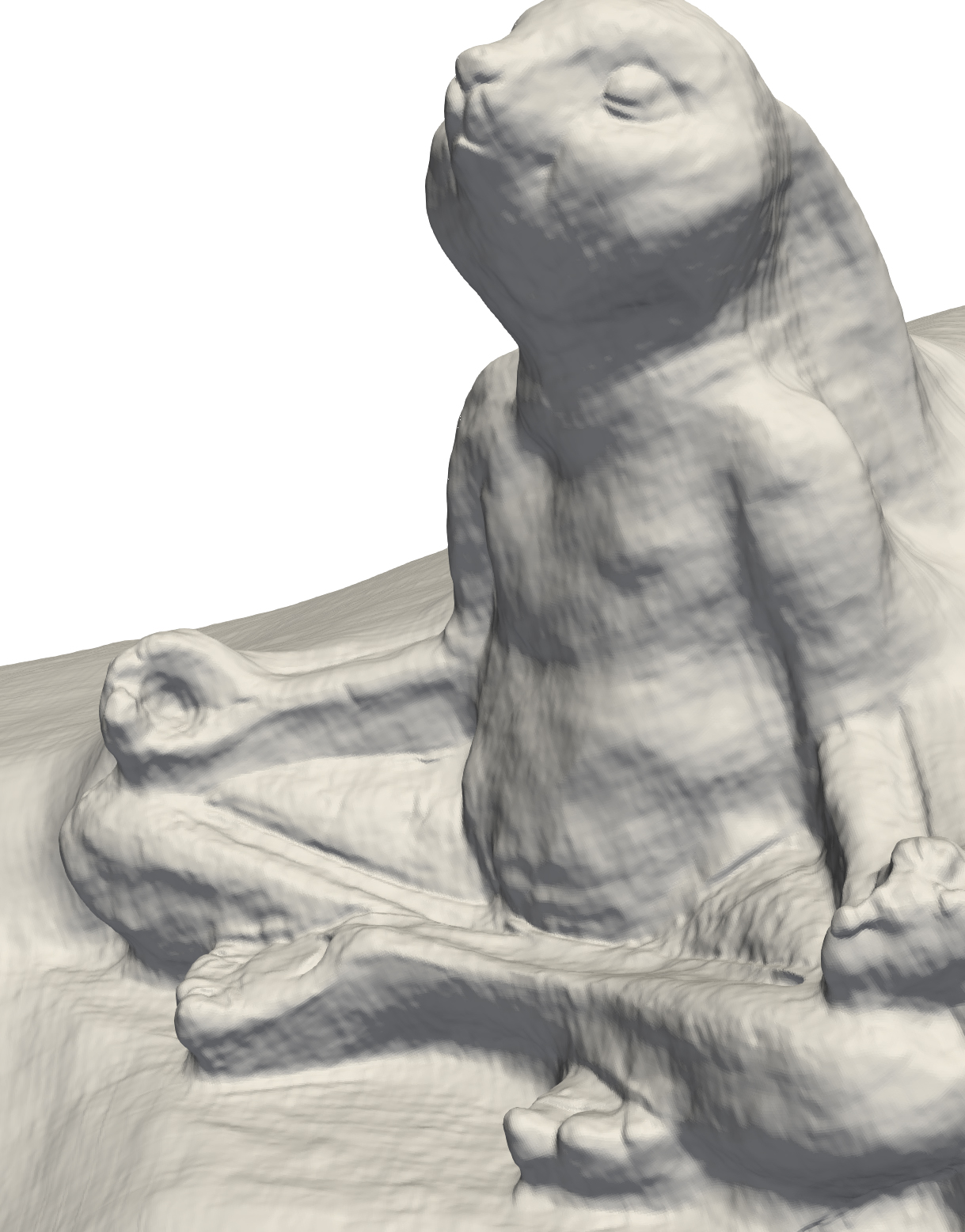}
     \hspace{-8pt} 
     &
     \includegraphics[width=0.123\textwidth]{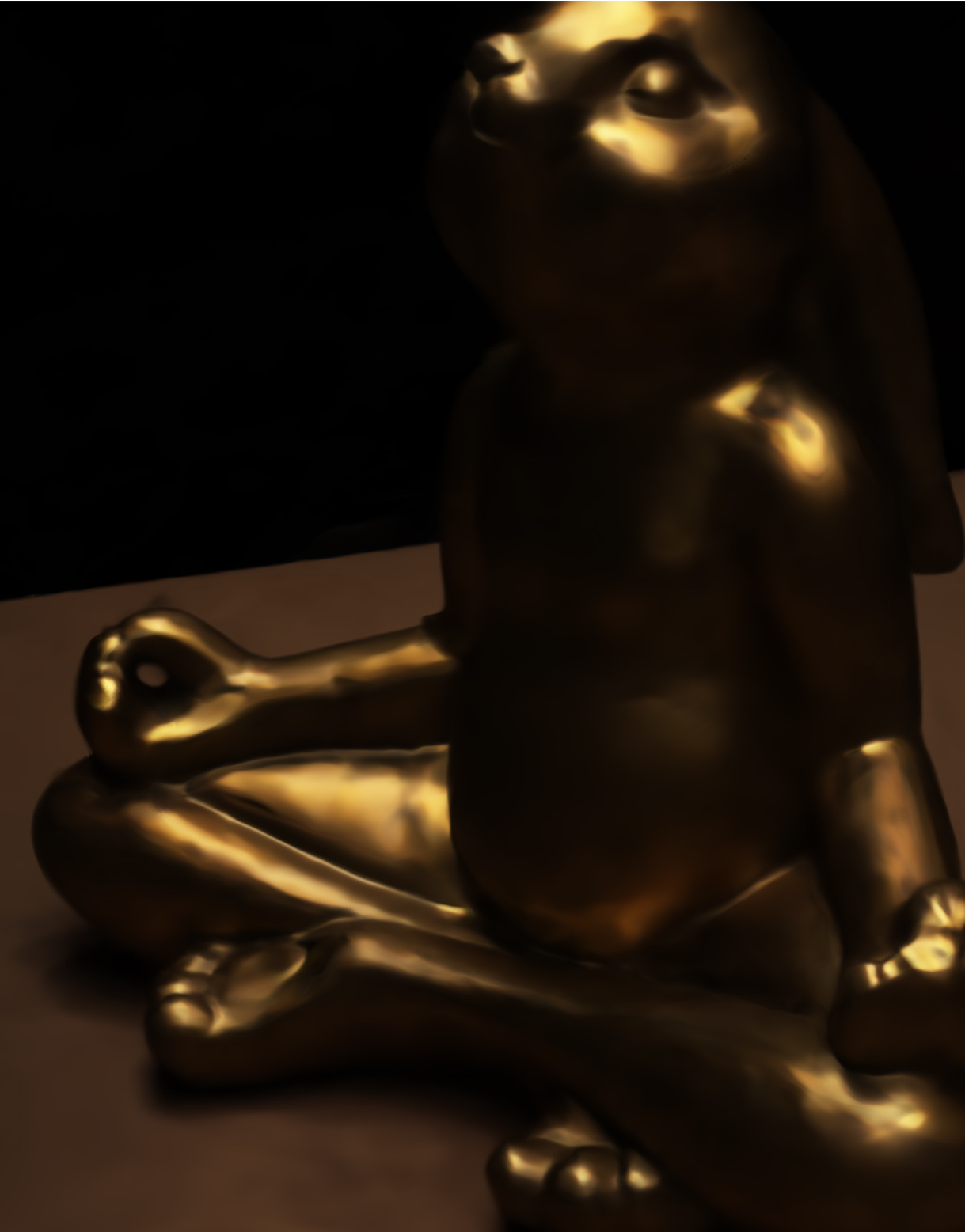} \hspace{-14pt} &
    \includegraphics[width=0.123\textwidth]{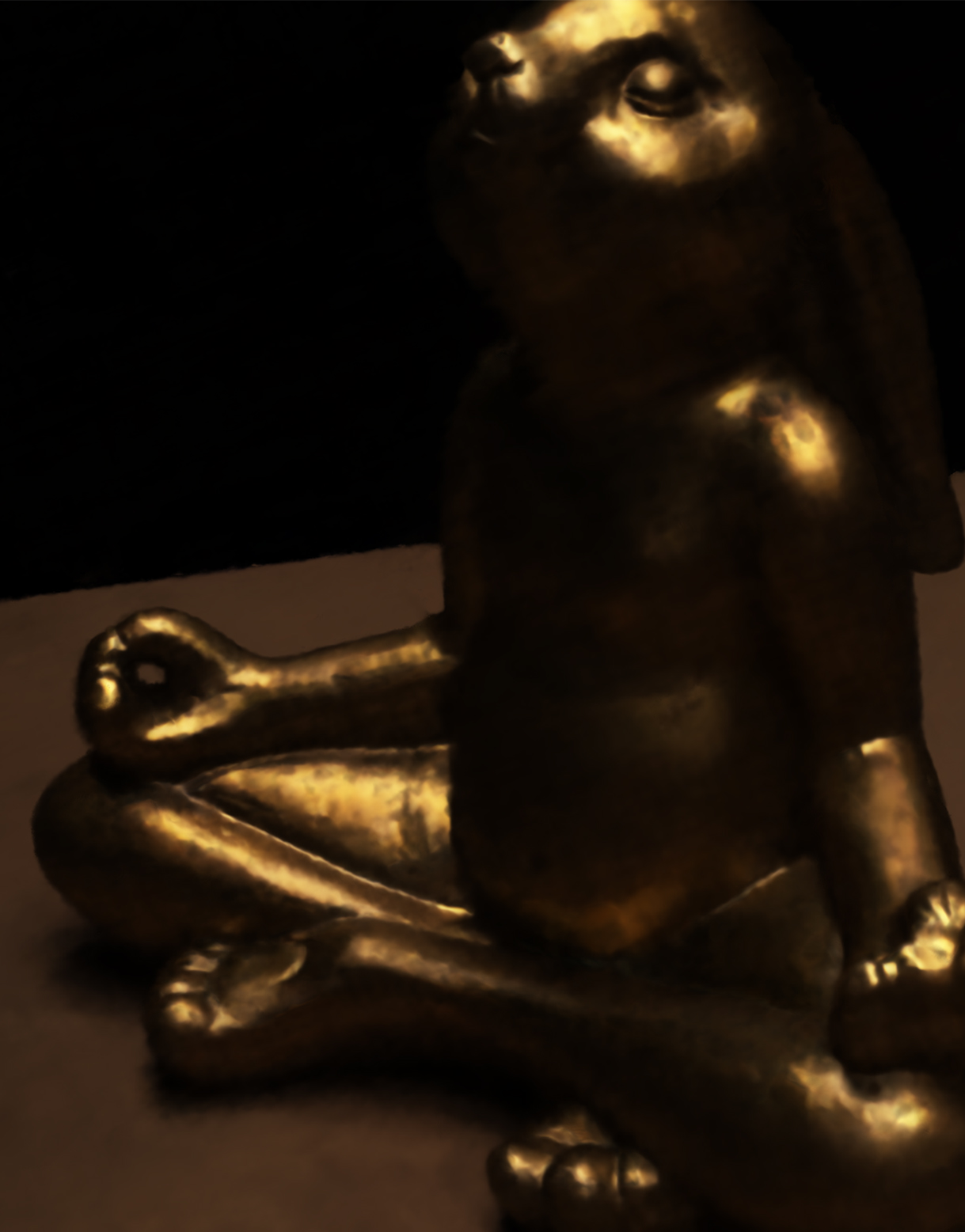} \hspace{-14pt} &
    \includegraphics[width=0.123\textwidth]{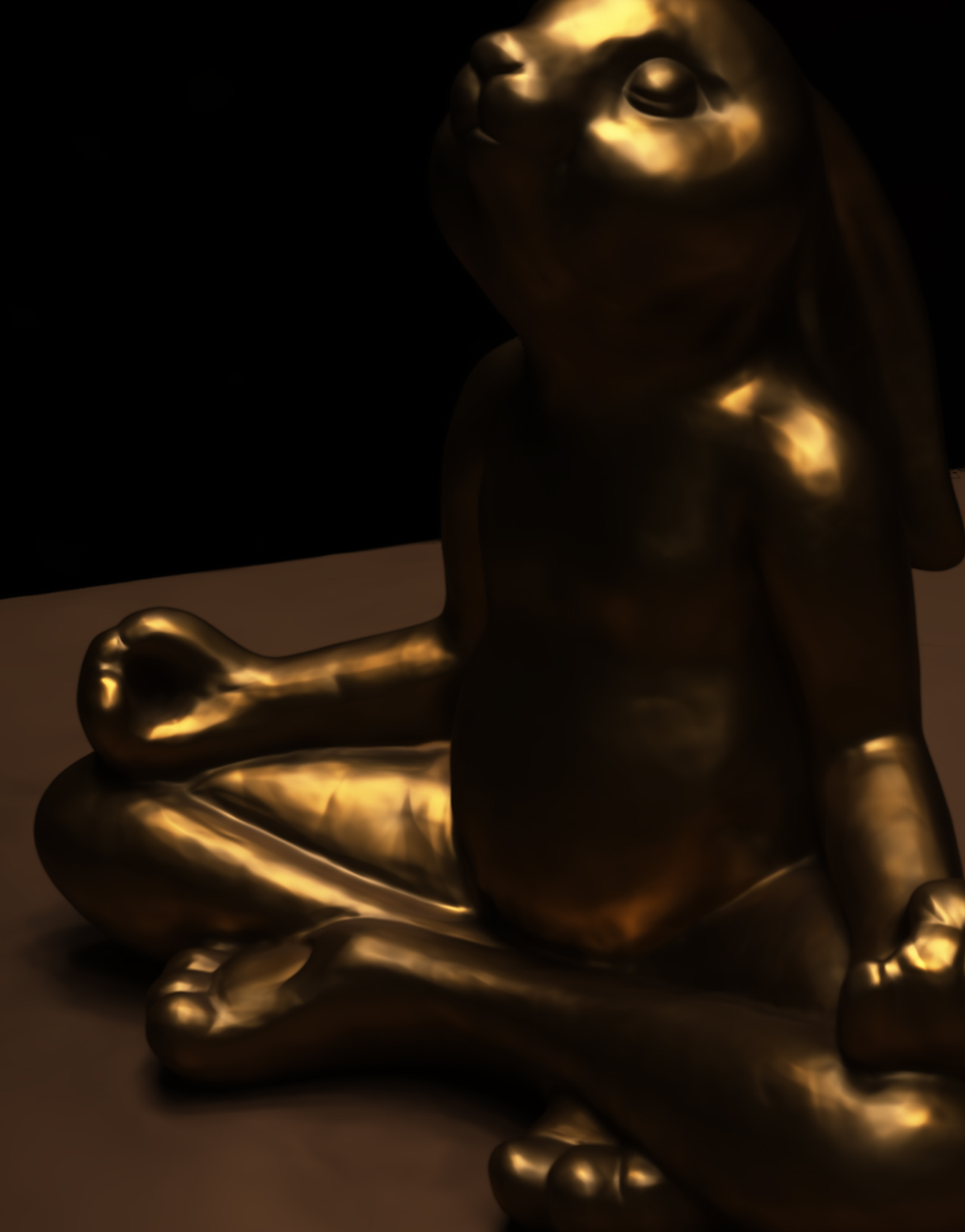} \hspace{-14pt} &
     \includegraphics[width=0.123\textwidth]{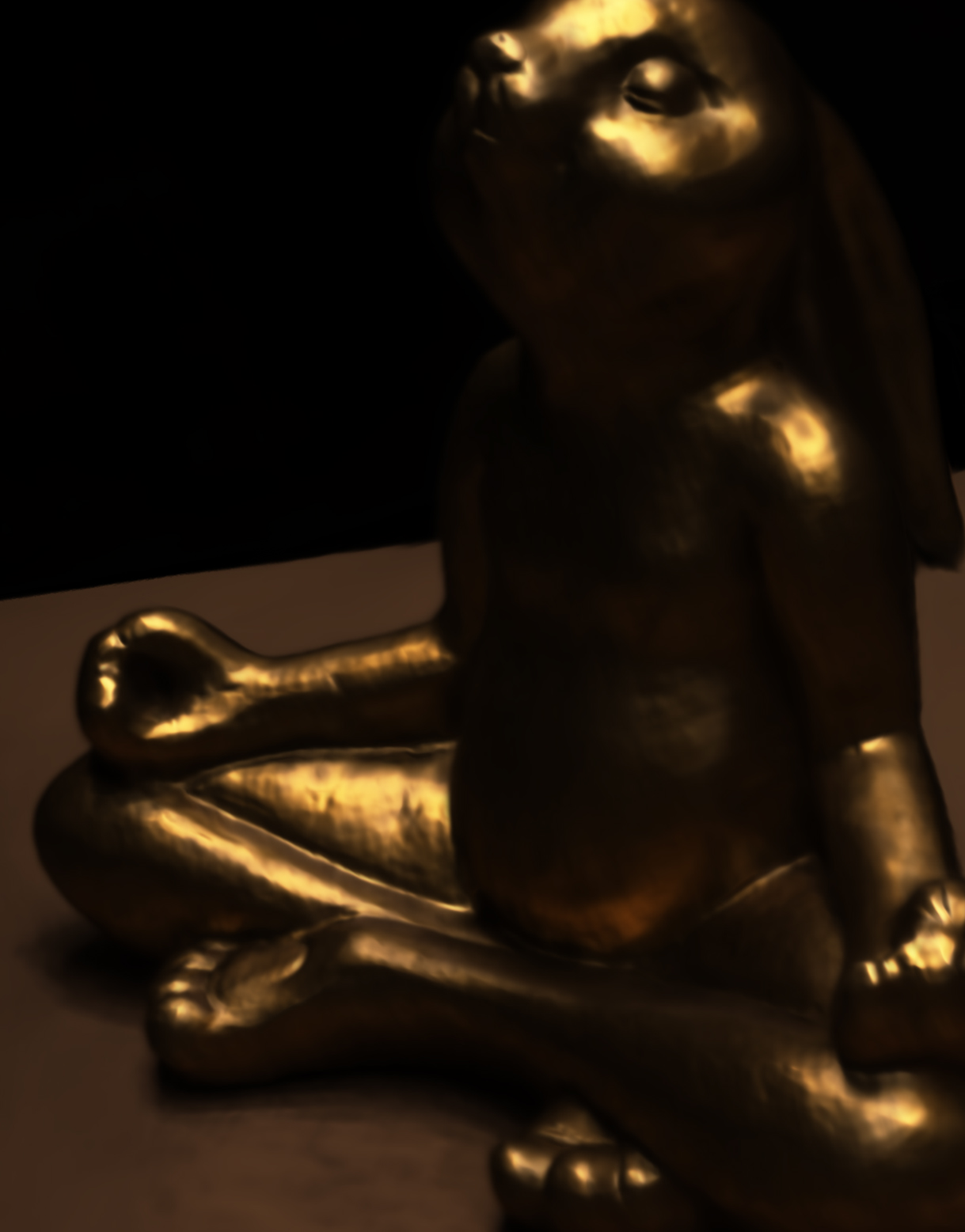}
     \vspace{-7pt} \\
    \scriptsize{NeRF - \text{PE 6}} \hspace{-14pt} &
    \scriptsize{NeRF - \text{PE 10}} \hspace{-14pt} &
    \scriptsize{\textbf{VolSDF} - \text{PE 6}} \hspace{-14pt} &
    \scriptsize{\textbf{VolSDF} - \text{PE 10}} 
    \hspace{-8pt}
    &
    \scriptsize{NeRF - \text{PE 6}} \hspace{-14pt} &
    \scriptsize{NeRF - \text{PE 10}} \hspace{-14pt} &
    \scriptsize{\textbf{VolSDF} - \text{PE 6}}\hspace{-14pt} &
    \scriptsize{\textbf{VolSDF} - \text{PE 10}}
     \vspace{-7pt} \\
    \scriptsize{3.02} \hspace{-14pt} &
    \scriptsize{2.88} \hspace{-14pt} &
    \scriptsize{1.08} \hspace{-14pt} &
    \scriptsize{2.37} 
    \hspace{-8pt}
    &
    \scriptsize{31.38} \hspace{-14pt} &
    \scriptsize{31.68} \hspace{-14pt} &
    \scriptsize{31.49}\hspace{-14pt} &
    \scriptsize{31.8}
    \end{tabular}
    \vspace{-5pt}
    \caption{{Positional Encoding (PE) ablation. We  note the tradeoff between smoother geometry with PE level 6 versus detailed rendering and slightly higher noise with PE level 10. Note that in both cases NeRF fails to decompose correctly density and radiance field.}} \label{fig:pe_ablation}
\end{figure}

\subsection{Multi-view 3D reconstruction}
Figures \ref{fig:dtu_res_supp} and \ref{fig:blended_res_supp} show additional qualitative results for DTU and BlendedMVS datasets, respectively. 
We further provide a \textbf{video} with qualitative rendering results of the learned geometries and radiance fields from simulated camera paths  (including novel views); note that VolSDF rendering alleviates NeRF's salt and pepper artifacts, while producing higher fidelity geometry approximation.

\begin{figure}[h]
    \includegraphics[width=\textwidth]{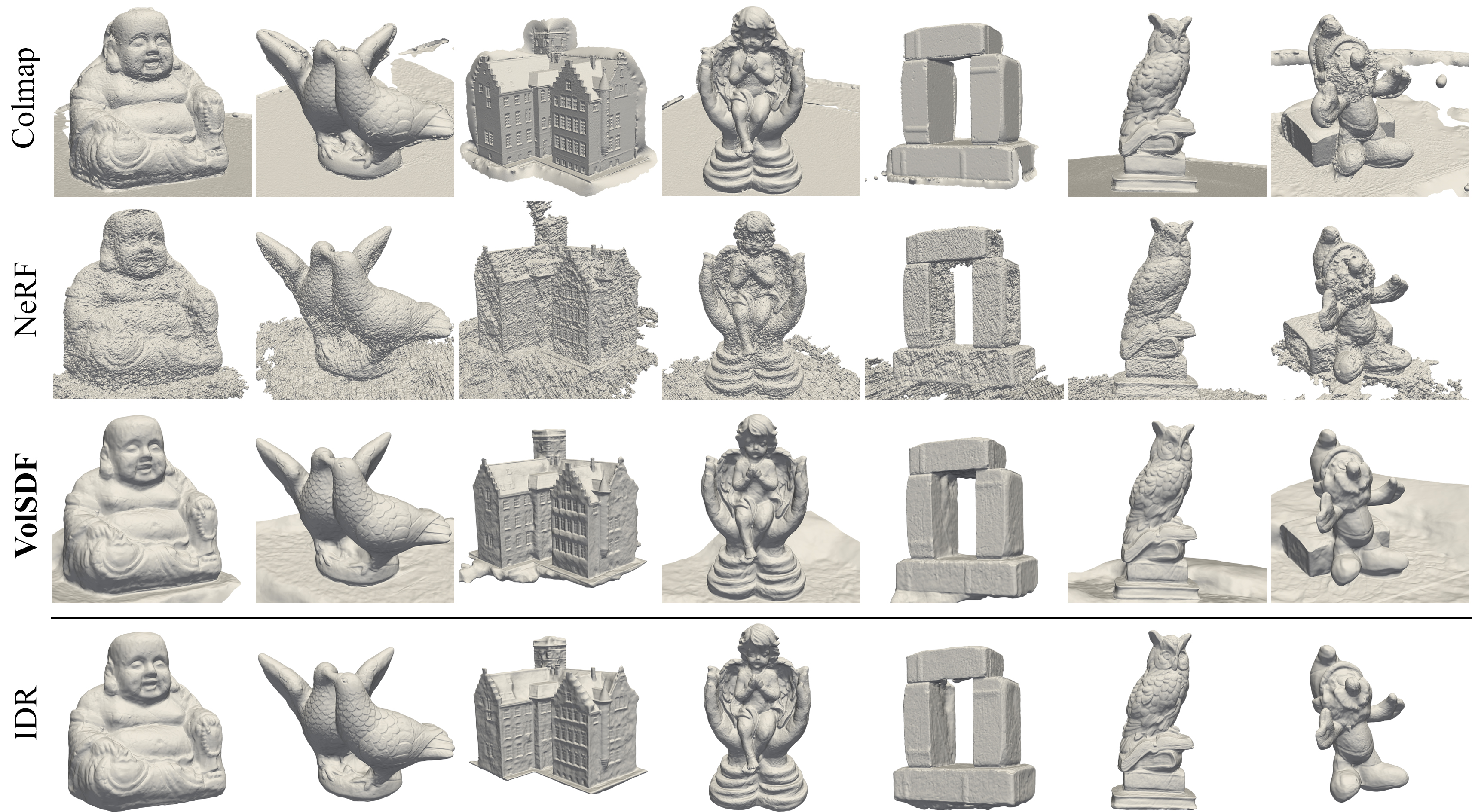}
    \caption{Qualitative results for the reconstructed geometries of objects from the DTU dataset.} \label{fig:dtu_res_supp}
\end{figure}

\begin{figure}[h]
    \includegraphics[width=\textwidth]{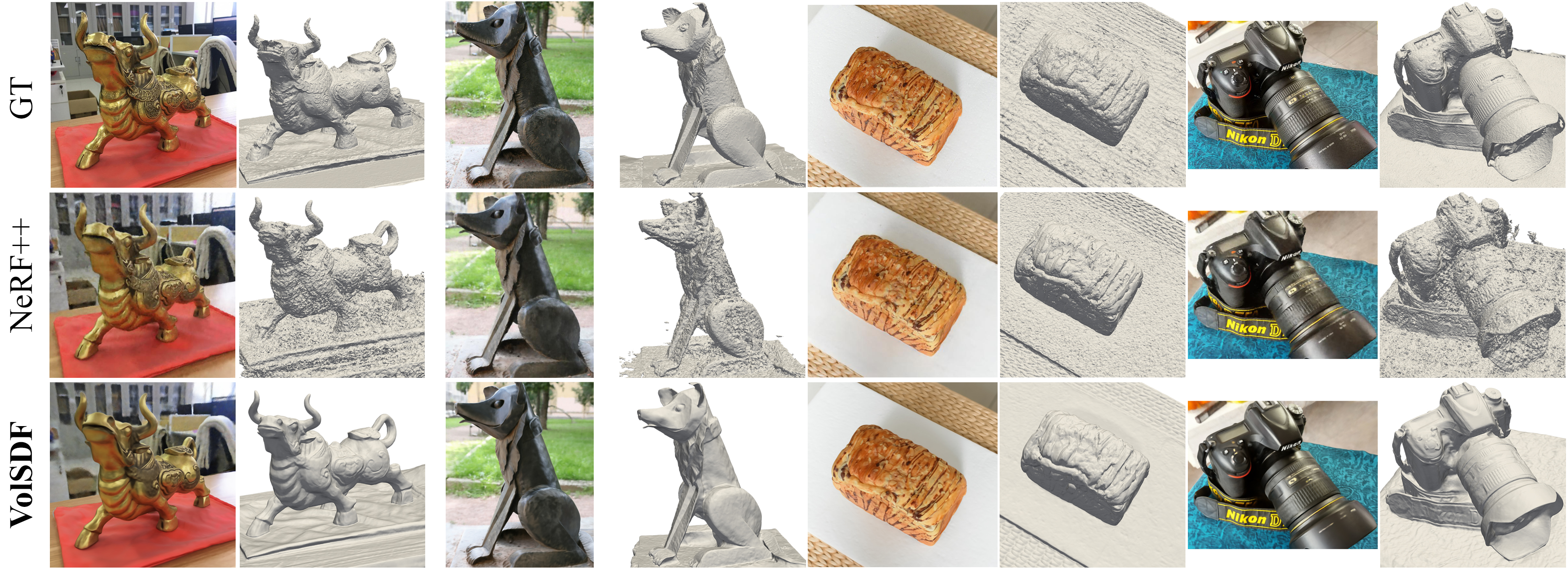}
    \caption{Qualitative results sampled from the BlendedMVS dataset. For each scan we present a visualization of a rendered image and the 3D geometry.} \label{fig:blended_res_supp}
\end{figure}

\begin{wrapfigure}[14]{r}{0.65\textwidth}
  \centering
  \vspace{-12pt}
    \includegraphics[width=0.65\textwidth]{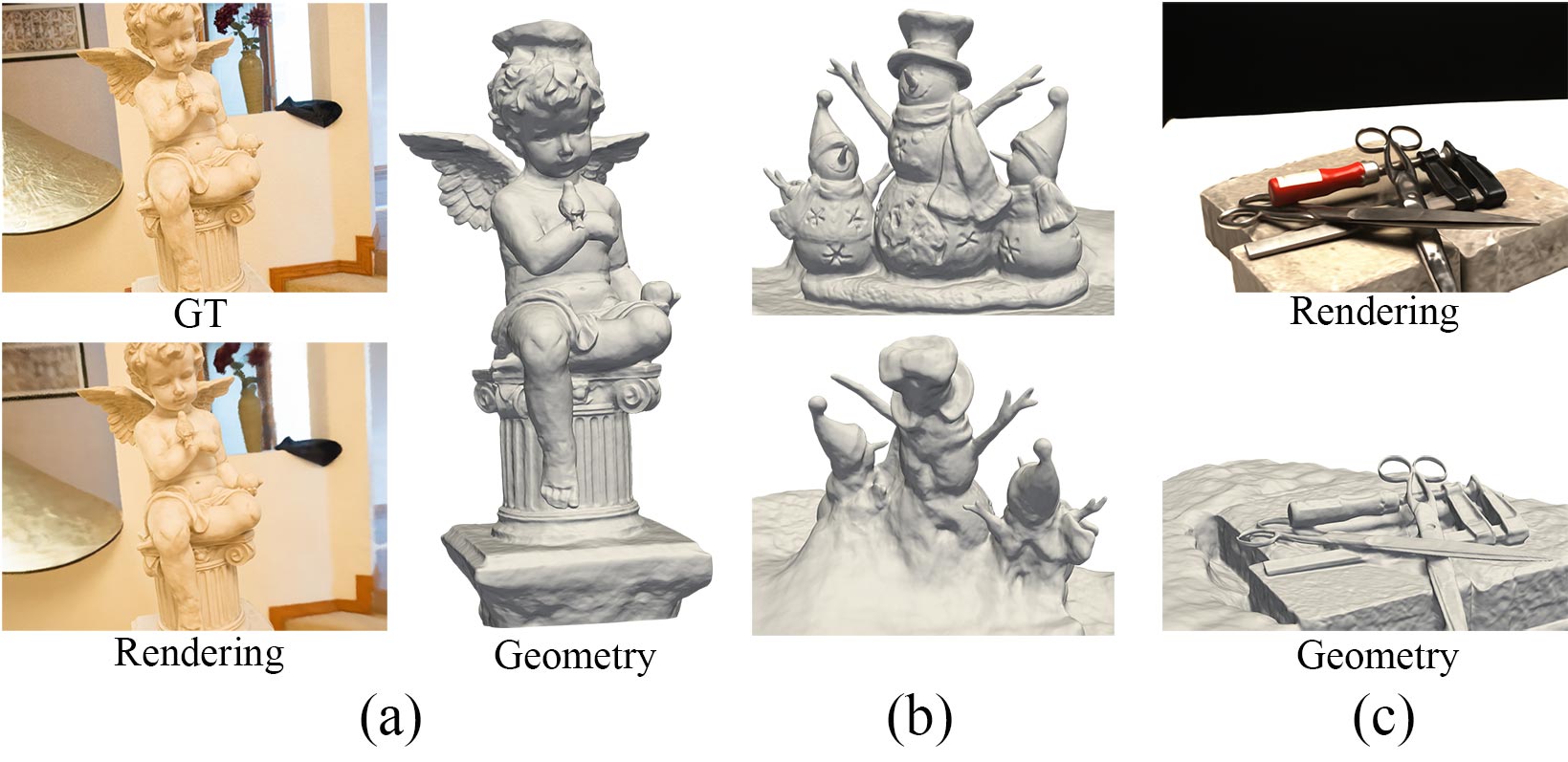}
 \caption{Failure cases, see details in the text.}
\label{fig:failure}
\end{wrapfigure}

\subsection{Limitations}
Figure \ref{fig:failure} shows the main failure cases of our method: First, we observe that the geometry for unseen regions is not well defined and can be completed arbitrarily by the algorithm, see \eg, the angle statue head top in (a), and the reverse side of the snow man statue in (b). Second, homogeneous texture-less areas are hard to reconstruct faithfully, see \eg, the white background desk in (c). These limitations can potentially be alleviated with the addition of extra assumptions such as minimal surface reconstruction \citep{lipman2021phase} and/or defining background color or predefined geometry (\eg, a plane).

\subsection{Rendering comparison}
Figure \ref{fig:rendering_compare} in the main paper shows a comparison of NeRF and VolSDF rendering for the same scene using the same random sampling strategy of the inverse opacity function. For NeRF, replacing random sampling with regularly-spaced sampling introduces different artifacts as shown in Figure \ref{fig:rand_vs_linspace}. In contrast, VolSDF produces consistent rendering results regardless of the sampling strategy.

\begin{figure}[h]
\centering
    \begin{tabular}{ccc}
    \hspace{-5pt} \includegraphics[width=0.33\textwidth]{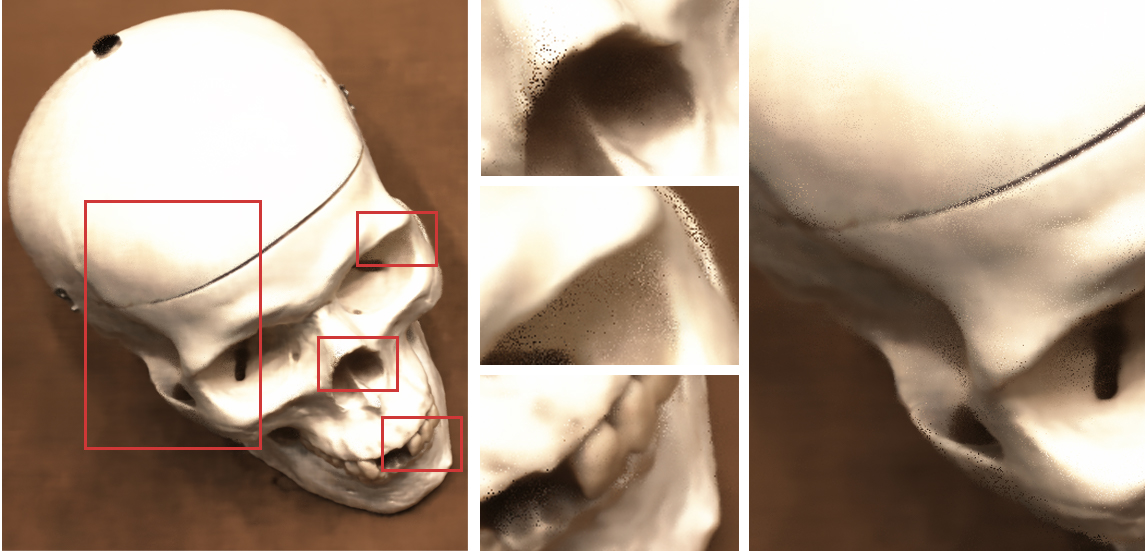} &
    \hspace{-20pt}\includegraphics[width=0.33\textwidth]{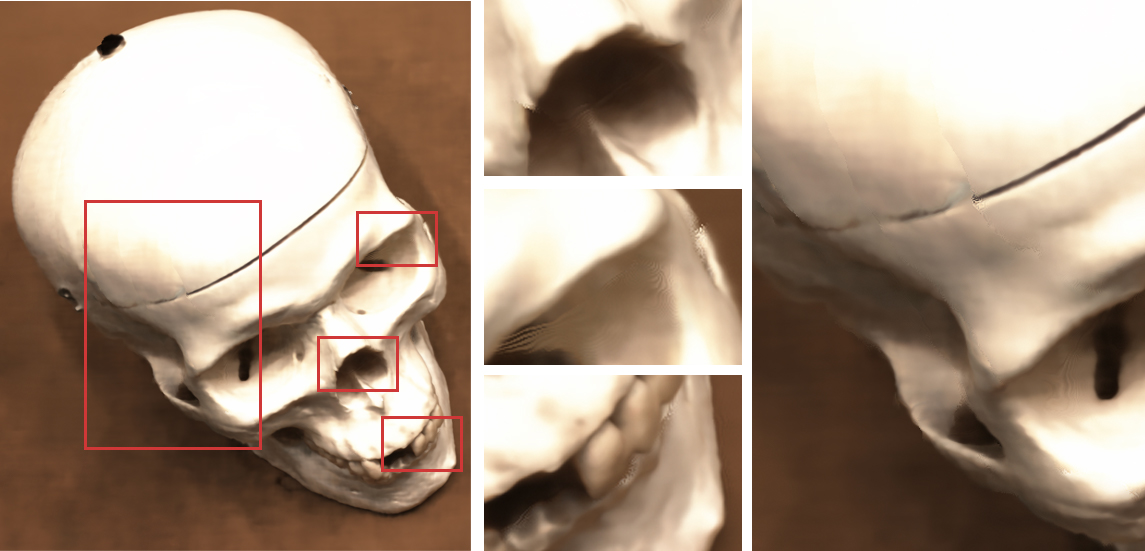} &
    \hspace{-22pt} \includegraphics[width=0.33\textwidth]{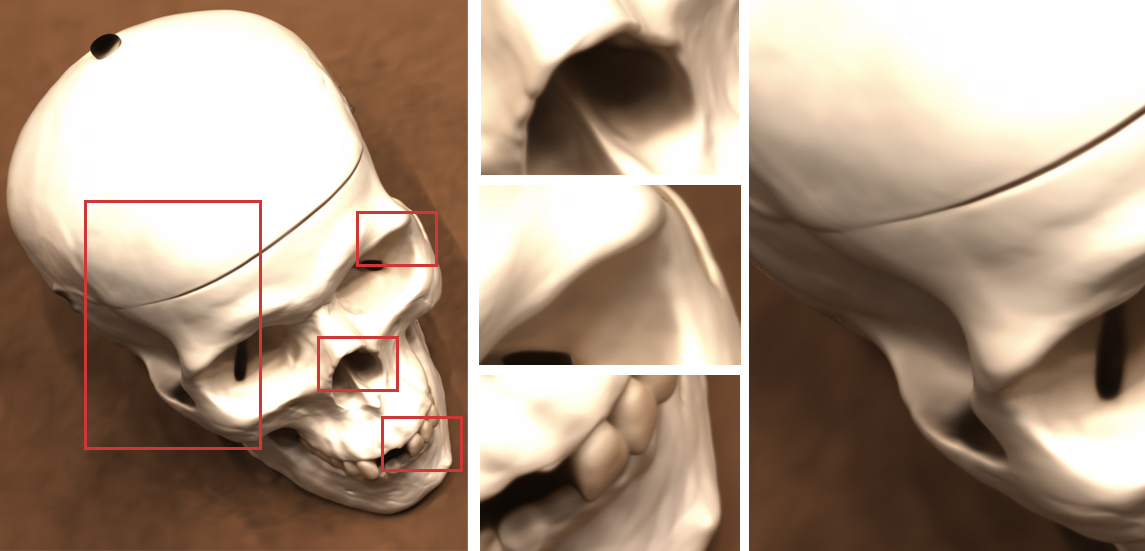} \\
    NeRF \textit{random sampling} & NeRF \textit{regularly-spaced sampling} & \textbf{VolSDF}    
    \end{tabular}
    \caption{{Comparison of random versus regularly-spaced sampling in NeRF. VolSDF produces consistent results in both cases.} } \label{fig:rand_vs_linspace}
\end{figure}

\subsection{Normal dependency in radiance field}
As presented in \citep{yariv2020multiview}, incorporating the zero level set normal in surface rendering improves both reconstruction quality and disentanglement of geometry and radiance. Incorporating the level set's normal has similar effect in the radiance field representation of VolSDF; see Figure \ref{fig:normal_ablation} where we compare using radiance field with and without normal dependency.

\begin{figure}[h]
    \includegraphics[width=\textwidth]{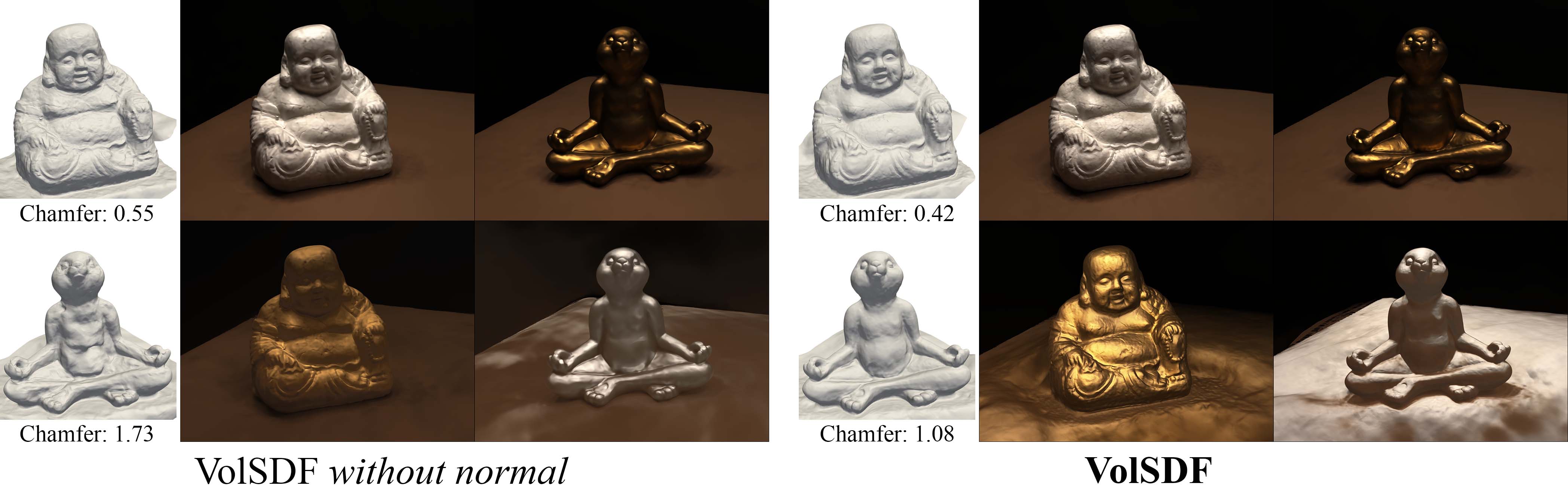}
    \caption{{Geometry and radiance field disentanglement is successful with normal information incorporated in the radiance field, and partially fails without it. }} \label{fig:normal_ablation}
\end{figure}

\subsection{Disentanglement of geometry and appearance}
Figure \ref{fig:dis_2} shows additional results for unsupervised disentanglement of geometry and radiance field (switching the radiance fields of three independently trained scenes). Note how the material and light from one scene is gracefully transferred to the two other scenes. We further provide in the supplementary \textbf{video} the 360 degrees camera path for one of the swaps.

\begin{figure}[h]
\centering
    \includegraphics[width=0.75\textwidth]{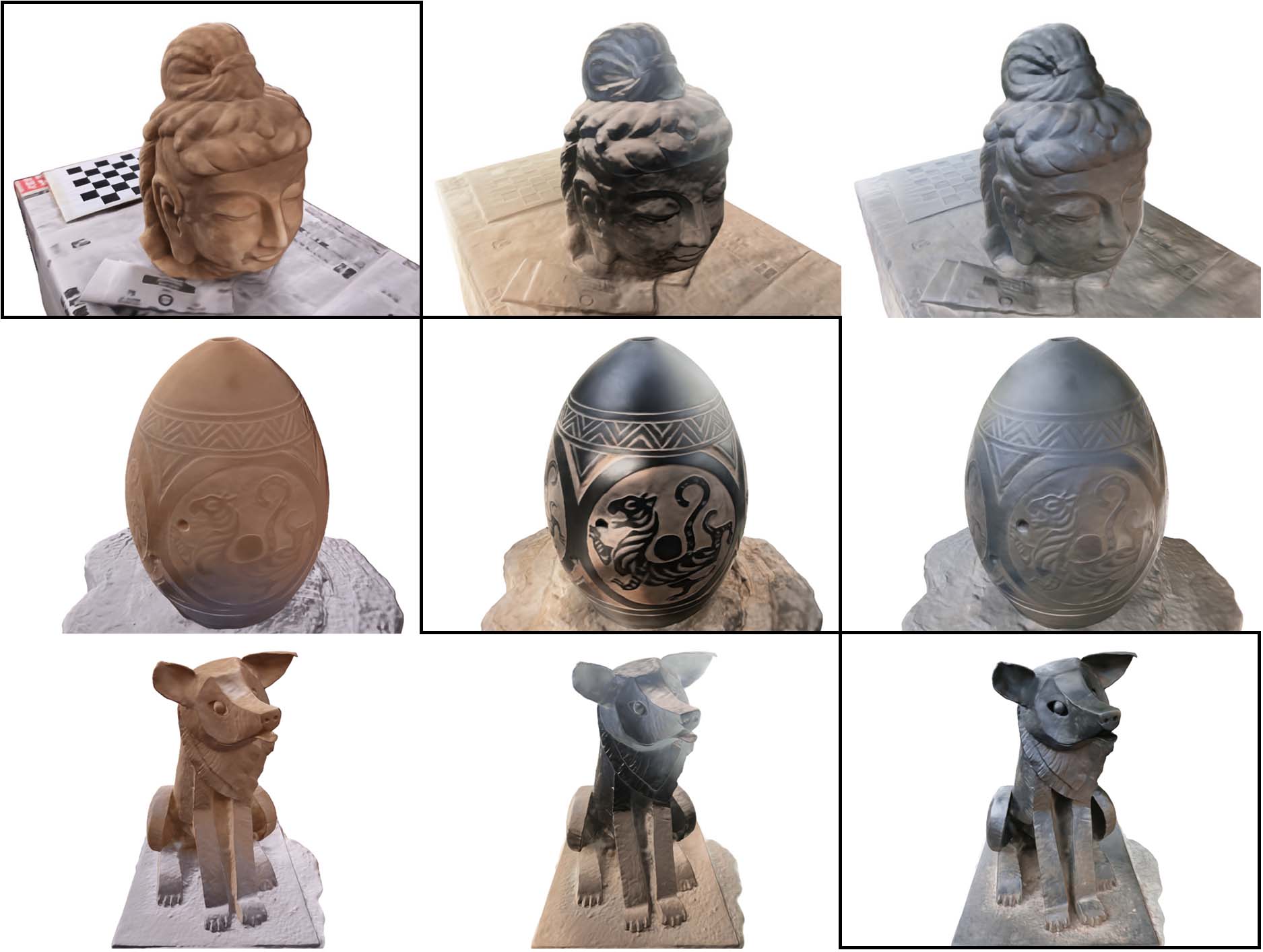}
    \caption{{Additional results of geometry and radiance field disentanglement in BlendedMVS. The diagonal depicts the trained geometry and radiance, while the off-diagonal demonstrates post-training mix and match of geometries and radiance fields. } } \label{fig:dis_2}
\end{figure}

\section{Experiments setup}
\subsection{Camera Normalization}\label{s:cam_norm}
We used the known camera poses to shift the coordinate system, locating the object at the origin. This is done using a least squares solution for the intersection point of all camera principal axes. Let $R_\textit{max}$ be the maximal camera center norm, we further apply a global scale of $\frac{3}{R_\textit{max}*1.1}$ to place all camera centers inside a sphere of radius $3$, centered at the origin.  

\subsection{Datasets}

\paragraph{DTU}
We used the formal evaluation script to measure the Chamfer $l_1$ distance between each reconstructed object to its corresponding ground truth point cloud. For fare comparison with  \cite{yariv2020multiview},  we used their masks (with a dilation of 50 pixels) to remove non visual hull parts from each output of both ours and \cite{mildenhall2020nerf}. Specifically, from an output mesh, we remove a 3D point (and its adjacent triangles) if it is projected to a zero pixel label in any image. Finally, we used the largest connected component mesh for evaluation. The results for COLMAP and IDR are taken from \cite{yariv2020multiview}.    

\paragraph{BlendedMVS}
We used the ground truth meshes supplied by the authors to evaluate the Chamfer $l_1$ distances from the output surfaces.  For each mesh we evaluated the largest connected surface component above the ground plane. To measure the Chamfer $l_1$ distance we used $100K$ random point samples from each surface. 

\subsection{Additional implementation details}
\paragraph{Architecture and hyper-parameters}
As described in section \ref{s:training}, our model architecture consists of two MLP networks, where the geometry network $\vf_\varphi$ has 8 layers with hidden layers of width 256, and a single skip connection from the input to the 4th layer. The geometry MLP gets a 3D position, $\vx$, and outputs the SDF value and an extra feature vector $\vz$ of size 256, \ie, $\vf_\varphi(\vx) = (d(\vx),\vz(\vx))\in\Real^{1+256}$. We initialized $\vf_\varphi$ using the geometric initialization presented in \cite{atzmon2019sal}, so that $d$ produces an approximated SDF to the unit sphere.
The radiance field network $L_\psi$ receives a 3D position $\vx$, the normal to its level set, $\vn$, and the (geometry) feature vector $\vz$, as well as the view direction $\vv$, and outputs a RGB value. It consists of 4 layers of width 256, and ends with a Sigmoid activation for providing valid color values. 
In addition to these two networks we have an additional scalar parameter $\beta$ that is initialized to $0.1$.

To capture the high frequencies of the geometry and radiance field, we exploit Positional Encoding (PE)  \citep{mildenhall2020nerf} for the position $\vx$ and view direction $\vv$ in the geometry and radiance field. For the position $\vx$ we use $6$ PE levels, and for the the view direction $\vv$ we use $4$ PE levels, as in \citep{yariv2020multiview}. The influence of different levels of frequencies applied to the position is presented in the Section \ref{ss:pe_ablation}.

\paragraph{Training details}
We trained our networks using the \textsc{Adam} optimizer \cite{kingma2014adam} with a learning rate initialized with $5\mathrm{e}{-4}$ and decayed exponentially during training to $5\mathrm{e}{-5}$. Each model was trained for $100K$ iterations on scenes from the DTU dataset, and for $200K$ on scenes from the BlendedMVS dataset. Training one model from the DTU dataset takes approximately $12$ hours. Training was done on a single Nvidia V-100 GPU, using \textsc{pytorch} deep learning framework \cite{paszke2017automatic}.

\paragraph{Mesh extraction} We use the Marching Cubes algorithm \cite{lorensen1987marching} for extracting each surface mesh from the zero level set of the signed distance function defined by $d(\vx)$.

\paragraph{Modeling the background} 
To satisfy the assumption that all rays, including rays that do not intersect any surface, are eventually occluded  (\ie, $O(\infty)=1$), %we modify the SDF function) as follows: %before its transformation to the density $\sigma$) in order to address the assumption for rays that not intersect with the the learned surface. 
we model our SDF as:
\begin{equation}\label{e:g_background}
d_\Omega(\vx) = \min \{ d(\vx), r - \norm{\vx}_2 \},
\end{equation}
where $r$ denotes a predefined scene bounding sphere (in out experiments $r=3$, see Section \ref{s:cam_norm} for more details about camera normalization).  During the rendering process, the maximal depth for ray samples, denoted by $M$ in Sec.~\ref{ss:volume_rendering_of_sigma}, is set to $2r$. Intuitively, this modification can be considered as modeling the background using a 360 degrees panorama on the scene boundaries. 
% \yl{i would remove the following, it suggests we dealt a lot with that work and does not really give much....} We note that alternatively, the concurrent work \cite{oechsle2021unisurf} suggests a very specific modeling for the DTU dataset backgrounds by assuming that every black pixel belongs to the background.
\\
For modeling more complex backgrounds as in the blendedMVS dataset, we follow the parametrization presented in NeRF++ \cite{kaizhang2020}: the volume outside a radius 3 sphere (the background) is modeled using an additional NeRF network, predicting the background point density and radiance field. Each $3D$ background point $(x_b,y_b,z_b)$ is modeled using a $4D$ representation $(x',y',z',\frac{1}{r})$ where $\norm{(x',y',z')}=1$ and $(x_b,y_b,z_b)=r\cdot(x',y',z')$. For rendering the background component of a pixel, we use $32$ points calculated by sampling $\frac{1}{r}$ uniformly in the range $(0,\frac{1}{3}]$.
More details regarding this parametrization (referred as the "inverted sphere parametrization") can be found in \cite{kaizhang2020}.

\subsection{Baselines}
\paragraph{NeRF} For running NeRF \cite{mildenhall2020nerf} we used a slightly modified version of the \textsc{pytorch} implementation suggested by \cite{lin2020nerfpytorch}. Following the official implementation code\footnote{\url{https://github.com/bmild/nerf}} of NeRF we extracted the $50$ level set of the learned density as NeRF geometry reconstruction. 

\paragraph{NeRF++} We used the official code\footnote{\url{https://github.com/Kai-46/nerfplusplus}} of NeRF++~\cite{kaizhang2020}. All the cameras are normalized inside a unit sphere, and we train two networks (one foreground network in normal coordinates, one background network in inverse sphere coordinates). For foreground, we used 64 uniform samples and 128 hierarchical samples; for background, we used 32 uniform and 64 hierarchical samples. Similar to NeRF experiments, we extracted the 50 level set of the foreground density as reconstruction.

\section{Proofs and additional lemmas}

As $d_\Omega$ is not everywhere differentiable we need to bound the Lipschitz constant of $\sigma$, rather than its derivative. The Lipshcitz constant is defined as a constant $K_i>0$ so that $\abs{\sigma(\vx(s))-\sigma(\vx(t))}\leq K_i |s-t|$, for all $s,t\in [t_i,t_{i+1}]$. 
\begin{theorem}%\label{thm:density_der}
The Lipshcitz constant of the density $\sigma$ within a segment $[t_i,t_{i+1}]$ satisfies
\begin{equation}\label{e:dsigma_bound}
   K_i \leq \frac{\alpha}{2\beta}\exp\parr{-\frac{d^\star_i}{\beta}},\ 
    \text{where } d_i^\star = \min_{\substack{s\in [t_i,t_{i+1}]\\ \vy \notin B_i\cup B_{i+1}}} \norm{\vx(s)-\vy}
    ,\end{equation} 
    and $B_i=\set{\vx \ \vert \ \norm{\vx-\vx(t_i)}<|d_i|}$, $d_i=d_\Omega(\vx(t_i))$. 

\end{theorem}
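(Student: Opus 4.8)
The goal is to bound the Lipschitz constant $K_i$ of $s\mapsto \sigma(\vx(s))=\alpha\Psi_\beta(-d_\Omega(\vx(s)))$ on $[t_i,t_{i+1}]$, and the strategy is to split the composition: first control how fast $\Psi_\beta$ varies, then control how fast $d_\Omega(\vx(s))$ varies. For the first factor, observe from \eqref{e:laplace} that $\Psi_\beta$ is differentiable away from $0$ with $|\Psi_\beta'(s)| = \frac{1}{2\beta}\exp(-|s|/\beta)$, so $\Psi_\beta$ is globally Lipschitz with the modulus of its derivative at the argument of smallest absolute value encountered. For the second factor, the key property of an SDF is that $d_\Omega$ is $1$-Lipschitz in space, hence $s\mapsto d_\Omega(\vx(s))$ is $1$-Lipschitz in $s$ since $\norm{\vv}=1$; more precisely $|d_\Omega(\vx(s))-d_\Omega(\vx(t))|\le \norm{\vx(s)-\vx(t)}=|s-t|$.

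The plan is then to chain these: for $s,t\in[t_i,t_{i+1}]$,
\begin{equation}
\abs{\sigma(\vx(s))-\sigma(\vx(t))} = \alpha\abs{\Psi_\beta(-d_\Omega(\vx(s)))-\Psi_\beta(-d_\Omega(\vx(t)))} \le \alpha \cdot \frac{1}{2\beta}e^{-D/\beta}\cdot\abs{d_\Omega(\vx(s))-d_\Omega(\vx(t))} \le \frac{\alpha}{2\beta}e^{-D/\beta}\abs{s-t},
\end{equation}
where $D$ is a lower bound on $|d_\Omega(\vx(r))|$ valid for all $r$ on the segment between $s$ and $t$ (this is what makes the mean-value-type bound on $\Psi_\beta$ legitimate, being careful around the kink at $0$). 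So it remains to identify $D = d_i^\star$, i.e.\ to show $|d_\Omega(\vx(r))| \ge d_i^\star$ for every $r\in[t_i,t_{i+1}]$. This is the geometric heart of the argument. I would argue: if a point $\vx(r)$ had $|d_\Omega(\vx(r))| = \rho$, then the closest surface point $\vy\in\gM$ lies at distance $\rho$ from $\vx(r)$, and since $\gM$ cannot enter the open ball $B_i$ (its radius $|d_i|$ equals the distance from $\vx(t_i)$ to $\gM$) nor $B_{i+1}$, we have $\vy\notin B_i\cup B_{i+1}$; hence $\rho = \norm{\vx(r)-\vy} \ge \min_{s\in[t_i,t_{i+1}],\,\vy'\notin B_i\cup B_{i+1}}\norm{\vx(s)-\vy'} = d_i^\star$. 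Taking $\rho$ as small as possible over $r\in[t_i,t_{i+1}]$ gives $D\ge d_i^\star$, and since $e^{-D/\beta}$ is decreasing in $D$ we get the claimed bound.

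The main obstacle, and the step deserving the most care, is the geometric claim that the surface $\gM$ avoids $B_i\cup B_{i+1}$ and its clean conversion into the bound $|d_\Omega(\vx(r))|\ge d_i^\star$ uniformly over the whole segment. One subtlety is that $d_i^\star$ as written minimizes over \emph{both} $s$ and $\vy$, so I should check that this two-variable minimum is genuinely a lower bound for $|d_\Omega(\vx(r))|$ at each fixed $r$: fixing $r$ and its nearest surface point $\vy_r$, we have $|d_\Omega(\vx(r))| = \norm{\vx(r)-\vy_r}\ge \inf_{s}\norm{\vx(s)-\vy_r}$, and since $\vy_r\notin B_i\cup B_{i+1}$ this infimum is at least $d_i^\star$; so the bound holds pointwise and hence uniformly. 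A second subtlety is legitimizing the Lipschitz estimate for $\Psi_\beta\circ(-d_\Omega(\vx(\cdot)))$ across the point where $-d_\Omega$ may cross $0$ or where $d_\Omega$ fails to be differentiable — but since $\Psi_\beta$ is globally Lipschitz with constant $\sup_{|u|\ge D}|\Psi_\beta'(u)| = \frac{1}{2\beta}e^{-D/\beta}$ on the relevant range (noting $d_i^\star$ could be $0$ only in degenerate situations, in which case the bound is just $\alpha/2\beta$ and trivially true), and a composition of Lipschitz maps is Lipschitz with the product of the constants, no differentiability is actually needed and the kink causes no trouble. Everything else is the routine chaining displayed above.
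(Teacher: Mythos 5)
Your proof is correct and follows essentially the same route as the paper's: bound $\abs{\Psi_\beta(-d_\Omega(\vx(s)))-\Psi_\beta(-d_\Omega(\vx(t)))}$ by the Laplace density evaluated at $\min_r\abs{d_\Omega(\vx(r))}$, invoke the $1$-Lipschitzness of the SDF (which the paper proves as a separate lemma and you state without proof), and lower-bound $\min_r\abs{d_\Omega(\vx(r))}$ by $d_i^\star$ via $\gM\subset(B_i\cup B_{i+1})^c$. The only differences are cosmetic: you are a bit more explicit on the geometric step, and on the kink of $\Psi_\beta$ at $0$ your phrasing (Lipschitz constant $\sup_{|u|\ge d_i^\star}|\Psi_\beta'(u)|$ over the disconnected set $\{|u|\ge d_i^\star\}$) should be read as applying to the actual range of $-d_\Omega(\vx(\cdot))$ on the segment, which by continuity is a single interval on one side of $0$ whenever $d_i^\star>0$, so the estimate is legitimate exactly as in the paper.
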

The constant $d^*_i$ can be computed explicitly using $d_i, d_{i+1}, t_i, t_{i+1}$ as described in the next proposition.
\begin{proposition}\label{prop:dstar_der}
The lower distance bound $d^*_i$ can be computed using the following formulas:  
\begin{equation}\label{e:dsigma_bound}
    d_i^\star = \begin{cases}
    0 & |d_i| + |d_{i+1}| \leq \delta_i \\
    \min\set{|d_i|,|d_{i+1}|} & \abs{|d_i|^2 - |d_{i+1}|^2} \geq \delta^2_i \\
    h_i & \text{otherwise}
    \end{cases},
\end{equation}
$d_i=d_\Omega(\vx(t_i))$, $h_i = \frac{2}{\delta_i}\sqrt{s(s-\delta_i)(s-|d_i|)(s-|d_{i+1}|)}$,  $s=\frac{1}{2}(\delta_i+|d_i|+|d_{i+1}|)$, $\delta_i=t_{i+1}-t_i$.
\end{proposition}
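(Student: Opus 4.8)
\textbf{Proof plan for Proposition~\ref{prop:dstar_der}.}

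The plan is to unpack the definition $d_i^\star = \min_{s\in[t_i,t_{i+1}],\,\vy\notin B_i\cup B_{i+1}}\norm{\vx(s)-\vy}$ geometrically. Since the set $\Real^3\setminus(B_i\cup B_{i+1})$ is closed and the segment $[\vx(t_i),\vx(t_{i+1})]$ is compact, the minimum is attained; moreover it equals $\max\{0,\,\operatorname{dist}(\text{segment},\Real^3\setminus(B_i\cup B_{i+1}))\}$. The first thing I would observe is that a point $\vx(s)$ on the segment has positive distance to the complement of $B_i\cup B_{i+1}$ only if $\vx(s)$ lies in the \emph{interior} of that union, and in that case the distance from $\vx(s)$ to the complement is $\max\bigl(|d_i|-\norm{\vx(s)-\vx(t_i)},\,|d_{i+1}|-\norm{\vx(s)-\vx(t_{i+1})}\bigr)$ — the larger of the two ``signed depths'' inside the two balls. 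So $d_i^\star = \max_{s\in[t_i,t_{i+1}]} \max\bigl(|d_i|-\abs{s-t_i},\,|d_{i+1}|-\abs{t_{i+1}-s}\bigr)$, where I have used that the whole segment lies on the line through $\vx(t_i),\vx(t_{i+1})$ so Euclidean distances along it are just differences of parameters (recall $\norm{\vv}=1$). Writing $u=s-t_i\in[0,\delta_i]$, this is $d_i^\star=\max_{u\in[0,\delta_i]}\max(|d_i|-u,\ |d_{i+1}|-(\delta_i-u))$, a one-variable piecewise-linear maximization.

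Next I would analyze this 1D problem by the three cases in the statement. \textbf{Case 1:} if $|d_i|+|d_{i+1}|\le\delta_i$, then for every $u$ at least one of $|d_i|-u$, $|d_{i+1}|-(\delta_i-u)$ is $\le 0$ and the other cannot exceed $0$ either when the first is positive (a short check using $u+(\delta_i-u)=\delta_i$), so the max is $0$; geometrically the two balls are disjoint and the connecting segment exits both, so it touches the complement. \textbf{Case 2:} the segment is contained in $B_i\cup B_{i+1}$ but the two balls overlap so much that one endpoint-ball contains the relevant chunk; here the max of the piecewise-linear function is attained at an endpoint $u=0$ or $u=\delta_i$, giving value $\max(|d_i|,\,|d_{i+1}|-\delta_i)$ or $\max(|d_i|-\delta_i,\,|d_{i+1}|)$ — and I would show the algebraic condition $\bigl||d_i|^2-|d_{i+1}|^2\bigr|\ge\delta_i^2$ is exactly the condition under which the endpoint value dominates, yielding $\min\{|d_i|,|d_{i+1}|\}$ after simplification (the endpoint of the \emph{smaller} ball is the one sticking deepest out of the union, since the larger ball already covers past the other endpoint). \textbf{Case 3 (the generic ``lens'' case):} the two linear pieces $|d_i|-u$ and $|d_{i+1}|-(\delta_i-u)$ cross at an interior point $u^\star$, and since one is decreasing and the other increasing in $u$, the \emph{minimum over the segment of the distance to the complement} — wait, here I must be careful: $d_i^\star$ is a min over $s$ and over $\vy$, so it is $\min_s\bigl(\text{distance of }\vx(s)\text{ to the complement}\bigr)$, i.e. $\min_{u\in[0,\delta_i]}\max(|d_i|-u, |d_{i+1}|-(\delta_i-u))$, not the max; this min of a ``V-from-above'' (decreasing then increasing) is attained at the crossing point $u^\star$ and equals the common value there. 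That common value is the half-width of the lens $B_i\cap B_{i+1}$ perpendicular to the line of centers, which is classically the distance from the center-line to a circle–circle intersection point, i.e.\ $h_i=\frac{2}{\delta_i}\sqrt{s(s-\delta_i)(s-|d_i|)(s-|d_{i+1}|)}$ with $s=\tfrac12(\delta_i+|d_i|+|d_{i+1}|)$, which is twice the area of the triangle with side lengths $\delta_i,|d_i|,|d_{i+1}|$ divided by the base $\delta_i$ (Heron's formula); I would derive this by solving $|d_i|-u^\star = |d_{i+1}|-(\delta_i-u^\star)$ for $u^\star$, substituting back, and recognizing the resulting radical.

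The main obstacle is bookkeeping the two roles the ``$\min$'' plays and getting the case boundaries algebraically exact: translating the three geometric configurations (disjoint balls / one ball engulfing the crossing / proper lens) into the clean inequalities $|d_i|+|d_{i+1}|\le\delta_i$ and $\bigl||d_i|^2-|d_{i+1}|^2\bigr|\ge\delta_i^2$ requires squaring and sign-tracking that must be done carefully to cover all sub-cases (e.g.\ $|d_i|>\delta_i$ vs.\ not). I would also need to double-check the degenerate overlaps where two of the three cases meet, to confirm the formula is continuous there (it is: at $|d_i|+|d_{i+1}|=\delta_i$ one gets $h_i=0$ from Heron, and at $\bigl||d_i|^2-|d_{i+1}|^2\bigr|=\delta_i^2$ one gets $h_i=\min\{|d_i|,|d_{i+1}|\}$ after simplification). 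Once the geometry is set up, each case is a short computation.
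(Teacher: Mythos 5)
Your reduction rests on the pointwise identity $\mathrm{dist}\bigl(\vx(s),(B_i\cup B_{i+1})^c\bigr)=\max\bigl(|d_i|-\norm{\vx(s)-\vx(t_i)},\,|d_{i+1}|-\norm{\vx(s)-\vx(t_{i+1})}\bigr)$, and this identity is false. Since $(B_i\cup B_{i+1})^c=B_i^c\cap B_{i+1}^c$ is contained in each individual complement, the right-hand side is only a \emph{lower} bound; equality fails precisely in the lens $B_i\cap B_{i+1}$, where the nearest point outside $B_i$ lies inside $B_{i+1}$ and vice versa, so the depth inside the union strictly exceeds both individual depths. Concrete counterexample: $|d_i|=|d_{i+1}|=1$, $\delta_i=1$. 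At the segment midpoint both depths are $1/2$, but the nearest point of the complement is a lens tip at distance $\sqrt{3}/2$; in fact here $d_i^\star=\sqrt{3}/2=h_i$, exactly as the proposition asserts, whereas your one-dimensional problem $\min_{u\in[0,\delta_i]}\max\bigl(|d_i|-u,\,|d_{i+1}|-(\delta_i-u)\bigr)$ evaluates to the crossing value $\tfrac12\bigl(|d_i|+|d_{i+1}|-\delta_i\bigr)=1/2$. In general the crossing value is $\tfrac12(|d_i|+|d_{i+1}|-\delta_i)$, which contains no radical, so the final step of "recognizing the resulting radical" as Heron's formula cannot go through: your Case 3 produces a strictly smaller quantity than $h_i$. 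Relatedly, the case boundary your 1D analysis would generate (the crossing leaving $[0,\delta_i]$, i.e.\ $\abs{|d_i|-|d_{i+1}|}\geq\delta_i$) is not the stated condition $\abs{|d_i|^2-|d_{i+1}|^2}\geq\delta_i^2$; note also that the Lipschitz property of the SDF forces $\abs{|d_i|-|d_{i+1}|}\leq\delta_i$, so your Case 2 regime is essentially empty, while the paper's Case 2 is not. Only your Case 1 ($|d_i|+|d_{i+1}|\leq\delta_i$, where the segment actually meets the complement) is sound as written.

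The paper argues differently, and this is the structure you would need: view $d_i^\star$ as the distance between the compact sets $A=[\vx(t_i),\vx(t_{i+1})]$ and $B=\partial\bigl((B_i\cup B_{i+1})^c\bigr)$, and locate a minimizing pair $(\va^*,\vb^*)$. If $\va^*$ is an endpoint the value is $|d_i|$ or $|d_{i+1}|$; if $\va^*$ is interior, first-order (Lagrange multiplier) conditions force $\va^*-\vb^*\perp\vv$, and then either $\vb^*$ lies on the intersection circle $\partial B_i\cap\partial B_{i+1}$, in which case the distance is the height of the triangle with side lengths $\delta_i,|d_i|,|d_{i+1}|$, i.e.\ $h_i$ by Heron's formula, or $\vb^*$ lies on a single sphere, which forces $\va^*$ to be an endpoint, a contradiction. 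Whether the interior (lens) case or the endpoint case occurs is decided by whether both base angles of that triangle are acute, which by the Law of Cosines is exactly $\abs{|d_i|^2-|d_{i+1}|^2}<\delta_i^2$ — this is where the stated case condition comes from. If you wish to keep a one-dimensional flavor, you must replace your linear depth functions by the true distance from $\vx(s)$ to the admissible arcs/lens tips (which is what produces the square root); as it stands, the key pointwise formula and hence Cases 2 and 3 of the proposed argument are incorrect.
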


\begin{proof}[Proof of Theorem \ref{thm:density_der}.]
We denote by $\Phi_\beta$ the Probability Density Function (PDF) of the Laplace distribution (the CDF of which is given in \eqref{e:laplace}), 
\begin{equation}\label{e:laplace_pdf}
\frac{d}{dt}\Psi_\beta(s)=\Phi_\beta(s)=\frac{1}{2\beta}\exp\parr{-\frac{\abs{s}}{\beta}}.
\end{equation}

Let $s,t\in[t_{i},t_{i+1}]$,
\begin{align*}
\abs{\sigma(\vx(s))-\sigma(\vx(t))} &= \alpha\abs{\Psi_\beta(-d_\Omega(\vx(s)))-\Psi_\beta(-d_\Omega(\vx(t)))}\\ & \leq \alpha \abs{d_\Omega(\vx(s))-d_\Omega(\vx(t))}\max_{t\in[t_i,t_{i+1}]}\abs{\Phi_\beta(-d_\Omega(\vx(t))) } \\ &\leq \alpha \abs{s-t} \Phi_\beta\parr{\min_{t\in [t_i,t_{i+1}]} \abs{d_\Omega(\vx(t))}}
\\ &\leq \alpha \abs{s-t} \Phi_\beta(d^*_i)
\end{align*}
where the first equality is by definition of $\sigma$ (\eqref{e:density}), the first inequality uses the fact that the Lipschitz constant of a continuously differential function is the maximum of (absolute value of) its derivative. The second inequality uses the following Lemma \ref{lem:lipschitz_d} and the fact that $\Phi_\beta$ is symmetric w.r.t.~zero, positive, and monotonically decreasing at $[0,\infty)$. The last inequality can be justified by noting that by definition of the distance function $\gM \subset (B_i \cup B_{i+1})^c$ and therefore $\min_{t\in [t_i,t_{i+1}]} \abs{d_\Omega(\vx(t))}\geq d^*_i$.
\end{proof}

\begin{proof} [Proof of Proposition \ref{prop:dstar_der}.]
We wish to find the distance of the two sets: $A=[\vx(t_i),\vx(t_{i+1})]$ and $B=\partial ((B_i \cup B_{i+1})^c)$. That is $d^*_i = \min_{\va\in A,\vb\in B}\norm{\va-\vb}$.  Since these two sets are compact, their cartesian product is compact, and the minimum is achieved. Denote this minimum $(\va^*,\vb^*)$, where $\va^*\in A $ and $\vb^*\in B$. 

First, if $B_i\cap B_{i+1}=\emptyset$ then $d^*_i=0$. This case is characterized by $|d_i|+|d_{i+1}| \leq \delta_i$. Henceforth we assume $|d_i|+|d_{i+1}| > \delta_i$.

Now we consider several cases. If $\va^*= \vx(t_i)$ the minimal distance is $|d_i|$, and similarly the distance is $|d_{i+1}|$ if $\va^*=\vx(t_{i+1})$. 
Otherwise, if $\va^*\in(\vx(t_i),\vx(t_{i+1}))$ Lagrange Multipliers imply that $\va^*-\vb^* \perp \vv$, namely is orthogonal to the ray. %This in particular means that the triangle $\Delta(\vx(t_i),\vb^*,\vx(t_{i+1}))$ is acute. 
In this case we have two options to consider: First,  $\vb^*\in \bar{B}_i\cap \bar{B}_{i+1}$, where $\bar{B}_i$ is the closure of the ball $B_i$. In this case Heron's formula provides the minimal distance $h_i$, as the height of the triangle $\Delta(\vx(t_i),\vb^*,\vx(t_{i+1}))$.

Otherwise $\vb^*\in \bar{B}_i$ or $\vb^* \in \bar{B}_{i+1}$. Lets assume the former (the latter is treated similarly). Lagrange multipliers imply that $\vb^* - \va^* \| \vb^* - \vx(t_i) $. This necessarily means that $\va^*=\vx(t_i)$, leading to a contradiction with the assumption that $\va^*\in(\vx(t_i),\vx(t_{i+1}))$. 

To check if $\va^*\in(\vx(t_i),\vx(t_{i+1}))$ and $\vb^*\in \bar{B}_i\cap \bar{B}_{i+1}$ it is enough to check that the angles at $\vx(t_i)$ and $\vx(t_{i+1})$ in the triangle $\Delta(\vx(t_i),\vb^*,\vx(t_{i+1}))$ are acute, or equivalently that $\abs{|d_i|^2 - |d_{i+1}|^2} < \delta^2_i$ (the latter can be shown with the Law of Cosines).

\end{proof}

\renewcommand{\thelemma}{A\arabic{lemma}}
\setcounter{lemma}{0}

\begin{lemma}\label{lem:lipschitz_d}
The distance function $d=d_\Omega$ to a compact surface $\gM$ is Lipschitz with constant $1$, \ie, $\abs{d(\vx)-d(\vy)}\leq \norm{\vx-\vy}$, for all $\vx,\vy$. 
\end{lemma}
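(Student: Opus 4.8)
The plan is to reduce to the classical fact about the \emph{unsigned} distance and then deal with the sign. Write $\rho(\vx)=\min_{\vy\in\gM}\norm{\vx-\vy}$ for the unsigned distance to $\gM$, so that by definition $d_\Omega=(-1)^{\one_\Omega}\rho$, i.e. $d_\Omega=-\rho$ on $\Omega$ and $d_\Omega=+\rho$ on $\Real^3\setminus\Omega$. First I would prove $\rho$ is $1$-Lipschitz by the standard triangle-inequality argument: fix $\vx,\vy$, let $\vz\in\gM$ attain the minimum defining $\rho(\vy)$ (compactness of $\gM$ guarantees this), then $\rho(\vx)\le\norm{\vx-\vz}\le\norm{\vx-\vy}+\norm{\vy-\vz}=\norm{\vx-\vy}+\rho(\vy)$; swapping the roles of $\vx$ and $\vy$ gives $\abs{\rho(\vx)-\rho(\vy)}\le\norm{\vx-\vy}$.

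Next I would split into cases according to the signs at $\vx$ and $\vy$. If both points lie on the same side — both in $\Omega$, or both in $\Real^3\setminus\Omega$ — then $d_\Omega$ equals $\rho$ at the two points up to one common sign, so $\abs{d_\Omega(\vx)-d_\Omega(\vy)}=\abs{\rho(\vx)-\rho(\vy)}\le\norm{\vx-\vy}$ and we are done. If either point lies on $\gM$, say $\vy\in\gM$, then $d_\Omega(\vy)=0=\rho(\vy)$ and $\abs{d_\Omega(\vx)-d_\Omega(\vy)}=\rho(\vx)\le\norm{\vx-\vy}$ trivially.

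The remaining case is $\vx\in\Omega$ and $\vy\notin\Omega$ (the reverse is symmetric). Here the key step is to locate a point of $\gM$ on the segment $[\vx,\vy]$: parametrize the segment by $\gamma(t)=(1-t)\vx+t\vy$, $t\in[0,1]$, and observe that if $\gamma([0,1])$ avoided $\partial\Omega=\gM$, it would be covered by the two disjoint open sets $\mathrm{int}\,\Omega$ and $\Real^3\setminus\overline{\Omega}$, each with nonempty preimage in $[0,1]$ (since $\gamma(0)\in\Omega$ and $\gamma(1)\notin\Omega$), contradicting connectedness of $[0,1]$. Hence there is $\vz=\gamma(t^\star)\in\gM$, so $\rho(\vz)=0$. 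Since $\vx\in\Omega$ gives $d_\Omega(\vx)=-\rho(\vx)\le 0$ and $\vy\notin\Omega$ gives $d_\Omega(\vy)=+\rho(\vy)\ge 0$, we conclude $\abs{d_\Omega(\vx)-d_\Omega(\vy)}=\rho(\vx)+\rho(\vy)\le\norm{\vx-\vz}+\norm{\vz-\vy}=\norm{\vx-\vy}$, the last equality holding because $\vz$ lies between $\vx$ and $\vy$ on the segment.

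The only mild subtlety — the ``hard part,'' such as it is — is the opposite-sides case, where one must invoke a small topological argument (connectedness of the segment, equivalently the intermediate value theorem) to produce the crossing point $\vz\in\gM$ and then exploit that $\vz$ lies between $\vx$ and $\vy$ so that the two unsigned-distance bounds add up \emph{exactly} to $\norm{\vx-\vy}$. Everything else is the routine triangle inequality.
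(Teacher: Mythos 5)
Your proof is correct and follows essentially the same route as the paper's: a case split on the signs of $d_\Omega$ at the two points, the triangle inequality through the nearest surface points for the same-sign case, and a crossing point $\vz\in\gM$ on the segment $[\vx,\vy]$ for the opposite-sign case. If anything, your connectedness argument for producing $\vz$ is more carefully justified than the paper's brief appeal to the ``mean value theorem'' (really the intermediate value theorem) at that step.
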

\begin{proof}[Proof of Lemma \ref{lem:lipschitz_d}.]
Let $\vx^*\in\gM$ be the closest point to $\vx$, and $\vy^*\in\gM$ the closets point to $\vy$. If $d(\vx)d(\vy)<0$, then $\abs{d(\vx)-d(\vy)} = \norm{\vx-\vx^*}+\norm{\vy-\vy^*} \leq \norm{\vy-\vz} + \norm{\vz - \vx}$, for all $\vz\in\gM$. However, from mean value theorem we know there exists $\vz\in\gM$ on the straight line $[\vx,\vy]$, and therefore $\abs{d(\vx)-d(\vy)} \leq \norm{\vx-\vy}$. If $d(\vx)d(\vy)\geq 0$, then $\abs{d(\vy)-d(\vx)} = \abs{|d(\vy)|-|d(\vx)|}$. Now, 
\begin{align*}
 |d(\vy)|&= \norm{\vy-\vy^*}\leq \norm{\vy-\vx^*} \\
& \leq \norm{\vy-\vx} + \norm{\vx-\vx^*} \\
&= \norm{\vx-\vy} + |d(\vx)|.
\end{align*}
Therefore $|d(\vy)|-|d(\vx)| \leq \norm{\vx-\vy}$. The other direction follows by switching the roles of $\vx,\vy$. We proved $\abs{d(\vx)-d(\vy)}\leq \norm{\vx-\vy}$.
\end{proof}

\renewcommand{\thelemma}{\arabic{lemma}}
\setcounter{lemma}{0}

\begin{proof}[Derivation of \eqref{e:E_bound}.]
For a single interval $[t_k,t_{k+1}]$,
\begin{align*}
  \abs{\int_{t_i}^{t_{i+1}}\sigma(\vx(s))ds - \delta_i\sigma_i} &\leq \int_{t_i}^{t_{i+1}} \abs{ \sigma(\vx(s)) - \sigma(\vx(t_i)) } ds \leq %\max_{s\in[t_i,t_{i+1}]} \abs{\frac{d}{ds}\sigma(\vx(s))}
  K_i
  \frac{\delta_i^2}{2}
\end{align*}
Plugging the Lipschitz bound from Theorem \ref{thm:density_der} (\eqref{e:dsigma_bound}) provides that the error in \eqref{e:rectangle_rule} for $t\in[t_k,t_{k+1}]$ can be bounded by 
\begin{equation}\label{e:E_bound_supp}
\abs{E(t)}\leq \widehat{E}(t) = \frac{\alpha}{4\beta}\parr{\sum_{i=1}^{k-1} \delta_i^2 e^{-\frac{d^\star_i}{\beta}} + (t-t_k)^2 e^{-\frac{d^\star_k}{\beta}}}.
\end{equation}
\end{proof}

\begin{theorem}\label{thm:bound}
For $t\in[0,M]$, the error of the approximated opacity $\hat{O}$ can be bounded as follows:
\begin{equation}\label{e:bound_O}
\abs{O(t)-\widehat{O}(t)} \leq \exp\parr{-\widehat{R}(t)}\parr{\exp\parr{\widehat{E}(t)}-1} 
\end{equation}
\end{theorem}

\begin{proof}[Proof of Theorem \ref{thm:bound}.]
This bound is derived as follows:
\begin{align*}
    \abs{O(t)-\widehat{O}(t)} & = \abs{\exp\parr{-\widehat{R}(t)} - \exp\parr{-\int_0^t \sigma(\vx(s))ds}} = \exp\parr{-\widehat{R}(t)}\abs{1-\exp\parr{-E(t)}} \\
    &\leq \exp\parr{-\widehat{R}(t)} \parr{ \exp(\widehat{E}(t)) -1 },
\end{align*}
where the last inequality is due to the inequality $\abs{1-\exp(r)}\leq \exp(|r|)-1$ and the bound $|E(t)|\leq \widehat{E}(t)$ from \eqref{e:E_bound}. 
\end{proof}

\begin{lemma}\label{lem:dense}
Fix $\beta>0$. For any $\eps>0$ a sufficient dense sampling $\gT$ will provide $B_{\gT,\beta}<\eps$.
\end{lemma}

\begin{proof}[Proof of Lemma \ref{lem:dense}.] 
% \ly{don't we need to fix or assume something regarding $\alpha$ as well?}
Noting that $\exp(-\widehat{R}(t))\leq 1$, \eqref{e:bound} leads to 
\begin{align*}
  B_{\gT,\beta}\leq \parr{\exp\parr{\widehat{E}(t_n)}-1} &= \parr{\exp\parr{\frac{\alpha}{4\beta}\sum_{i=1}^{n-1} \delta_i^2 e^{-\frac{d^\star_i}{\beta}}} -1} \\
  & \leq \parr{\exp\parr{\frac{\alpha}{4\beta}\sum_{i=1}^{n-1} \delta_i^2 } -1}.
\end{align*}
Lastly we note that the inner sum satisfies $\sum_i \delta_i^2 \leq M \max_i \delta_i$, and in turn this implies that dense sampling can achieve arbitrary low error bound.
\end{proof}

\begin{lemma}\label{lem:beta_plus}
Fix $n>0$. For any $\eps>0$ a sufficiently large $\beta$ that satisfies
\begin{equation}
        \label{e:beta_0}
  \beta \geq  \frac{\alpha M^2}{4(n-1)\log(1+\eps)} 
    \end{equation}   
    will provide $B_{\gT,\beta}\leq \eps$.
\end{lemma}

\begin{proof}[Proof of Lemma \ref{lem:beta_plus}.]
Assuming sufficiently large $\beta$ that satisfies \eqref{e:beta_0}, then following the derivation of Lemma \ref{lem:dense} and assuming equidistance sampling, \ie $\delta_i = \frac{M}{n-1}$, leads to
\begin{align*}
 B_{\gT,\beta} &\leq \parr{\exp\parr{\frac{\alpha}{4\beta}\sum_{i=1}^{n-1} \delta_i^2 } -1} = \parr{\exp\parr{\frac{\alpha M^2}{4(n-1)\beta}} -1} \\ & \leq \parr{\exp\Big(\log(1+\eps)\Big) -1}  = \eps
\end{align*}
% \begin{equation}
%   \beta \geq  \frac{\alpha M^2}{4(n-1)\log(1+\eps)} 
% \end{equation}
    
\end{proof}

\section{Numerical integration}
Standard numerical quadrature in volume rendering is based on the rectangle rule \citep{max1995optical}, which we repeat here for completeness. Quantities with $\hat{\ }$ denote approximated quantities. 
For a set of discrete samples $\gS=\set{s_i}_{i=1}^m$, $0=s_1<s_2<\ldots<s_m=M$ we let $\delta_i=s_{i+1}-s_{i}$, $\sigma_i=\sigma(\vx(s_i))$, and $L_i=L(\vx(s_i),\vn(s_{i}),\vv)$. 
Applying the rectangle rule (left Riemann sum) to approximate the integral in \eqref{e:vol} we have 
\begin{equation}\label{e:rect_1}
\begin{aligned}
I(\vc,\vv) &= \int_0^\infty  L(\vx(t),\vn(t),\vv) \tau(t) dt \\ &= %\sum_{i=1}^{m-1}\parr{\int_{s_i}^{s_{i+1}} L(\vx(t),\vn(t),\vv) \tau(t) dt} + \int_M^\infty L(\vx(t),\vn(t),\vv) \tau(t) dt \\
\int_{0}^{M} L(\vx(t),\vn(t),\vv) \tau(t) dt + \int_M^\infty L(\vx(t),\vn(t),\vv) \tau(t) dt \\ 
& \approx
\sum_{i=1}^{m-1} \delta_i \tau(s_i) L_i,
\end{aligned}
\end{equation}
where we assumed that $M$ is sufficiently large so that the integral over the segement $[M,\infty)$ is negligible, and the integral over $[0,M]$ is approximated with the rectangle rule, and $\tau(s_i) = \sigma_i T(s_i)$ as given in \eqref{e:tau}. The rectangle rule is applied yet again to approximate the transparency $T$:
\begin{align*}
T(s_i)\approx \hat{T}(s_i)=\exp \parr{-\sum_{j=1}^{i-1} \sigma_j\delta_j}.
\end{align*}
To provide a discrete probability distribution $\hat{\tau}$ that parallels the continuous one $\tau$, \cite{max1995optical} defines $p_i=\exp(-\sigma_i\delta_i)$ that can be interpreted as the probability that light passes through the segment $[s_i,s_{i+1}]$. Then, using $\hat{T}(s_i)=\prod_{j=1}^{i-1}p_i$, and the approximation $\delta_i \sigma_i \approx (1-\exp(-\delta_i \sigma_i))=1-p_i$, the approximation in \eqref{e:rect_1} takes the form 
\begin{align*}
I(\vc,\vv) \approx \hat{I}(\vc,\vv) = \sum_{i=1}^{m-1}\brac{ (1-p_i) \prod_{j=1}^{i-1}p_j  } L_i= \sum_{i=1}^{m-1} \hat{\tau}_i L_i 
\end{align*}
where the discrete probability is given by
\begin{align*}
 \hat{\tau}_i =\parr{1-p_i}\prod_{j=1}^{i-1}p_j \quad \text{for }\ 1\leq i\leq m-1, \ \text{ and  }\  \hat{\tau}_m=\prod_{j=1}^{m-1}p_j
\end{align*}
establishing that $\hat{\tau}_i$, $i=1,\ldots,m$ is indeed a discrete probability. Note that although $\hat{\tau}_m$ is not used in $\hat{I}$ above, we added it for implementation ease. Since we use a bounding sphere (see \eqref{e:g_background}) and $M$ is chosen so that $s_m$ is always outside this sphere, $\hat{\tau}_m\approx 0$.

\end{document}